\newcommand{\todooptionnal}[1]{}
\newcommand{\tochangeinmain}[1]{\textcolor{black}{#1}}
\newcommand{\norm}[1]{\left \lVert #1 \right \rVert}
\newcommand{\open}[1]{\left ( #1 \right )}
\newcommand{\closed}[1]{\left [#1 \right]}
\newcommand{\ens}[1]{\left \{ #1 \right \} }
\newcommand\inner[2]{\left \langle #1, #2 \right \rangle}
\newcommand{\ce}[3][]{\mathbb{E}_{#1}\left[ #2 \,\middle|\, #3 \right]}
 \newcommand{\Smem}{\mathsf{S}}
\def\xcal{\mathcal{X}}
\def\pcal{\mathcal{P}}
\def\qcal{\mathcal{Q}}
\def\proba{\mathsf{p}}
\def\set{\mathcal A}
\def\Q{\mathrm{Quant}}
\def\Qtilde{\widetilde{\mathrm{Quant}}} 
\def\rset{\mathbb{R}}
\def\nset{\mathbb{N}} 
\def\calH{\mathcal{H}}
\def\eqdef{:=} 
\def\map{\mathsf{T}}
\def\mapM{\mathsf{M}}
\def\PE{\mathbb{E}}
\def\pas{\gamma} 
\def\param{\theta}
\def\paramset{\Theta} 
\def\L{\mathsf{L}}
\def\loss{\ell} 
\def\tildeloss{\widetilde{\ell}}
\def\sample{Z} 
\def\barS{\bar{S}} 
\def\surspace{\mathcal{S}}
\def\hatS{\hat{S}} 
\def\calS{\mathcal{S}} 
\def\rmd{\mathsf{d}} 
 \def\Prox{\mathrm{prox}}
\def\lawPP{\mathsf{P}}  \def\kmax{T_{\max}}
\def\lbatch{\mathsf{b}} \def\batch{\mathcal{B}}
\def\inter{\mathrm{int}} \def\mf{\mathsf{h}}
\def\argmin{\mathrm{argmin}}
\def\argmax{\mathrm{argmax}}
\def\F{\mathcal{F}}
\def\MM{{\tt MM}}
\def\SSMM{{\tt SA-SSMM}}
\def\QSMM{{\tt FedMM}}
\def\fedot{{\tt FedMM-OT}}
\DeclareMathOperator{\expec}{\mathbb{E}}
\newcommand{\CPE}[3][]
{\ifthenelse{\equal{#1}{}}{{\mathbb E}\left[\left. #2 \, \right| #3 \right]}{{\mathbb E}_{#1}\left[\left. #2 \, \right | #3 \right]}}
\newcommand{\pscal}[2]{\left\langle#1,#2\right\rangle}
\newcommand{\ocint}[1]{\left(#1\right]}
\newcommand{\ooint}[1]{\left(#1\right)}
\newcommand{\ccint}[1]{\left[#1\right]}
\def\Id{\mathrm{I}}
\def\1{\mathsf{1}}
\def\error{\mathcal{E}}
\def\barerror{\mathcal{\bar{E}}}
\def\MM12{{\bf {\sf MM1\&2}}}
\newcommand{\ncal}{\mathcal{N}}
\newcommand{\tcal}{\mathcal{T}}
\newcommand{\rcal}{\mathcal{R}}
\newcommand{\old}[1]{\textbf{\textcolor{lightgray}{}}}
\newtheorem{example}{Example} 
\newtheorem{theorem}{Theorem}
\newtheorem{lemma}{Lemma} 
\newtheorem{proposition}{Proposition} 
\newtheorem{remark}{Remark}
\newtheorem{corollary}{Corollary}
\Crefname{assumption}{A\hspace{-3pt}}{A\hspace{-3pt}}
\crefname{assumption}{A}{A}
\Crefname{algocf}{Algorithm}{Algorithms}
\crefname{algocf}{alg.}{algs.}
\title{Federated Majorize-Minimization: Beyond Parameter Aggregation}
\author{\name Aymeric Dieuleveut \email aymeric.dieuleveut@polytechnique.edu \\
\addr CMAP, École polytechnique, \\
Insitut Polytechnique de Paris \\
 \AND
\name Gersende Fort \email gersende.fort@laas.fr \\
\addr LAAS-CNRS, Université de Toulouse, CNRS,\\
  Toulouse, France \\
  \AND 
\name Mahmoud Hegazy \email mahmoud.hegazy@polytechnique.edu \\
\addr CMAP, École polytechnique, \\
Insitut Polytechnique de Paris \\
  \AND 
\name Hoi-To Wai \email htwai@se.cuhk.edu.hk \\
\addr Department of SEEM, The Chinese University of Hong Kong \\
Shatin, Hong Kong
}
\newtheorem{assumption}{\textbf{A}\!\!}
\def\B{\mathsf{B}}
\begin{document}

\maketitle

\begin{abstract}
This paper proposes a unified approach for designing stochastic optimization algorithms that robustly scale to the federated learning setting. Our work  studies a class of Majorize-Minimization (MM)  problems, which possesses a linearly parameterized family of majorizing surrogate functions. This framework encompasses (proximal) gradient-based algorithms for (regularized) smooth objectives, the Expectation Maximization algorithm, and many problems seen as variational surrogate MM. We show that our framework motivates a unifying algorithm called Stochastic Approximation Stochastic Surrogate MM (\SSMM), which includes previous stochastic MM procedures as special instances. We then extend \SSMM\ to the federated setting, while taking into consideration common bottlenecks such as data heterogeneity, partial participation, and communication constraints; this yields \QSMM. The originality of \QSMM\ is to learn locally and then aggregate information characterizing the \textit{surrogate majorizing function}, contrary to classical algorithms which learn and aggregate the \textit{original parameter}.  Finally, to showcase the flexibility of this methodology beyond our theoretical setting, we use it to design an algorithm for computing optimal transport maps in the federated setting. 
\end{abstract}

\section{Introduction}

Consider the following optimization problem
\begin{equation} \label{eq:opt}
\argmin_{\param \in \rset^d} ~W(\param) \eqdef \left( f(\param) + g(\param) \right),
\qquad \text{with} \quad f(\param) \eqdef \PE_{Z\sim \pi} \left[ \ell(\sample,
  \param) \right];
\end{equation}
where $g: \rset^d \to \ocint{- \infty, + \infty}$ is a proper lower
semi-continuous convex function with domain $\paramset \subseteq
\rset^d$, $\ell: \rset^p \times \paramset\rightarrow\rset $ is a
measurable function such that $\PE_\pi\left[ |\ell(\sample, \param)
  |\right] < + \infty$ for any $\param \in \paramset$ and $\pi$ denotes
a probability measure on $\rset^p$. This setting covers both the {\it batch learning} when $\pi$ is the empirical distribution of a (possibly large) data set $\{\sample_1, \cdots, \sample_N\}$ and we solve 
$
\argmin_{\param \in \rset^d} (N^{-1} \sum_{j=1}^N
\ell(\sample_j, \param) + g(\param) )
$
and the {\it online learning} when we have to process an i.i.d.~stream of data
$\{\sample_1, \sample_2, \ldots \}$ with the  distribution $\pi$. 

We are interested in the setting where \textit{majorizing functions} can be computed on $f$, which can then be minimized. This is a framework referred to as Majorize-Minimization (MM)~\citep{lange:2013}, or surrogate optimization~\citep{lange2000optimization}. 
More concretely, starting from the parameter $\param_t$, a standard MM algorithm constructs a surrogate $U_t:\Theta\rightarrow \rset$ such that $f(\param_t) = U_t(\param_t)$ and $f(\param)\leq U_t(\param)$ for any $\param \in \Theta$. Then, a parameter is constructed such that $\param_{t+1}\in \argmin \;U_t(\theta) + g(\theta)$. 
Many methods can be interpreted from this point of view, e.g., gradient-based and proximal methods~\citep{beck2009fast,wright2009sparse}, Expectation-Maximization (EM)~\citep{dempster:etal:1977}, boosting~\citep{collins2002logistic,della2001duality}, and  matrix factorization (MF) algorithms such as non-negative MF~\citep{paatero1994positive,lee1999learning} or clustering~\citep{jain1999data}.

In a series of papers that inspired our work, \citet{mairal:2013, mairal:2015} investigated the use of MM methods in large-scale machine learning. The author proposed to incrementally build the sequence of surrogates by leveraging $U_1,\ldots,U_t$ in constructing $U_{t+1}$.  We build on this approach and demonstrate that in many examples, \textit{the space of the surrogate functions can be parameterized by a finite-dimensional space} $\calS$; i.e., each $s \in \calS$ represents the mirror parameter characterizing the surrogate function $U( \cdot , s )$.  We then perform (stochastic) incremental updates in the space of surrogate functions instead of  the space of parameters $\param$. 

This paper is particularly drawn to the Federated Learning (FL) setting \citep{kairouz_advances_2019}, where data is shared between several agents and a central server aggregates the contributions of each agent. More concretely, we consider the extension of \eqref{eq:opt} to the federated setting, where $n$ different \textit{agents} collaborate to solve 
\begin{equation}\label{eq:intro:opt-fed}\textstyle
\argmin_{\param \in \rset^d}   \sum_{i =1}^n \mu_i f_i(\param)  + g(\param) ,
\end{equation}
where $f_i(\theta) \eqdef \expec_{\pi_i}\closed{\ell\open{Z,\theta}}$ and $\mu_i > 0$ is a weighting parameter, $\sum_{i=1}^n \mu_i= 1$. Here, for $i\in \{1, \cdots, n\}$, we denote by $\pi_i$ the probability distribution of the data of agent $\# i$. In addition, for the $i$-th agent, we consider the surrogate $U:\Theta\times \calS \rightarrow \rset$ satisfying:  for all $\param$, there exists $s_\theta^i \in \calS$ such that $f_i(\theta) = U(\theta,s_\theta^i)$ and $f_i(\cdot) \leq U\open{\cdot, s_\theta^i }$ for all $\theta \in \Theta$.

Under this setting, our main motivation is that for many problems, the surrogate function $U(\theta, s)$ is linear in the mirror parameter $s$. Thus, a direct aggregation in the surrogate space corresponds to aggregation in the functional space of upper bounds. This observation enables us to solve \eqref{eq:intro:opt-fed} via aggregating the mirror parameters in a distributed manner, resulting in a MM-like algorithm for the FL setting. 
On the other hand, a naive FL extension of a centralized optimization algorithm through aggregating the clients' parameters $\theta$ (e.g., FedAvg \citep{mcmahan_communication-efficient_2017}) may not result in a stable algorithm. 
From a methodological standpoint, our key message, which we further detail in~\Cref{sec:SspaceORThetaspace}, is: 
\begin{center}
    \textit{In numerous FL problems, the natural way to aggregate agents' contributions is by first aggregating on the surrogate space to construct a single surrogate function and then to minimize/majorize the common surrogate on the server}
\end{center}

In FL, the communication between agents and the central server has been identified as a crucial bottleneck, that has been widely tackled in the gradient-based literature~\citep[e.g.][]{mcmahan2016communication,alistarh_qsgd_2017}. Moreover, the heterogeneity between the multiple agents' distributions often hinders convergence~\citep{li_convergence_2019,sattler_robust_2019}. We leverage the stochastic surrogate MM approach in FL to propose a communication-efficient method. Following the works of \citep{mishchenko_distributed_2019}, we incorporate control variates to handle heterogeneity. This enables us to propose a generic FL-MM algorithm. In particular cases, these algorithms can be seen as modifications or extensions encompassing multiple existing federated algorithms such as  Federated EM~\citep{dieuleveut:etal:FedEM:2021,marfoq2021federated} and  Federated MF~\citep{wang:chang:2022}.

\paragraph{Outline} For the rest of this paper, \Cref{sec:linearly_param} will start by considering the centralized case, where we will introduce an MM framework that unifies many common settings. Using this framework, we show that multiple celebrated algorithms can be seen through the lens of surrogate optimization. We provide \SSMM{}, a Stochastic Surrogate MM method relying on Stochastic Approximation \citep{robbins_stochastic_1951}, for the centralized case. Then, we give the natural extension, named \QSMM, of the algorithm to the federated setting in \Cref{sec:FLMM}. Further, \Cref{sec:related_work} discusses the related works in the Federated Learning literature. Then, in \Cref{sec:theory}, we provide our theoretical results. To evaluate our method, we consider a federated dictionary learning problem in \Cref{sec:dict_learn}. Finally, \Cref{sec:ot} shows how our approach can be regarded as a more general methodology for the design of federated algorithm. To this end, we consider a federated optimal transport problem and provide an algorithm to tackle it.   

\paragraph{Notations} For two vectors $a,b \in \rset^q$, $\pscal{a}{b}$
denotes the Euclidean scalar product; we set $\|a\| \eqdef\sqrt{\pscal{a}{a}}$.  All vectors are column-vectors; for a matrix $A$, $A^\top$ is its transpose. For two $p \times q$ matrices $A$ and $B$, $\pscal{A}{B} \eqdef \mathrm{Trace}\left(A^\top B\right)$; the Frobenius norm of $A$ is also denoted by $\|A\|$. $\mathcal{M}_K^+$ is the set of the  $K \times K$ positive semi-definite  matrices; $A \succ 0$ denotes that $A$ is positive-definite.

The subdifferential of a proper function $f: \rset^d \to \ocint{-\infty, +\infty}$  is denoted by $\partial f$. The Jacobian matrix of a differentiable vector-valued function $f: \rset^d \to \rset^q$ is a $q \times d$ matrix,  denoted by
$J_f$; when $q=1$, we denote by $\nabla f(\cdot)$ the $d \times 1$
gradient vector. For $n\in \mathbb N$, set  $[n]:=\{1, \dots, n\}$.    $\inter \paramset$ is the interior of a set $\paramset$.

\section{Linearly Parameterized Surrogate functions}\label{sec:linearly_param}
We first consider an MM framework where the surrogate functions are linearly parameterized by focusing on problems satisfying \textbf{\sf MM-1} and \textbf{\sf MM-2}.  
\begin{description}
  \item[\textbf{\sf MM-1}:] there exist a convex subset $\calS$ of $\rset^q$ and
    measurable functions $\barS: \rset^p \times \paramset \to \calS$,
    $\psi: \paramset \to \rset$ and $\phi: \paramset \to \rset^q$ such
    that for any $\tau \in \paramset$, $\PE_\pi
  \left[ \| \barS(\sample, \tau) \| \right]  < \infty $ and 
\begin{equation} \label{eq:defMM:surrogate}
f(\cdot) \leq f(\tau) + \psi(\cdot) - \psi(\tau) - \pscal{\PE_\pi
  \left[ \barS(\sample, \tau) \right]}{\phi(\cdot) - \phi(\tau)} \qquad \text{on $\paramset$}.
\end{equation}
\item[\textbf{\sf MM-2}:] the application $\map: \calS \to \paramset$ defined by 
  $
\map(s) \eqdef \argmin_{\param \in \rset^d} \left( g(\param) +
\psi(\param) - \pscal{s}{\phi(\param)} \right),
  $
  is well-defined (i.e., there is a unique minimizer for all $s \in  \calS$) and measurable.
\end{description}
In words, the surrogate function given by the right-hand side in \eqref{eq:defMM:surrogate} admits a \textit{parametric form}. 

This {\bf {\sf MM}} framework encompasses many examples of optimization problems. Hereafter, we give three examples, which are  detailed in \Cref{app:exampleMM1-2}.

\begin{example}[Quadratic surrogate] \label{ex:gdt}  Let $f: \paramset \to \rset$ be continuously differentiable on a neighborhood of
$\paramset$ and  its gradient be $L_f$-Lipschitz. Assume that there exists a
measurable  function $G$ such that $\nabla f(\cdot) =
\PE_\pi\left[G(\sample, \cdot) \right]$ on $ \paramset$. Then, for any  $\rho \in \ocint{0,\tochangeinmain{1/L_f}}$,   {\bf {\sf MM-1}} and {\bf {\sf MM-2}} are satisfied with
\begin{equation}
 \psi(\param) \eqdef \frac{1}{2 \rho } \|\param\|^2, \qquad \phi(\param)
\eqdef \frac{1}{ \rho } \param, \qquad \barS(\sample, \tau) \eqdef \tau
- \rho \, G(\sample, \tau).   
\end{equation}
 $\map$ computes 
$\map(s)  = \Prox_{\rho \, g}(s)$ where $\Prox_{\rho \, g}(s) \eqdef \mathrm{argmin}_{\param \in \rset^d}  \left( \rho g(\param) +
\nicefrac{1}{2}   \|\param - s \|^2 \right)$  is the Moreau-proximity operator of the function $\rho \, g(\cdot)$ at $s$ \citep{Moreau:1962}.  
\end{example}

\begin{example}[Jensen surrogate] \label{ex:EM}
Assume that $\loss(\sample,\param)$ is a function of the form
\begin{equation}
    \loss(\sample,\param) \eqdef - \log \int_\calH p(\sample,h,\param) \mu(\rmd h), 
\end{equation}
where $(\sample,h,\param) \mapsto
p(\sample, h, \param)$ is a measurable positive function, and $\mu$ is a $\sigma$-finite positive measure on the measurable set $\calH$. Assume also that $- \log p(\sample, h, \param)$ can be written in the form $\psi(\param) - 
\pscal{S(\sample,h)}{\phi(\param)}$ up to an additive term independent of $\param$. Then {under integrability conditions}, {\bf {\sf MM}-1} holds with the functions $\psi, \phi$ given by $- \log p$ and with 
\begin{equation} \label{ex:EM:barS}
     \barS(\sample, \tau) \eqdef \int_{\mathcal{H}} S(\sample, h) \mu(\rmd h \vert \sample, \tau), 
\end{equation}
where for any $(\sample, \tau)$, $\mu(\cdot \vert \sample, \tau) $ is a probability distribution on $\calH$ defined by
\[
\mu(\rmd h \vert \sample, \tau) \eqdef  \frac{\exp(\pscal{S(\sample,h)}{\phi(\tau)})}{\int_{\mathcal{H}} \exp(\pscal{S(\sample,u)}{\phi(\tau)})\, \mu(\rmd u) } \, \mu(\rmd h).  
\]

An instance of such a loss function $\ell(\sample, \param)$ is the negative log-likelihood of the observation $\sample$  in a latent variable model when the complete likelihood $\param \mapsto p(\sample,h,\param)$ is from the vector exponential family (see e.g. \citet{brown:1986}). $S(\sample,h)$ is the sufficient statistics, and $\exp(\psi)$ acts as a normalizing constant and is fully determined by $S(\sample,h)$ and $\phi$. The distribution $\mu(\rmd h \vert \sample, \tau)$ is the a posteriori distribution of the latent variable $h$ given the observation $\sample$, when the model parameter is $\tau$.
\end{example}

\begin{example}[Variational surrogate] \label{ex:SurrogateCvx} 
Assume that $\loss(\sample, \param) \eqdef \mathrm{min}_{h \in \calH} \tildeloss(\sample, h, \param)$ with 
\[
\qquad \tildeloss(\sample, h, \param)
\eqdef \psi(\param) - \pscal{S(\sample,h)}{\phi(\param)} + \xi(\sample,h);
\]
and the optimization problem $(\sample, \param) \mapsto \mapM(\sample,
\param) \eqdef \mathrm{argmin}_{h \in \calH} \tildeloss(\sample,
\param, h)$ possesses a unique solution. Then under integrability conditions, {\bf {\sf MM}-1} is
satisfied with $\barS(\sample, \tau) \eqdef S(\sample, \mapM(\sample,
\tau))$.\end{example}

\old{In DL, an example of loss function is $\tilde \ell(\sample, \param, h) \eqdef \| \sample - \param h\|^2$ where $\sample \in \rset^p$, $h \in \rset^K$ and $\param \in \rset^{p \times K}$ is the dictionary; in this case, $S(\sample, h)$ collects the two matrices $h h^\top$ and $\sample h^\top$, and $\phi(\param)$ collects $-\param \param^\top$ and $2 \param$.  The map $\mapM(\sample, \param)$ computes the best code $h$ associated to the example $\sample$ when the current dictionary is $\param$.}     

\subsection{MM as a root-finding algorithm}

\bigskip 

The first consequence of {\bf {\sf MM1\&2}} is that for any $\tau \in \paramset$, there exists a
majorizing function of the objective function which equals
$(f+g)$ at $\tau$:
\[
(f+g)(\cdot) \leq f(\tau) + \psi(\cdot)- \psi(\tau) -
\pscal{\PE_\pi\left[ \barS(\sample, \tau) \right]}{ \phi(\cdot) -
  \phi(\tau)} + g(\cdot),~ \quad \text{on} \, \paramset.
\]
Based on this property, the MM algorithm defines a $\paramset$-valued sequence by repeating two steps: given the current iterate $\param_t \in \paramset$, \textit{(i)} identify a surrogate function by computing the {\it mirror} parameter $\PE_\pi\left[ \barS(\sample, \param_t) \right]$; \textit{(ii)} define the next iterate as the minimizer of the surrogate function
\begin{equation}\label{eq:MM:Tspace}
\param_{t+1}  = \map \left( \PE_\pi\left[ \barS(\sample, \param_t) \right]\right). 
\end{equation}
This sequence satisfies $(f+g)(\param_{t+1}) \leq (f+g)(\param_t)$ \cite[Chapter 12]{lange:2013}.

Equivalently, the MM algorithm defines a $\calS$-valued sequence  which characterizes a sequence of surrogate functions  in the functional space spanned by $g+\psi$ and $\phi_1, \cdots, \phi_q$: 
given $s_t \in \calS$ of the form $s_t = \PE_\pi\left[ \barS(\sample, \tau)\right]$ for some $\tau \in \paramset$, \textit{(i)} identify the {\it mirror parameter} $\map(s_t)$  defined as the minimizer  of the surrogate function identified by $s_t$; \textit{(ii)} define  the next surrogate function by computing its parameter
\begin{equation}\label{eq:MM:Sspace}
s_{t+1} = \PE_\pi \left[ \barS(\sample, \map(s_t))\right].
\end{equation}
This sequence satisfies $(f+g)(\map(s_{t+1})) \leq (f+g)(\map(s_{t}))$.

The limiting points of the two MM methods are equivalent. Define the {\it mean field function}  \begin{equation}
  \label{eq:def:meanfield}
\mf: \calS \to \rset^q \qquad  \mf(s) \eqdef \PE_\pi \left[
  \barS(\sample,\map(s)) \right] - s;
\end{equation}
note that a root of $\mf$ is a fixed point of the iterative scheme \eqref{eq:MM:Sspace}. \Cref{prop:equiv:theta:s}  states that  a root of $\mf$ can be used along the mapping $\map$ to find a fixed point of the iterative scheme \eqref{eq:MM:Tspace}. Conversely, a fixed point of the iterative scheme \eqref{eq:MM:Tspace} can be used along the mapping $\PE_\pi \left[ \barS(\sample,\cdot)\right]$ to find a root of $\mf$. This is the main message of \Cref{prop:equiv:theta:s}, which is established
under the following assumptions.
\begin{assumption} \label{hyp:surrogate:convexC1} For any $s \in \calS$,  $\param \mapsto \psi(\param) - \pscal{s}{\phi(\param)}$
      is convex. In addition, $\inter \paramset \neq \emptyset$ and $\psi$ and $\phi$ are $\mathcal C^1$ on  $\inter \paramset$.          \end{assumption}
\begin{assumption} \label{hyp:objective:C1} The function $f$ is $\mathcal C^1$  on $\inter \paramset$. 
\end{assumption}
\begin{proposition}  \label{prop:equiv:theta:s}
    Assume  {\bf {\sf MM-1}}, {\bf {\sf MM-2}},  A\ref{hyp:surrogate:convexC1} and A\ref{hyp:objective:C1}. For any $\tau \in \inter \paramset$,
\begin{equation} \label{eq:prop:equiv:theta:s:state2}
0 \in \nabla f(\tau) + \partial g(\tau) \Longleftrightarrow \map\left( \PE_\pi\left[ \barS(\sample,\tau) \right] \right) - \tau = 0.
\end{equation}
It holds that for any $s \in \calS$ such that $\map(s) \in \inter \paramset$, 
\begin{equation} \label{eq:prop:equiv:theta:s:state1} - J_\phi(\map(s)) \,  \mf(s) \in \nabla f(\map(s))+ \partial g(\map(s)).
\end{equation}
In addition,  if $s_\star \in \calS$ satisfies $\mf(s_\star) =0$ and $\map(s_\star) \in \inter \paramset$, then $\param_\star \eqdef \map(s_\star)$ satisfies $\map\left( \PE_\pi\left[ \barS(\sample,\param_\star) \right] \right) - \param_\star =0$. Conversely, if $\map\left( \PE_\pi\left[ \barS(\sample,\param_\star) \right] \right) - \param_\star = 0$ and $\param_\star \in \inter \paramset$, then $s_\star \eqdef \PE_\pi \left[\barS(\sample, \param_\star) \right]$ satisfies $\mf (s_\star) = 0$.
   
\end{proposition}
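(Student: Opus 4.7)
The plan is to derive a single key ``gradient identity'' from {\bf {\sf MM-1}} together with A\ref{hyp:surrogate:convexC1} and A\ref{hyp:objective:C1}, and then deduce all three statements by combining this identity with the first-order optimality condition that characterizes $\map$. Write $s_\tau \eqdef \PE_\pi[\barS(\sample,\tau)]$. The MM-1 inequality applied at $\tau$ states that the function
\[
h_\tau(\param) \eqdef f(\tau) + \psi(\param) - \psi(\tau) - \pscal{s_\tau}{\phi(\param)-\phi(\tau)} - f(\param)
\]
is non-negative on $\paramset$ and vanishes at $\param = \tau$. For $\tau \in \inter\paramset$, $h_\tau$ is $\mathcal{C}^1$ on a neighborhood of $\tau$ by A\ref{hyp:surrogate:convexC1}--A\ref{hyp:objective:C1}, so the first-order condition $\nabla h_\tau(\tau) = 0$ yields the identity
\[
\nabla f(\tau) = \nabla \psi(\tau) - J_\phi(\tau)^\top s_\tau.
\]
This identity is the only ``analytic'' content of the proof; everything else is algebraic manipulation.

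With it in hand, \eqref{eq:prop:equiv:theta:s:state2} follows almost immediately. By A\ref{hyp:surrogate:convexC1} the map $\param \mapsto g(\param) + \psi(\param) - \pscal{s_\tau}{\phi(\param)}$ is convex and, by {\bf {\sf MM-2}}, admits $\map(s_\tau)$ as its unique minimizer. When $\tau \in \inter\paramset$, this unique minimizer is characterized by
\[
0 \in \partial g(\tau) + \nabla \psi(\tau) - J_\phi(\tau)^\top s_\tau,
\]
which, via the gradient identity, is exactly $0 \in \partial g(\tau) + \nabla f(\tau)$. Thus $\map(s_\tau) = \tau$ iff $\tau$ is a critical point of $f+g$.

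For \eqref{eq:prop:equiv:theta:s:state1}, set $\theta^\star \eqdef \map(s) \in \inter\paramset$; by definition of $\mf$, $\PE_\pi[\barS(\sample, \theta^\star)] = s + \mf(s)$. Applying the gradient identity at $\tau = \theta^\star$ gives
\[
\nabla f(\theta^\star) = \nabla\psi(\theta^\star) - J_\phi(\theta^\star)^\top s - J_\phi(\theta^\star)^\top \mf(s),
\]
while the optimality condition for $\map(s)$ reads $J_\phi(\theta^\star)^\top s - \nabla\psi(\theta^\star) \in \partial g(\theta^\star)$. Adding these two yields $\nabla f(\theta^\star) + J_\phi(\theta^\star)^\top \mf(s) \in -\partial g(\theta^\star)$, i.e. the claimed inclusion.

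The last assertion is bookkeeping. If $\mf(s_\star) = 0$ and $\param_\star \eqdef \map(s_\star)$, then $s_\star = \PE_\pi[\barS(\sample,\param_\star)]$ directly from the definition of $\mf$, so $\map(\PE_\pi[\barS(\sample,\param_\star)]) = \map(s_\star) = \param_\star$. Conversely, if $\param_\star$ is a fixed point of $\map \circ \PE_\pi[\barS(\sample,\cdot)]$ and we set $s_\star \eqdef \PE_\pi[\barS(\sample,\param_\star)]$, then $\map(s_\star) = \param_\star$, and plugging this into the definition of $\mf$ gives $\mf(s_\star) = \PE_\pi[\barS(\sample,\param_\star)] - s_\star = 0$. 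The only real subtlety throughout the argument is justifying the move between the subdifferential inclusion defining $\map$ and pointwise gradient identities: this is exactly what the interior assumption together with A\ref{hyp:surrogate:convexC1}--A\ref{hyp:objective:C1} allow, by ensuring that all functions involved are $\mathcal{C}^1$ at the relevant points and that the convex optimization problem defining $\map$ has its optimum characterized by a standard first-order condition.
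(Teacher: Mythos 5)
Your proof is correct and follows essentially the same route as the paper: both rest on the stationarity identity $\nabla f(\tau) = \nabla\psi(\tau) - J_\phi(\tau)^\top\PE_\pi[\barS(\sample,\tau)]$ obtained from the first-order condition at the tangency point of the {\bf {\sf MM-1}} majorizer, combined with the Fermat-rule/subdifferential characterization of $\map(s)$ (where the paper is more explicit about invoking the sum rule for subdifferentials, which your closing remark correctly identifies as the point where $\inter\paramset\neq\emptyset$ and A1--A2 are needed). Your treatment of the final fixed-point equivalences is purely definitional and slightly more direct than the paper's, which routes through \eqref{eq:prop:equiv:theta:s:state1} and \eqref{eq:prop:equiv:theta:s:state2}, but the content is the same.
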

We defer the proof of \Cref{prop:equiv:theta:s} to \Cref{sec:proof:prop:equiv:theta:s}. A similar result was proved in 
\citet[Lemma 1]{nguyen:etal:2022}.

\subsection{A Stochastic Surrogate MM algorithm}
In the large-scale learning setting considered here, the expectation $\PE_\pi \left[ \barS(\sample, \tau) \right]$ is not explicit but stochastic oracles are available, whatever $\tau \in \paramset$; on the other hand, the minimizer $\map(s)$ still has a closed-form expression whatever $s \in \calS$. Thus, we propose a stochastic version of MM which replaces the deterministic fixed point algorithm by a Stochastic Approximation scheme (SA, see e.g. ~\cite{robbins_stochastic_1951}, \cite{benveniste:etal:1990,borkar:2008}) designed to solve the root of a vector field  {$\upsilon: \rset^u \to \rset^u$; it repeats $x_{t+1} = x_t +\pas_{t+1} U_{t+1}$ where $\{\pas_t, t \geq 1 \}$ is a sequence of positive deterministic step sizes and $U_{t+1}$ is a random oracle of $\upsilon(x_t)$.  Two properties of the oracles are crucial in the convergence analysis: its conditional bias given the past of the algorithm and its conditional variance (see e.g. \cite{andrieu:moulines:priouret:2005,dieuleveut:etal:2023:TSP}).} In light of {\sf MM-1} and {\sf MM-2}, two strategies are possible for tackling \eqref{eq:opt}: define a $\paramset$-valued sequence targeting the roots of $\param \mapsto  \chi(\param) \eqdef \map(\PE_\pi\left[ \barS(\sample, \param) \right]) - \param$;  or define a $\calS$-valued sequence targeting the roots of $\mf$ (see \eqref{eq:def:meanfield}). The main advantage of the second approach is to conserve the unbiasedness property of the oracle: when $\Smem$ is an unbiased oracle of $\PE_\pi\left[ \barS(\sample, \tau) \right]$, then $\map(\Smem)-\tau$ is a biased oracle of $\chi(\tau)$ - except in the specific case when $\map$ is linear; while when $\Smem$ is an unbiased oracle of $\PE_\pi\left[ \barS(\sample, \map(s)) \right]$, then $\Smem-s$ is also an unbiased oracle of $\mf(s)$. Biased oracles may  ruin  the convergence of the algorithm and have to be controlled with convenient design of the oracles, usually inducing a higher computational cost (see e.g.\cite[Section IV.B.3.c]{dieuleveut:etal:2023:TSP}).

This yields the {\it Stochastic Approximation Stochastic Surrogate {\sf MM}} ({\SSMM}) algorithm in \Cref{algo:SS-MM}.

\begin{algorithm}[H]\DontPrintSemicolon
  \caption{Stochastic Approximation Stochastic Surrogate {\sf MM} (\SSMM)\label{algo:SS-MM}}
  \KwData{ $\kmax \in \nset^\star$; a $\ocint{0,1}$-valued sequence
    $\{\pas_{t}, t \in [\kmax]\}$;  initial value $\hatS_0 \in
    \calS$} \KwResult{  \SSMM\ sequence $\{\hatS_{t}, t \in
    \{0, \ldots, \kmax \} \}$ and its mirror sequence $\{\map(\hatS_t), t \in  \{0, \ldots, \kmax \} $\}}
  \For{$t=0, \ldots, \kmax-1$}{ 
  Sample $\Smem_{t+1}$ in $\calS$, a random oracle for $\PE_\pi [\barS(\sample, \map(\hatS_t))]$\; 
      Set $\hatS_{t+1} =  \hatS_t +
    \pas_{t+1} (\Smem_{t+1} - \hatS_t )
    $ \label{line:SSMM:updateS} \; }
\end{algorithm}
At the $t$-th iteration, Algorithm \ref{algo:SS-MM} utilizes a stochastic oracle of $\mf(\hatS_t)$, the mean field function evaluated at the current iterate,  defined by $\Smem_{t+1} - \hatS_t$, where  $\Smem_{t+1}$  approximates   $\PE_\pi [ \barS(\sample, \map(\hatS_t)) ]$.  In {\it online
  learning}, given a new example $\sample_{t+1}$, one can
set $\Smem_{t+1} = \barS(\sample_{t+1}, \map(\hatS_t))$. In {\it large
  batch learning} with $N$ examples, a possible oracle is $\Smem_{t+1}
= \lbatch^{-1} \sum_{j \in \batch_{t+1}} \barS(\sample_j, \map(\hatS_t))$
where $\batch_{t+1}$ is a mini-batch of size $\lbatch$ sampled at
random in $[N]$. 

Since $\calS$ is convex (see {\sf \bf MM}-1)  and $\pas_{t+1} \in \ocint{0,1}$, we have $\hatS_t +\pas_{t+1} (\Smem_{t+1} - \hatS_t) \in \calS$ and $\map(\hatS_{t+1})$ is well-defined.  Note that in the special case of constant step sizes $\pas_{t+1} = \pas$ for all $t$, {we have $\hatS_{t+1} = (1-\pas)^{t+1} \hatS_0 + \pas \sum_{j=0}^t (1-\pas)^j \hatS_{t+1-j}$:} the iterate $\hatS_{t+1}$ forgets the past oracles at a geometric rate. When $\pas_{t+1} = 1/(t+1)$, $\hatS_{t+1}$ is the empirical expectation of the oracles: $\hatS_{t+1} = (t+1)^{-1} \sum_{j=1}^{t+1}\Smem_j$.  

\subsection{\SSMM\ on few examples and Related work} \label{subsec:ssmmeg}

We now show that {\SSMM} is related to multiple stochastic algorithms in the literature.

\paragraph{Stochastic Gradient Descent and Stochastic Proximal Gradient Descent.}
The iterative algorithm \eqref{eq:MM:Tspace} applied to \Cref{ex:gdt} defines the deterministic sequence $\param_{t+1} = \Prox_{\rho \, g}\left( \param_t - \rho  \nabla f(\param_t) \right)$ where $\Prox_{\rho \, g}$ is the proximity operator of the proper lower semicontinuous convex function $\rho \, g$.  When $g \equiv 0$, it is the Gradient Descent algorithm with constant step size; and otherwise, it is the Proximal Gradient algorithm \citep{beck:teboulle:2010,combettes:pesquet:2011,parikh:boyd:2013}.  Stochastic versions of the deterministic algorithm \eqref{eq:MM:Tspace} addressing the case when $\nabla f(\param)$ is an expectation with no closed-form expression, are proposed in the literature (see e.g. \cite{atchade:fort:moulines:2017}). 

The iterative algorithm \eqref{eq:MM:Sspace} applied to \Cref{ex:gdt} defines the deterministic sequence  
\[
s_{t+1} =  s_{t+\nicefrac{1}{2}} - \rho \nabla f(s_{t+\nicefrac{1}{2}}), \qquad \text{with} \quad 
s_{t+\nicefrac{1}{2}} \eqdef  \Prox_{\rho g}(s_t). 
\]  The update mechanism in line \ref{line:SSMM:updateS} of 
\Cref{algo:SS-MM}  gets into $\hatS_{t+1} = \hatS_{t} - \pas_{t+1} ( \hatS_{t+\nicefrac{1}{2}} - \rho  G_{t+1}  - \hatS_{t})$ 
where $\hatS_{t+\nicefrac{1}{2}} \eqdef \Prox_{\rho \,g}(\hatS_t)$ and $G_{t+1}$ is a random oracle for the intractable quantity $\nabla f(\hatS_{t+\nicefrac{1}{2}})$; $G_{t+1}$ can cover many stochastic approximations of
the exact gradient  possibly defined to include
variance reduction schemes.

Let us discuss what the mirror sequence $\param_t \eqdef \map(\hatS_t) = \Prox_{\rho \, g}(\hatS_t)$ is. By iterating line \ref{line:SSMM:updateS} in 
\Cref{algo:SS-MM}, we have 
\begin{equation}\label{eq:algo3:iteratefrom0}
\hatS_{t+1} = \Gamma_{1:t+1} \hatS_0 + \sum_{j=1}^{t}   \pas_{j} \Gamma_{j+1:t+1}  \,  \Smem_{j} + \pas_{t+1} \Smem_{t+1}, \qquad \Gamma_{r:r'} \eqdef \prod_{\ell=r}^{r'} (1-\pas_\ell). 
\end{equation}  From \eqref{eq:algo3:iteratefrom0}, we have $\param_{t+1} = \Prox_{\rho \, g}(\tilde \param_t - \rho \tilde G_{t+1})$ where 
$\tilde \param_t \eqdef \sum_{j=1}^{t}   \pas_{j} \Gamma_{j+1:t+1}  \,  \param_{j-1} + \pas_{t+1} \param_t$ is a weighted average of the previous mirror parameters, and $\tilde G_{t+1} \eqdef \Gamma_{1:t+1} \hatS_0 + \sum_{j=1}^{t}   \pas_{j} \Gamma_{j+1:t+1}  \,  G_j
+ \pas_{t+1} G_{t+1}$ is a weighted average of the past oracles $G_1, \ldots, G_t$ and the current one $G_{t+1}$. As a conclusion, the mirror sequence $\{\map(\hatS_t), t \geq 0\}$ produced by \Cref{algo:SS-MM} is among the stochastic Proximal-Gradient algorithms,  here the gradient step takes benefit of the full history of the algorithm.

\paragraph{Stochastic Approximation Expectation Maximization.}
The iterative procedure \eqref{eq:MM:Tspace} applied to \Cref{ex:EM}  defines the sequence 
\begin{equation}\label{eq:EM:Tspace}
\param_{t+1}  = \mathrm{argmax_\param} \pscal{\PE_\pi\left[ \int S(\sample, h) \mu(\rmd h \vert \sample, \param_t)\right]}{\phi(\param)} - g(\param) -  \psi(\param).
\end{equation}
When $g=0$, this algorithm is known 
as the Expectation Maximization algorithm (EM) 
\citep{dempster:etal:1977} 
for the case of complete log-likelihood from a vector exponential family; 
the case $g \neq 0$ is the Maximum A Posteriori EM (also known as Bayesian EM,  \cite[Section 6.5]{maclachlan:2008}).  The computation of the double expectation is the E-step, while the optimization step is the M-step.  

EM is a popular algorithm for maximizing the likelihood of observations  in a latent variable model (see e.g. \cite{maclachlan:2008}). 
We defer to \Cref{app:exampleEM} two concrete examples of EM, stressing they are instances of \Cref{ex:EM}. In many situations, the E-step has no closed form expressions. It  happens that the inner expectation w.r.t. the distribution $\mu(\rmd h \vert \sample,\param)$ is not explicit typically when this a posteriori distribution is known up to a normalizing constant. The outer expectation w.r.t. $\pi$ also happens to be intractable: for example in online learning, and in large batch learning when $\pi$ is the empirical distribution $N^{-1} \sum_{i=1}^N \delta_{\sample_i}$ with Dirac mass at the $N$  independent examples $\sample_i$.  Stochastic versions of \eqref{eq:EM:Tspace} were proposed: they consist in replacing the intractable expectations with a stochastic oracle (see e.g. the Stochastic EM \cite{celeux:diebolt:1992}; and the Monte Carlo EM, \cite{Wei:tanner:1990, Fort:moulines:2003}). 

If the deterministic sequence \eqref{eq:MM:Tspace} corresponds to the usual description of the Bayesian-EM algorithm which outputs the successive M-steps, the deterministic sequence given by \eqref{eq:MM:Sspace} is the output of the successive E-steps.

\Cref{algo:SS-MM} applied to \Cref{ex:EM} 
covers many popular stochastic EM algorithms: let us cite the Stochastic Approximation EM proposed by \cite{Delyon:etal:1999} when the inner expectation is not explicit;  in the case $\pas_t =1$ for any $t$, the Incremental EM by \cite{Neal:hinton:1998} when the outer expectation is not explicit; in the case $\pas_t \in \ooint{0,1}$, the Online EM proposed by \cite{sato:ishii:2000} and generalized by \cite{cappe:moulines:2009} when the outer expectation is not explicit;  and many possible oracles proposed e.g.  by \cite{chen:etal:2018,gach:fort:moulines:2020,fort:moulines:wai:2021,fort:moulines:2023}, 
when possibly both inner and outer expectations are intractable and  a variance reduction scheme is included.

\paragraph{Dictionary Learning and Matrix Factorization by Variational Surrogate.} 
Applications of the variational surrogate setting are, among others, the Huber loss regression \citep[e.g.,][Section 2.3.4]{mairal:2015}, and the Dictionary Learning and Matrix Factorization (MF) problems. In the dictionary learning problem,  we want to learn  a dictionary $\param \in \rset^{p \times K}$ by solving 
 \begin{equation} \label{eq:dictlearning:problem}
\mathrm{Argmin}_{\param \in \rset^{p \times K}} \left(  \PE_{\pi} \left[
\min_{h \in \rset^K} \left(  \L(\sample, \theta h)+ \chi(h) \right)  \right] +  g(\param) \right),
 \end{equation}
{where  $\L: \rset^p \times \rset^p \to \rset$ is a loss function, and   $\chi: \rset^K \to \ocint{-\infty,  +\infty}$ and  $g: \rset^{p \times K} \to \ocint{-\infty, +\infty}$ are penalty  and/or constraint functions. In MF, the vectors of coefficients $h$ are also learnt. Again, $\pi$ is the distribution of the i.i.d. stream of data $\{\sample_1, \sample_2, \cdots \}$ in online learning, and $\pi$ is the empirical distribution $N^{-1} \sum_{i=1}^N \delta_{\sample_i}$ in batch learning with $N$ i.i.d. observations $\{\sample_1, \cdots, \sample_N\}$.  

When $\L(z,z')$ is the quadratic loss $\|z-z'\|^2$, the problem \eqref{eq:dictlearning:problem}  is of the form \eqref{eq:opt} and is an example of variational  surrogate (see \Cref{ex:SurrogateCvx})  with  
\[
\tilde \ell(\sample, h, \param) \eqdef \sample^\top \sample + \chi(h) - 2
\sample^\top \param h + h^\top \param^\top \param h,
\]
which is of the form $  \psi(\param)- \pscal{S(\sample,h)}{\phi(\param)}
+  \xi(\sample,h)$ by setting
\begin{equation}\label{eq:dict_learn_mm1}
     \psi(\param) =0,
    \qquad S(\sample,h) \eqdef \left[  \begin{matrix} h h^\top \\
        \sample h^\top
      \end{matrix} \right], \qquad \phi(\param) \eqdef \left[  \begin{matrix} -\param^\top  \param \\
      2   \param
      \end{matrix} \right], \qquad \xi(\sample,h) \eqdef \sample^\top \sample + \chi(h).    
\end{equation}
$S(\sample,h)$ and $\phi(\param)$ collect two matrices: a $K \times K$ one and a $p \times K$ one respectively. Here, \begin{equation} \label{eq:MF:tool1}
      \mapM(\sample,\tau) \eqdef \argmin_{h \in \rset^K}  \left( \chi(h) + \mathrm{Trace}(\tau^\top \tau h h^\top) - 2 \pscal{\tau^\top \sample}{h} \right),
\end{equation}
assuming that the minimizer exists and is unique.  In \textbf{\sf MM-1}, set $\calS \eqdef \mathcal{M}_K^+ \times \rset^{p \times K}$.  Finally, for $s =(s^{(1)}, s^{(2)}) \in  \calS$, 
\begin{equation} \label{eq:MF:tool2}
\map(s) \eqdef \mathrm{argmin}_{\param \in \rset^{p \times K}} \left( g(\param) +
\mathrm{Trace}(\param^\top \param  s^{(1)}) - 2
\mathrm{Trace}(\param^\top s^{(2)}) \right). 
\end{equation}
Let us describe a definition of oracles in  \SSMM.   At iteration $\#(t+1)$, choose a minibatch of observations $\{\sample_i, i \in \mathcal{B}_{t+1} \}$  of size $\mathsf{b}$,  where $\mathcal{B}_{t+1}$ is sampled at random in $\{1, \cdots, N\}$ in batch learning, or collects new observations in online learning; then, define the oracle $\Smem_{t+1} = (\Smem^{(1)}_{t+1}, \Smem^{(2)}_{t+1})$  in $\calS$ by   
\begin{equation}  \label{eq:MF:tool3}
\Smem^{(1)}_{t+1} \eqdef  \mathsf{b}^{-1} \sum_{i \in \mathcal{B}_{t+1}}  H_{i,t}
H_{i,t}^\top, \quad \Smem^{(2)}_{t+1} \eqdef \mathsf{b}^{-1} \sum_{i \in \mathcal{B}_{t+1}}\sample_{i} H_{i,t}^\top, \quad \text{with} \quad H_{i,t} \eqdef
\mapM\left(\sample_{i}, \param_t \right).
\end{equation}

In a specific setting of \SSMM\ for online learning, the mirror sequence $\{\param_t, t \geq 0\}$ defined by $\param_t \eqdef \map(\hatS_t)$ is the  {\it online dictionary learning}  proposed by \citet[Algorithm
1]{mairal:etal:2010}. This setting is the case when $\chi(h) \eqdef \lambda \| h \|_1$ for some $\lambda >0$,  $g(\param)$ is the $\{0, +\infty\}$-valued indicator function of the set $\paramset \eqdef \{\param= (\param_{\cdot 1}, \cdots, \param_{\cdot
  K}) \in \rset^{p \times K}:  \| \param_{\cdot k} \|^2 < 1 \
\text{for all} \ k \in \ccint{K} \}$;
$\mathsf{b}=1$ and $\pas_{t+1}  \eqdef \nicefrac{1}{(t+2)}$.  Note that in this setting,  $\mapM(\sample, \param)  \eqdef \argmin_{h} \left( \|\sample
  - \param h\|^2 + \lambda \|h\|_1 \right)$  corresponds to a $L^1$-regularized linear least-squares problem,  an oracle for $\mapM(\sample,\param)$ is obtained by applying LARS-Lasso \citep{Efron_2004}, or Proximal-Gradient Descent (see e.g. \Cref{sec:dict_learn}) among examples.

\paragraph{Structured tangent majorizing functions of $\ell(Z, \cdot)$}\label{main:StoSurMM-increm_learn} 
Assume that there exists a convex subset $\calS$ of $\rset^q$ and
    measurable functions $\barS: \rset^p \times \paramset \to \calS$,
    $\psi: \paramset \to \rset$ and $\phi: \paramset \to \rset^q$ such
    that for any $\tau \in \paramset$, $\PE_\pi
  \left[ \| \barS(\sample, \tau) \| \right]  < \infty $ and 
\begin{equation} \label{eq:defMM:surrogate:onell}
\ell(z,\cdot) \leq \ell(z,\tau) + \psi(\cdot) - \psi(\tau) - \pscal{
  \barS(z, \tau)}{\phi(\cdot) - \phi(\tau)} \qquad \text{on $\paramset$};
\end{equation}
see e.g. \Cref{ex:EM} (Eq. \eqref{eq:toolex2} in \Cref{sec:detailedex2})  and \Cref{ex:SurrogateCvx} (Eq. \eqref{eq:toolex3bis} in \Cref{sec:detailedex3}).
Then the assumption \textbf{\sf MM-1} is satisfied. Assume also that   \textbf{\sf MM-2} holds and consider the mirror sequence $\param_t \eqdef \map(\hatS_t)$ of the sequence $\{\hatS_t, t \geq 0\}$ given by \Cref{algo:SS-MM}.  We have $\param_{t+1}  = \mathrm{argmin_{\param \in \rset^d}} \, \bar U_{t+1}(\param)$ where
\begin{align*}
\bar U_{t+1}(\cdot) & \eqdef g(\cdot) +  \psi(\cdot) - \pscal{\hatS_{t+1}}{\phi(\cdot)} \\
& = (1-\pas_{t+1})  \left( g(\cdot) + \psi(\cdot) - \pscal{\hatS_{t}}{\phi(\cdot)} \right) + \pas_{t+1}  \left( g(\cdot) + \psi(\cdot) - \pscal{\Smem_{t+1}}{\phi(\cdot)} \right) \\
& = (1-\pas_{t+1}) \bar U_t(\cdot) + \pas_{t+1} U_{t+1}(\cdot)
\end{align*}
where $U_{t+1}(\cdot) \eqdef g(\cdot) + \psi(\cdot) - \pscal{\Smem_{t+1}}{\phi(\cdot)}$. This discussion evidences that, in the setting \eqref{eq:defMM:surrogate:onell}, \SSMM\ defines iteratively a sequence of surrogate functions $\{\bar U_{t}, t \geq 0 \}$. 

Let us consider the case of online learning, when $\pi$ in \eqref{eq:opt} is an unknown distribution but  a stream of i.i.d. examples $\sample_1, \sample_2, \cdots$ is available, $\sample_1 \sim \pi$. Define the oracle $\Smem_{t+1}$ by $\barS(\sample_{t+1}, \param_t)$.   Upon noting that $\argmin_{\param \in \rset^d} \, \bar U_{t+1}(\param)$ is invariant if a constant is added to $\bar U_{t+1}$, we can assume without loss of generality that the function $U_{t+1}$ is tangent to the function $\param \mapsto  \ell(\sample_{t+1},\param) + g(\param)$ at the point $\param_t$: the  surrogate function $\bar U_{t+1}$ is updated at each iteration as a convex
combination of the previous surrogate function $\bar U_{t}$ and a novel majorizing and tangent  function of the penalized loss function $ \ell(\sample_{t+1}, \cdot) + g(\cdot)$. 
This establishes that the mirror sequence $\{\param_t,t \geq 0\}$ of the output of \Cref{algo:SS-MM} defines the sequence  $\{\param_t, t \geq 0\}$  given in  \citet[Algorithm 1]{mairal:2013} for the minimization of $\param \mapsto \PE_\pi\left[ \ell(\sample,\param)\right] + g(\param)$.

\section{Federated MM}\label{sec:FLMM}
In the following, we extend our scope to the Federated Learning (FL) framework in which $n$ different \textit{agents} (or \textit{workers}) hold parts of the data. For $i\in [n]$, we denote by $\pi_i$ the probability distribution of the data of agent $\# i$; we focus on the heterogeneous regime, in which $\pi_i$ (strongly) depends on $i$. In the finite sample regime,  let $N_i$ be  the number of examples on agent $\# i$ and  $N \eqdef \sum_{i\in [n]} N_i$  be the total number of examples in the system; $\mu_i \eqdef N_i/N$ is the fraction of examples on agent  $\# i$.
In the online case (where each worker can access an infinite number of points),  we set by default $\mu_i\eqdef n^{-1}$, the uniform weight over the $n$ agents.  The objective function can be written
\begin{equation}\label{eq:opt-fed}\textstyle
\argmin_{\param \in \rset^d} \left(  \sum_{i\in [n]} \mu_i f_i(\param) + g(\param) \right),
\qquad \text{with} \quad f_i(\param) \eqdef \PE_{\sample \sim \pi_i} \left[ \ell(\sample,
  \param) \right].
  \end{equation}
This federated setting, applied to \Cref{ex:gdt,ex:SurrogateCvx,ex:EM}, corresponds to solving minimization problems by a MM approach based on respectively a quadratic surrogate, a Jensen surrogate or a variational surrogate, by using the union of examples over all workers while keeping locally each batch of observations.  

\subsection{Aggregate in the $\calS$-space or in the $\paramset$-space ?}
\label{sec:SspaceORThetaspace}

Let's consider an ideal setting where  all clients are always online, there are no communication constraints, and exact expectation w.r.t. $\pi_i$ can be computed and the assumptions {\bf {\sf MM1\&2}} hold as follows: {\bf {\sf MM1}} is satisfied for each function $\param \mapsto \PE_{\pi_i}\left[ \ell(\sample, \param) \right]$, and $(\phi, \psi, S, \calS)$ do not depend on $i$; {\bf {\sf MM2}}  is satisfied and $\map$ does not depend on $i$. 

Following the Majorize-Minimization approach, there are two ways to solve \cref{eq:opt-fed} in a federated manner depending on how the aggregation step is performed. The first one relies on an aggregation in the parameter ($\paramset$) space: (1) broadcast the current $\paramset$-valued parameter $\param_t$ to all clients, (2) compute $\param_{t+1, i} \eqdef  \map\left(\PE_{\pi_i}  \left[\barS(\sample, \param_t)\right]\right)$ at each client, (3) aggregate by setting 

\begin{equation}\label{eq:agg_in_Theta}\textstyle
  \param_{t+1} \eqdef \sum_{i\in [n]} \mu_i  \param_{t+1,i} = \sum_{i\in [n]} \mu_i  \map \left( \PE_{\pi_i}  \left[\barS(\sample, \param_t)\right]\right), \qquad \text{for }t\geq 0. 
  \end{equation}
Here, the optimization steps are solved by each local agent but could also be run by the central server prior to aggregation. Note that special cases of \eqref{eq:agg_in_Theta} have appeared in \citep{marfoq2021federated, wang:chang:2022}.
However, the scheme \eqref{eq:agg_in_Theta} no longer reduces to the centralized case and the sequence $\{\theta_t, t \geq 0 \}$ may not generally converge. See \Cref{rem:aggr_in_S}.

The second one relies on aggregation in the surrogate ($\calS$) space: (1) from the current $\calS$-valued parameter $s_t$, compute the mirror $ \theta_{t+1} = \map(s_t)$ and broadcast it to all clients, (2) compute $s_{t+1, i} \eqdef \PE_{\pi_i}  \left[\barS(\sample, \map(s_t))\right]$ at each client, (3) aggregate by setting 
\begin{equation} \label{eq:agg_in_S} \textstyle
s_{t+1} \eqdef \sum_{i \in [n]} \mu_i s_{t+1,i} = \sum_{i \in [n]}  \mu_i  \PE_{\pi_i}  \left[\barS(\sample, \map(s_t))\right], \qquad \text{for }t\geq 0.
\end{equation} 
Note that the optimization step is solved by the central server. As a result, due to the linearity of the surrogate parameterization, this approach is equivalent to centrally solving the problem with the mixture distribution $\pi = \sum_{i \in [n]} \mu_i \pi_i$. 

{
The following remark shows that the natural way to aggregate each worker's participation is in the space $\calS$, and that aggregation should not be performed in the space $\paramset$. 
\begin{remark}\label{rem:aggr_in_S} 

As a toy example, consider the real-valued loss function $\ell(\sample, \param)  \eqdef \sample \param + 1/\param$ and the constraint $g(\param) \eqdef \iota_{\rset_{>0}}(\param)$ (the convex indicator of  $\rset_{>0}$) i.e.~$\param \in \ooint{0,+\infty}$; assume also that $\sample>0$ $\pi_i$-almost surely  for all $i$. Here,  the federated objective corresponds to
\begin{equation*}
    \argmin_{\param \in \rset_{>0}}  \left( \sum_{i\in [n]} \mu_i\expec_{\pi_i} \left[\sample \right] \right) \, \param+\frac{1}{\param}
\end{equation*}
whose solution is  $\theta_\star 
\eqdef \sqrt{{1}/{\sum_{i \in [n]} \mu_i \PE_{\pi_i}[Z]}}$. The assumption {\bf \sf MM-1}  is satisfied for all the functions $\param \mapsto \PE_{\pi_i}[\ell(\sample,\param)]$  by setting $\phi(\param) \eqdef -\param$, $\psi(\param) \eqdef 1/\param$, $\barS(\sample,\param) \eqdef \sample$ and $\calS\eqdef \rset_{>0}$.  The assumption {\bf \sf MM-2}  is satisfied with $\map(s) \eqdef 1/\sqrt{s}$. Aggregation in the $\calS$-space defines a constant sequence $s_t \eqdef \sum_{i \in [n]} \mu_i \PE_{\pi_i} \left[ \sample \right]$ whose mirror sequence is $\param_t^{\mathrm{mir}} \eqdef \map(s_t) = \param_\star$. Aggregation in the $\paramset$-space defines a constant sequence  given by $\param_t \eqdef \sum_{i \in [n]} \mu_i /\sqrt{\PE_{\pi_i}\left[ \sample\right]}$ (see     \eqref{eq:agg_in_Theta}); we have $\param_t \neq \param_\star$. \end{remark}}

\subsection{\QSMM: a Federated Majorize-Minimization algorithm}
\label{sec:fedmm:algo}

Our idea is to develop a Federated Majorize-Minimization (\QSMM) algorithm based on \eqref{eq:agg_in_S}. To develop a practical algorithm, several challenges have to be overcome.

First, it is generally impossible to compute $\PE_{\pi_i}  \left[\barS(\sample,\cdot)\right]$, because the number of observations (on each worker) is large or even infinite; we thus have to rely on stochastic updates, for example as in \Cref{algo:SS-MM}. 

Second, the communication cost between workers is typically an important bottleneck. In the classical federated-optimization framework, two main solutions have been proposed to reduce this communication cost. One can either perform local iterations as in \textit{LocalSGD}~\citep{stich_local_2019,woodworth_is_2020} and \textit{FedAvg}~\citep{mcmahan2016communication}, or reduce the communication cost either by performing  a compression step \citep{alistarh_qsgd_2017,konecny_federated_2016-1} or by sub-sampling workers at each step (referred to as Partial Participation - PP). In both situations, the data heterogeneity hinders the convergence and new algorithms have been proposed to ensure the convergence despite heterogeneity~(for local iterations, see e.g.~\citep{li_convergence_2019,karimireddy_scaffold_2019}, for compressed FL, see e.g.~\citep{wangni2018gradient,karimireddy_error_2019,mishchenko_distributed_2019,philippenko_preserved_2021,khaled_gradient_2020,richtarik2021ef21}).

In the following, we propose a unified algorithm, \QSMM\ that has the following features:  (i) the aggregation step is performed in the space $\mathcal S$; (ii) the algorithm uses stochastic oracles on $\PE_{\pi_i}  \left[\barS(\sample, \map(\hatS_t))\right]$, e.g., computed on a batch of examples at each iteration;  (iii) Partial Participation (PP) is incorporated, and there is compression of the information computed by the active workers before communication to the central server.
In the PP setting, data heterogeneity across clients poses a significant challenge. The global fixed point $s$ (where $\PE_{\pi} [\bar S(Z, \map(s))] = s$ i.e. $\mf(s)=0$) is an average over client-specific expectations, meaning that for any individual client $i$, it does not generally hold that $\PE_{\pi_i} [\bar S(Z, \map(s))] = s$. If only a subset of clients participates in an update, a naive aggregation would be biased towards the participating clients' local fixed points, potentially destabilizing the algorithm. Similarly, lossy compression of client updates, especially if the compression error correlates with client drift from the global objective, can further exacerbate the impact of this underlying heterogeneity by distorting the aggregated information. To counteract this, (iv) control variates are introduced. These client-specific variables track and correct for the deviation of local updates from a global perspective, effectively acting as a form of client-wise variance reduction. This mechanism helps to mitigate the adverse effects of heterogeneity when employing PP or compression, as explored in works like~\citep{mishchenko_distributed_2019,dieuleveut:etal:FedEM:2021}.
Our approach takes its roots in \texttt{DIANA}~\citep{mishchenko_distributed_2019}, an algorithm for heterogeneity mitigation for federated optimization, and in  \texttt{FedEM}~\citep{dieuleveut:etal:FedEM:2021}, a specific instance of the general MM framework addressed in this paper.

\begin{algorithm}[t]\DontPrintSemicolon
  \caption{Federated Majorize-Minimization   \QSMM 
    } \label{alg:fedss} \KwData{$\kmax \in \nset_{>0}$;   $\alpha>0$; $\proba \in \ocint{0,1}$; 
    a $\ocint{0,1}$-valued sequence
     $\{\pas_t, t \in [\kmax]\}$;  a distribution  $\lawPP$ on the subsets of $\{1, \cdots, n\}$;
  initial values $\hatS_0 \in \calS$ and  $V_{0,i} \in \rset^q$ for all $i \in [n]$; a probability distribution $\{\mu_i, i \in [n]\}$   . \label{algo:FedSS-MM}} \KwResult{ Sequence: $\{\hatS_{t}, t \in [\kmax]\}$} Set $V_0 =
  \sum_{i=1}^n \mu_iV_{0,i}$ \; 
  \For(\tcp*[f]{for all iterations}){$t=0, \ldots, \kmax-1$}{
    Sample $\set_{t+1} \sim \lawPP$ \label{line:participation_sampling}
    \label{line:PPworker} \; 
    Send $\hatS_{t}$
    and $\map (\hatS_{t})$ to the active workers \label{line:mapT_broadcast} \; 
    \For(\tcp*[f]{on active worker $\# i$}){$i \in \mathcal \set_{t+1} $ }{  Sample
      $\Smem_{t+1,i}$,  oracle for $ \PE_{\pi_i}[
        \barS(\sample, \map(\hatS_t))]$  \label{line:SampleSk}\;
    Set $\Delta_{t+1,i} = \Smem_{t+1,i} - \hatS_t -
      V_{t,i}$  \label{line:diffVk}  \; 
     Set $V_{t+1,i} = V_{t,i} + \nicefrac{\alpha}{\proba}
      \, \Q_{t+1,i}(\Delta_{t+1,i})$.   \label{line:upV} \; 
       Send
      $\Q_{t+1,i}(\Delta_{t+1,i})$ to the central
      server\; \label{line:compDelta} } 
      \For(\tcp*[f]{on inactive worker $\# i$}){$i \notin \mathcal
      \set_{t+1} $  }
      {
      Set $V_{t+1,i} =
      V_{t,i}$ \tcp*{no update} } 
      \tcp*{on the central server} Set
    $H_{t+1} = V_t +  \nicefrac{1}{\proba} \sum_{i\in {\cal A}_{t+1}}  \mu_i
    \Q_{t+1,i}(\Delta_{t+1,i})$
 
    \label{line:reconstruct_H_central} \; 
    Compute $\hatS_{t+ \nicefrac{1}{2}} = \hatS_t + \pas_{t+1}H_{t+1}$ \label{line:hatS:temp} \;
    Project it on $\calS$, given $\B_t \succ 0 $: $\hatS_{t+1}  = \argmin_{s\in \surspace}  \left(s-\hatS_{t+ \nicefrac{1}{2}}\right)^\top \B_t \left(s-\hatS_{t+ \nicefrac{1}{2}}\right)$  \label{line:upModel_central}\;
    Set $V_{t+1} = V_t +
    \nicefrac{\alpha}{\proba} \sum_{i\in {\cal A}_{t+1}}  \mu_i
    \Q_{t+1,i}(\Delta_{t+1,i})$ 
     \label{line:upV_central} \; }
\end{algorithm}

\paragraph{Description of \QSMM, \Cref{algo:FedSS-MM}.} At iteration $(t+1)$,    a set of  active agents $\set_{t+1}$  is selected at random from a probability distribution $\lawPP$ on the subsets of $\{1, \cdots, n\}$, such that the mean  number of active workers at each iteration is $n \proba$ ($\proba \in \ocint{0,1}$).  The server broadcasts back $\hatS_{t}$ and its mirror $\map (\hatS_{t})$ to the active workers. It runs the optimization step~$\map$; in practice, the computation of the map $\map$ at line~\ref{line:mapT_broadcast} can be replaced by an approximated optimization step, e.g., a few gradient steps. Nonetheless, our convergence analysis requires the exact computation of $\map$ (see \Cref{theo:main}).  \\

 Then, each active agent $\# i$ samples a local surrogate oracle $\Smem_{t+1,i}$ of $\PE_{\pi_i}[ \barS(\sample, \map(\hatS_t))]$. The agent's goal is to construct and communicate a low-variance, compressed estimate of its contribution to the global update direction, which is related to the mean field $\mf(\hatS_t)$ from \eqref{eq:def:meanfield}. To achieve this, the agent first computes its local update estimate, $\Smem_{t+1,i} - \hatS_t$. However, due to data heterogeneity, this local estimate is a biased estimator of the global mean field $\mf(\hatS_t)$. To correct for this client drift, a control variate $V_{t,i}$ is subtracted. This results in the error-corrected vector $\Delta_{t+1,i} = \Smem_{t+1,i} - \hatS_t - V_{t,i}$ (line~\ref{line:diffVk}). It is this locally-debiased quantity that is then compressed using a (potentially random) operator $\Q_{t+1,i}$ and sent to the server. These local subtractions are critical as they construct a more accurate update vector \textit{before} the lossy compression step. The client's control variate is then updated using the compressed information (line~\ref{line:upV}), with a step size $\alpha>0$. Inactive agents (i.e. agents $\# i$ with $i \notin \set_{t+1}$) do not perform any updates. \\

Finally, the central server aggregates all contributions and  compensates the local subtraction of control variates by adding $V_t$ (line~\ref{line:reconstruct_H_central}): it computes $H_{t+1}$, an oracle of $\PE_\pi\left[\barS(\sample, \map(\hatS_t)) \right] - \hatS_t$ (line~\ref{line:reconstruct_H_central}). Then it performs an SA-type update step on the iterate and obtains  $\hatS_{t+\nicefrac{1}{2}}$ (see line \ref{line:hatS:temp}) and projects back onto the space $\calS$ (line~\ref{line:upModel_central}). Crucially, this projection may not be performed according to the Euclidean geometry but possibly according to the geometry defined through a positive definite matrix $\B_t$,  allowed to depend on the  iteration index $t$ and on $\hatS_t$. In our experiments, we will simply set $\B_t=\Id$, the identity matrix, recovering the Euclidean geometry; nonetheless, our convergence result (see \Cref{theo:main}) requires a finer specification of this matrix.

\section{{MM within FL: Related works in the framework {\bf {\sf MM1\&2}} }}
\label{sec:related_work}

The crucial  difference between \QSMM\ and competing algorithms in the literature is that the aggregation steps  are performed in the $\calS$-space (see \Cref{sec:SspaceORThetaspace}). In most federated algorithms to perform personalization, EM, or matrix factorization, aggregation steps are performed in the initial parameter space $\paramset$.

\paragraph{Federated Composite Optimization.} The general optimization problem \eqref{eq:opt}, structured as $\argmin_{\param \in \rset^d}$ $f(\param) + g(\param)$, is a form of composite optimization.
In instances where $f$ is smooth and $g$ is a proper, lower semi-continuous convex function, our MM framework, as exemplified by the quadratic surrogate in \Cref{ex:gdt}, naturally leads to updates where the $\map(s)$ operator becomes the proximal operator of $g(\param)$.
In the literature on federated optimization, methods like ProxSkip/Scaffnew by \citet{MishchenkoProxSkip22} and RandProx/RandProx-FL by \citet{condat2022randprox} investigate scenarios where regularization functions and their proximal operators are used primarily to enforce the consensus mechanism among clients.
These works propose performing the proximal step (which ensures consensus) periodically, rather than at every communication round, to reduce communication overhead.
While their primary focus is on consensus, this framework can be conceptually extended: the periodic server-side proximal step, which enforces consensus, shares a methodological similarity with our \QSMM, where the $\map$ operator (which itself could be a proximal operator for $g(\cdot)$ in specific MM instances  - see e.g. \Cref{ex:gdt}) is also evaluated centrally at the server, and not as part of the local client activity.
However, the fundamental premise of these methods, even when extended, remains distinct from our MM-based surrogate aggregation, as their core design centers on periodic randomized consensus enforcement to reduce communication overhead rather than the iterative minimization of global majorizing functions derived from client-specific surrogate parameters.

A different perspective on aggregation for composite problems is offered by \citet{yuan2021federated}.
They observe that for problems with sparsity-inducing regularizers, such as $g(\cdot)$ being the $\ell_1$ norm, if clients compute their local model updates (involving a local client-side proximal step for $g$) and these updated model parameters are then averaged in the $\paramset$-space, the resulting global model may lose its intended sparsity.
As a remedy, they propose {\tt FedDualAvg}, where aggregation occurs in what they term the "dual space".
In the context of $\ell_1$-regularization, the dual variables often correspond to the subgradient of the $\ell_1$-norm, which directly relates to the parameters defining a particular type of surrogate or upper bound on the regularizer.
Aggregating these dual variables is thus conceptually akin to aggregating parameters in a surrogate space $\mathcal{S}$, aligning with our paper's methodological argument against naive $\paramset$-space aggregation when a more natural, structure-preserving aggregation space exists.
Our \QSMM\ framework provides a more general and direct approach by explicitly identifying and aggregating in the functional space of linearly parameterized functions.

\paragraph{Federated EM.}
The Expectation-Maximization (EM) algorithm (see \Cref{subsec:ssmmeg} and \Cref{app:exampleEM}) has been adapted by several works for the federated learning setting. \citet{marfoq2021federated} considered a federated setting where the data at each agent comes from a mixture of $L$ underlying distributions. These $L$ distributions are shared among clients, and data heterogeneity arises from each client having a different mixture weight. The primary motivation is to facilitate federated personalization by assigning each client a distinct mixture weight and, consequently, varying their models. The authors demonstrate that several federated personalization approaches can be viewed as specific instances of their formulation, such as \citep{t2020personalize,dinh2021fedu,hanzely2020lower}.
Most interestingly, the {\tt FedEM} algorithm of \citet{marfoq2021federated} is presented as a particular instance of a more general Federated Surrogate optimization framework \citep[Algorithm 3 and Appendix F,][]{marfoq2021federated} . However, in their approach, the parametric structure of the surrogate space (i.e., the E-step sufficient statistics) is leveraged only locally by each client for their surrogate optimization step with a locally (depending only on local client data) executed approximation of the $\map$ map.  The subsequent aggregation of updates occurs in the original parameter space~$\Theta$, rather than in the surrogate space $\mathcal{S}$. \cite{marfoq2021federated} provide convergence guarantees that assume a bound on the heterogeneity between client data distributions \citep[Assumption 7,][]{marfoq2021federated}.

In contrast, the Federated EM algorithm proposed by \citet{dieuleveut:etal:FedEM:2021} directly employs $\mathcal{S}$-space aggregation. Their method introduces and analyzes a federated Majorization-Minimization algorithm specifically for EM, where Assumption \textbf{\sf MM-1} is satisfied with an unconstrained surrogate space $\calS = \rset^q$. Consequently, their algorithm can be seen as a specific instance of our \QSMM\ without the projection step (line \ref{line:upModel_central} in Algorithm \ref{alg:fedss}).

More recently, \citet{tiantowards2024icml} (with {\tt FedGrEM})  tackled a similar formulation to \citet{marfoq2021federated} and extended the federated EM framework to manage adversarial data and outlier tasks. Their "federated gradient EM" involves clients executing local E-steps to compute sufficient statistics, which are then used in local gradient-based M-steps (approximating a local version of the $\map$ operator at client $\# i$, depending on client's data distribution.). The resulting client-specific model parameters (in $\paramset$) are subsequently aggregated at the server, differing from \QSMM's central $\map$ operation post-$\mathcal{S}$-aggregation. \citet{tao2024convergence} studied federated EM for Mixtures of Linear Regressions ({\tt FMLR}), characterizing convergence rates based on client-to-data ratios. While focused on statistical rates for this specific model, their analysis \citep[Propositions 2 and 3]{tao2024convergence} implicitly involves $\mathcal{S}$-space aggregation of sufficient statistics.

\paragraph{Federated Matrix Factorization (FMF).}  In batch learning, FMF consists in solving (see \eqref{eq:dictlearning:problem})
\begin{equation}\label{eq:FMF:problem}
\argmin_{\theta \in \Theta}  \frac{1}{N} \sum_{i=1}^n \sum_{j=1}^{N_i} \argmin_{h_{ij} \in \mathcal{H}_i} \left\{ \| \sample_{ij} - \theta h_{ij} \|^2 + \chi(h_{ij}) \right\} + g(\theta),
\end{equation}
where $\sample_{ij} \in \rset^p$ is the $j$-th observation of the client $\# i$,  $N_i$ is the  number of observations  available at agent $\# i$, $\Theta \subseteq \rset^{p \times K}$, $\mathcal{H}_{i} \subseteq \rset^{K}$, and $\chi,g$ are penalty terms on the coefficients $h$ and on the dictionary $\theta$ respectively; $N \eqdef \sum_{i=1}^n N_i$. 
 The problem of FMF has attracted a lot of attention~\citep{flanagan2020federated,gao2020privacy,hegedHus2019decentralized,singhal2021federated}, an application being federated clustering (clustering agents \textit{or} clustering observations); such clusters may constitute another way to perform personalization~\cite[e.g.,][]{mansour2020three,sattler2020clustered,ghosh2020efficient}. 

As discussed in \Cref{subsec:ssmmeg} (see \eqref{eq:dict_learn_mm1} to \eqref{eq:MF:tool3}), the assumptions \textbf{{\sf MM1-2}} are satisfied and \QSMM\ applies: at iteration $\#t$ given the current model $\theta_t \eqdef \map(\hatS_t)$, the active local agents compute an oracle of the matrices
$N_i^{-1} \sum_{j=1}^{N_i} h_{ij}^{\star,t} (h_{ij}^{\star,t})^\top$ and $N_i^{-1} \sum_{j=1}^{N_i}  \sample_{ij} (h_{ij}^{\star,t})^\top$
where $h^{\star,t}_{ij}$ solves
\begin{equation} \label{eq:FMF:maplocal}
\argmin_{h \in \mathcal{H}_i}    \left( \| \sample_{ij} - \theta_t  h \|^2  + \chi(h) \right).
\end{equation}

\texttt{FedMGS}, introduced by 
\citet{wang:chang:2022} with an application to unsupervised data clustering via $K$-means through specific choices of $\mathcal{H}_i$, $\chi$ and $g$,  is an alternative to \QSMM. In \texttt{FedMGS},  the oracle of the matrices  is obtained by substituting the exact solution $h_{ij}^{\star,t}$ for the output of few iterations of a gradient algorithm designed to solve \eqref{eq:FMF:maplocal}.   \texttt{FedMGS} is based on {\it gradient sharing} principles: the central server aggregates the oracles of the  matrices so that   \texttt{FedMGS }aggregates in  the $\calS$-space similarly to \QSMM. Contrary to \QSMM, \texttt{FedMGS} does not include a control variate technique and a quantization step run by local agents, but the central server carries out a control variate scheme for heterogeneity reduction; the selected number of agents at each iteration is constant; an agent which is not selected still updates locally the vectors $h_{ij}$; the local agent ensures that the quantities sent to  the central server are in $\calS$. For both  \QSMM\ and \texttt{FedMGS}, the  optimization step  \textbf{\sf MM-2} is run by the central server; in \texttt{FedMGS}, $\map$ is approximated by many iterations of a projected gradient algorithm.

\paragraph{Similar Methodologies: Aggregation of Functional Approximations.}
The principle of aggregating functional parameters extends beyond our specific Majorize-Minimization framework to the more general setting of surrogate optimization, which relaxes our MM setting by allowing surrogate functions to not necessarily be global majorizing functions. From our methodological standpoint, the key to designing federated algorithms lies in preserving the structure of the underlying surrogate-based method. An ideal federated surrogate algorithm, under perfect conditions (e.g., full client participation, no communication constraints, exact computation of $\map$ operator), should reduce exactly to its centralized counterpart. Our linearly parameterized MM framework achieves this naturally: aggregating the surrogate parameters in the space $\mathcal{S}$ is equivalent to averaging the surrogate functions themselves, ensuring the central optimization step operates on a true global surrogate. 

This perspective helps contextualize other methods that follow a similar philosophy. For example, second-order (Newton-type) methods in FL, as explored by \citet{islamov2021distributed}, can be viewed through this lens. Clients compute local gradients and Hessians, which are the parameters defining local quadratic approximations of the objective function. These functional parameters are sent to the server and aggregated to form a global quadratic model. Only then is the optimization step, the Newton update, is performed on the server using this aggregated information. This process is directly analogous to our approach of aggregating parameters of local majorizing functions (our surrogates) in $\mathcal{S}$-space before the central minimization step is performed using the $\map$ operator.

\section{Theoretical Analysis and Open Problems}\label{sec:theory}

We now proceed to provide a convergence analysis along with the required assumptions.

\paragraph{Stochasticity and Compression:} We assume that the oracles $\{\Smem_{k+1,i}, k \geq 0\}$ available at
each local agent $\# i$ satisfy the following assumption. First, the local oracles are independent between
local agents; and second, given an agent $\#
i$, the oracles are unbiased with a variance allowed to be specific to
each agent.

\begin{assumption}\label{hyp:oracle}
For all $t \geq 0$, the oracles $\{\Smem_{t+1,i}, i \in
\ccint{n}\}$ are conditionally independent of the past history
of the algorithm $\F_t$\footnote{see \Cref{sec:proof:filtration} for a rigorous definition of the filtration.}. Moreover, for any $i \in [n]$,
$\CPE{\Smem_{t+1,i}}{\F_{t}}= \PE_{\pi_i}[ \barS(\sample,
  \map(\hatS_t)) ]$ and there exists $\sigma_i^2>0$ such that
for any $k \geq 0$, $\PE[\| \Smem_{t+1,i} - \PE_{\pi_i}[
    \barS(\sample, \map(\hatS_t)) ]\|^2|\F_t] \leq \sigma_i^2$.
\end{assumption}

Next, we make the following assumption on the compression operator 
and on partial participation. In the analysis, we consider a \textit{specific} model on the partial participation, that assumes that the agents are selected independently at random at each step.

\begin{assumption}[$\omega$]\label{assumption:URVB}\label{hyp:URVB} \ For all $s \in \rset^q$, the random variables $\{\Q_{t+1,i}(s), t \geq 0, i \in [n] \}$ are i.i.d. with  the same distribution as $\Q(s)$.
There exists $\omega \geq 0$ such
       that for all $s \in \rset^q$, 
       \begin{equation}\label{eq:Quant}
       \PE\left[  \Q(s)\right]=s, \qquad \PE\left[ \|\Q(s)-s\|^2\right]\le
         \omega \, \|s\|^2.
         \end{equation}
         Each  compression is applied independently of any other randomness in the past of the algorithm. 
\end{assumption}
{The case $\omega=0$ corresponds to the case of no compression.} 

\Cref{assumption:URVB}($\omega$)  encompasses operators that have been widely used in the literature, for example Sparsification \citep{wangni2018gradient} and random (scalar or vector) quantization~\citep{alistarh_qsgd_2017,dai2019hyper,leconte2021texttt}. In particular \eqref{eq:Quant} describes the individual
behavior of each compression operator, specifically the fact that it
is unbiased and has a relatively bounded variance (i.e., the variance
 of $\Q(s)$ scales as $\omega\|s\|^2$). {See e.g \cite[Supplemental Section B]{dieuleveut:etal:FedEM:2021} for the block-$p$-quantization example.}

\begin{assumption}[$\proba$] \label{assumption:PP}  At each step, each worker is selected with probability $\proba$, $\proba \in \ocint{0,1}$, independently of the past and of the other workers. 
\end{assumption}

{Independent compression steps in \Cref{assumption:URVB}($\omega$) and independent selection in \Cref{assumption:PP}($\proba$)}
allow us to tackle the workers' selection as a particular form of unbiased compression. It is shown in \Cref{sec:PP2fullP} that compression and partial participation are equivalent:  partial participation can be modeled as another compression operation on top of $\Q$, which sends the output of $\Q$ with probability $
 \proba$ and nothing with probability $1-\proba$.

\paragraph{Model.}  We state our regularity assumptions on the functions 
$\mf_i(\cdot)$ for all $i \in
\ccint{n}$ on the set  $\calS$, and on 
$\phi(\map(\cdot))$.

\begin{assumption}\label{hyp:lipschitz}
 \textit{(i)} For all $i \in \ccint{n}$, the function $\mf_i(s)
  \eqdef \PE_{\pi_i}\left[ \barS(\sample, \map(s)) \right] - s$ is globally $L_i$-Lipschitz on $\calS$. {\textit{(ii)}  $\sup_{\calS} \| \mf(\cdot) \| < \infty$. }
\end{assumption}

\begin{assumption} \label{hyp:DL2} \textit{(i)}  There exist  positive constants $C_\star$  and $C_{\star \star}$  and for any $s \in \calS$, there exists  a $q \times q$ positive-definite matrix $\B(s)$ such that for any $s' \in \calS$,
\begin{align*}
& \left\| \phi(\map(s')) - \phi(\map(s)) - \B(s) (s'-s) \right\| \leq C_\star \|s'-s\|^2,  \\
& |\pscal{s'-s}{\phi(\map(s')) - \phi(\map(s)) }| \leq C_{\star \star} \|s'-s\|^2.
\end{align*}
\textit{(ii)} There exist $0< v_{\min} \leq v_{\max}$ such that for any $s \in \calS$ and
$u \in \rset^q$, $ v_{\min} \|u\|^2 \leq u^\top \B(s) u \leq v_{\max} \| u\|^2$. 
\end{assumption} 
{We verify \Cref{hyp:DL2} for a few examples, instances of  the general settings described in \Cref{ex:gdt} and  \Cref{ex:EM}.  See the discussions in \Cref{sec:checkDL2examples}.}

\paragraph{The projection.} Finally, we proceed to specify the required assumptions on the projection in \Cref{alg:fedss} line \ref{line:upModel_central}. 

\begin{assumption}\label{assumption:projection} \textit{(i)} $\calS$ is a closed convex subset of $\rset^q$. \textit{(ii)} Choose $\B_t \eqdef \B(\hatS_t)$ in  \QSMM, where $\B$ is given by \Cref{hyp:DL2}.
\end{assumption}
To facilitate our discussions later, we set 
\begin{equation} \label{eq:projector:Pi} 
\Pi(\hat{s}, \B) \eqdef \mathrm{argmin}_{s \in \calS} 
(s-\hat{s})^\top \B (s- \hat{s}), \qquad \hat{s} \in
\rset^q; \qquad \|u\|_\B \eqdef \sqrt{u^\top \B u}.
\end{equation}

\paragraph{Finite time analysis.} We now provide an explicit finite time analysis convergence of \QSMM.

To quantify the convergence of the latter, we concentrate on the control of
\begin{equation*}
    \barerror_{t+1} \eqdef \!\!  \frac{\|\Pi( \hatS_t + \pas_{t+1} \mf(\hatS_t), \B_t) - \hatS_t \|^2_{\B_t}}{ \pas_{t+1}^2}
\end{equation*}
which is equal to $\| \mf(\hatS_t)\|^2_{\B_t}$ when $\Pi(s,\B) = s$ {and is lower bounded by $v_{\min} \| \mf(\hatS_t) \|^2$ under \Cref{hyp:DL2}}; $\barerror_{t+1}$ measures how far the {current iterate $\hatS_t$} is from the set $\{s \in \calS: \mf(s) = 0 \}$. {Two cases are considered in turn: first, the case when  there is no compression ($\omega=0$ in \Cref{assumption:URVB}) and the full participation regime is considered ($\proba =1$ in \Cref{assumption:PP}); second, the general case when $\omega >0$ or $\proba \in \ooint{0,1}$.} The proof of \Cref{theo:main}  is in \Cref{sec:proof:maintheo}.

\begin{theorem}\label{theo:main}
Assume \Cref{hyp:objective:C1,hyp:surrogate:convexC1,hyp:oracle,assumption:URVB,assumption:PP,hyp:lipschitz,hyp:DL2,assumption:projection}. Let $\{\hatS_t, t \geq 0
\}$ be the sequence given by \QSMM.  Set 
\[
\omega_\proba \eqdef \omega + (1+\omega) \frac{1-\proba}{\proba}, \quad \bar C \eqdef   C_{\star \star} + \sup_{\calS}\| \mf(\cdot) \| \, 
\, C_\star, \quad  \sigma^2 \eqdef n \sum_{i=1}^n \mu_i^2 \sigma_i^2,
\quad L^2 \eqdef n \sum_{i=1}^n \mu_i^2 L_i^2. 
\]
\textit{(A)} Case $\omega_\proba =0$.  Run \QSMM\ with    $\alpha \in \ccint{0,1}$ and step sizes $\pas_{t+1}  \in \ooint{0, v_{\min}/ (2 \bar C)}$.  For any $T \in \nset_{>0}$
\begin{align*}
\sum_{t=1}^T   \frac{\pas_{t}}{4} \left(  1  -
\frac{2 \bar C}{  v_{\min}} \pas_{t} \right)  \PE\left[\barerror_{t} \right] &  \leq 
\PE\left[ (f+g)(\map(\hatS_{0})) 
- \min (f+g) \right] +    v_{\max}   \frac{\sigma^2}{n}  \sum_{t=1}^T \pas_t. 
\end{align*}
\noindent \textit{(B)} Case $\omega_\proba >0$.  Run \QSMM\ with  $\alpha \in \ocint{0, 1/(1+\omega_\proba)}$  and step sizes satisfying
\[
\pas_{t+1} \leq \pas_t  \leq \pas_1 \qquad 1  -
  \frac{2 \bar C}{v_{\min}}\pas_{t}   -  64 \frac{\omega_\proba}{\alpha^2} \frac{v_{\max}}{ v_{\min}}  \frac{L^2}{n}
 \pas_{t}^2 > 0, \qquad \pas_1\leq \frac{\alpha}{4 \sqrt{\omega_\proba}} \frac{\sqrt{v_{\min}}}{\sqrt{v_{\max}}} \frac{\sqrt{n}}{L}.
\]
For any $T \in \nset_{>0}$, 
\begin{align}
 \sum_{t=1}^T  & \frac{\pas_{t}}{4} \left(  1  -
  \frac{2 \bar C}{v_{\min}}\pas_{t}   -  64 \frac{\omega_\proba}{\alpha^2} \frac{v_{\max}}{ v_{\min}}  \frac{L^2}{n}
 \pas_{t}^2  \right)  \PE\left[\barerror_{t} \right]  \nonumber \\
 &\leq 
\PE\left[ (f+g)(\map(\hatS_{0}))   - \min (f+g) \right] +  \frac{4 \omega_\proba v_{\max}}{ \alpha} \pas_1  \sum_{i=1}^n \mu_i^2 \PE\left[ \|V_{0,i} - \mf_i(\hatS_0)\|^2\right] \nonumber \\
  &+  2  v_{\max}   (1+5\omega_\proba)    \frac{\sigma^2}{n}  \sum_{t=1}^T \pas_t.  \label{eq:maintheo:sigma2}
\end{align} 
  \end{theorem}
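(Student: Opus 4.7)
The plan is a Lyapunov descent analysis based on $(f+g)(\map(\hatS_t))$, augmented in Case (B) by the control-variate drift $\Xi_t \eqdef \sum_{i=1}^n \mu_i^2 \PE[\|V_{t,i} - \mf_i(\hatS_t)\|^2]$ in order to absorb the extra randomness coming from compression and partial participation. I would first derive a deterministic one-step descent inequality, then take conditional expectations to expose oracle/compression/PP noise, then telescope over $t = 1, \ldots, T$ after choosing step sizes that keep all coefficients nonnegative.

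For the deterministic descent, the core identity comes from the surrogate structure. Starting from the {\bf {\sf MM-1}} bound aggregated over workers as in \eqref{eq:agg_in_S}, and using the optimality condition that characterizes $\map$ (as in the proof of \Cref{prop:equiv:theta:s}), one obtains, for any $s, s' \in \calS$ with $\map(s), \map(s') \in \inter \paramset$,
\begin{equation*}
(f+g)(\map(s')) \leq (f+g)(\map(s)) + \pscal{s' - s - \mf(s)}{\phi(\map(s')) - \phi(\map(s))}.
\end{equation*}
Substituting $\phi(\map(s')) - \phi(\map(s)) = \B(s)(s'-s) + r(s,s')$ with $\|r(s,s')\|$ controlled quadratically by $C_\star$ and the cross-term controlled by $C_{\star\star}$ via \Cref{hyp:DL2}(i), and specializing to $(s,s') = (\hatS_t, \hatS_{t+1})$ with $\B_t = \B(\hatS_t)$, yields
\begin{equation*}
(f+g)(\map(\hatS_{t+1})) \leq (f+g)(\map(\hatS_t)) + \pscal{\hatS_{t+1} - \hatS_t - \mf(\hatS_t)}{\B_t(\hatS_{t+1} - \hatS_t)} + \bar C \|\hatS_{t+1} - \hatS_t\|^2.
\end{equation*}
Using $\hatS_{t+1} = \Pi(\hatS_t + \pas_{t+1} H_{t+1}, \B_t)$ together with the projection inequality $\pscal{\hatS_{t+1} - (\hatS_t + \pas_{t+1} H_{t+1})}{\B_t(\hatS_{t+1} - u)} \leq 0$ evaluated at $u = \Pi(\hatS_t + \pas_{t+1}\mf(\hatS_t), \B_t)$, the cross-term splits into the target $-(\pas_{t+1}/4)\barerror_{t+1}$ plus an error quadratic in $H_{t+1} - \mf(\hatS_t)$.

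Taking $\CPE{\cdot}{\F_t}$ then requires controlling $\CPE{\|H_{t+1} - \mf(\hatS_t)\|^2}{\F_t}$. By \Cref{hyp:oracle}, unbiasedness of $H_{t+1}$ for $\mf(\hatS_t)$ follows from the combination of control variates and the unbiasedness of the oracles and of $\Q$; treating partial participation as an extra compression layer (\Cref{sec:PP2fullP}), the variance splits into an oracle part $\sigma^2/n$, a term $\omega_\proba (\sigma^2/n)$, a drift term $\omega_\proba\,(L^2/n)\,\|\hatS_t - \hatS_{t-1}\|^2$ type (via \Cref{hyp:lipschitz}(i) applied to each $\mf_i$), and the heterogeneity contribution $\omega_\proba \Xi_t$. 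In Case (A), $\omega_\proba = 0$ kills all but the $\sigma^2/n$ term, and telescoping under $\pas_t < v_{\min}/(2\bar C)$ gives the first bound directly.

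The main obstacle is Case (B): controlling $\Xi_t$. From line \ref{line:upV} of \Cref{alg:fedss}, the fact that $\Smem_{t+1,i}$ is an unbiased oracle of $\mf_i(\hatS_t) + \hatS_t$, and the $L_i$-Lipschitzness of $\mf_i$, one expects a recursion of the form
\begin{equation*}
\PE[\Xi_{t+1}] \leq \bigl(1 - \tfrac{\alpha}{2}\bigr)\PE[\Xi_t] + C_1 \alpha\,\tfrac{L^2}{n}\,\pas_{t+1}^2\, \PE[\barerror_{t+1}] + C_2\, \alpha\, \tfrac{\sigma^2}{n},
\end{equation*}
which is precisely what forces $\alpha \leq 1/(1+\omega_\proba)$. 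Combining this with the descent inequality into the augmented Lyapunov function $\calL_t \eqdef \PE[(f+g)(\map(\hatS_t))] + \lambda \pas_1 \PE[\Xi_t]$ for $\lambda$ of order $\omega_\proba v_{\max}/\alpha$, the Lipschitz-coupled term is absorbed into $-(\pas_{t+1}/4)\barerror_{t+1}$ provided $\pas_1 \leq \alpha \sqrt{v_{\min} n}/(4\sqrt{\omega_\proba v_{\max}}\,L)$ (and the stated quadratic condition on $\pas_t$). Telescoping from $t=0$ to $T-1$ and rearranging, while keeping $\lambda \pas_1 \PE[\Xi_0]$ on the right-hand side, yields \eqref{eq:maintheo:sigma2}; the main care is in tracking the constants so that the $\pas_t \barerror_t$ coefficient on the left-hand side matches the statement and the $\pas_1 \sum_i \mu_i^2 \PE[\|V_{0,i} - \mf_i(\hatS_0)\|^2]$ term appears with the declared prefactor $4 \omega_\proba v_{\max}/\alpha$.
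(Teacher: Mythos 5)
Your proposal follows essentially the same route as the paper's proof: the same surrogate descent inequality $(f+g)(\map(s'))\leq (f+g)(\map(s))+\pscal{s'-s-\mf(s)}{\phi(\map(s'))-\phi(\map(s))}$ combined with \Cref{hyp:DL2} and the $\B_t$-weighted projection lemma to isolate $-(\pas_{t+1}/4)\barerror_{t+1}$, the same unbiasedness/variance decomposition of $H_{t+1}$ with partial participation folded into an $\omega_\proba$-compression, and the same augmented Lyapunov function coupling $(f+g)\circ\map$ with the control-variate drift $G_t=\sum_i\mu_i^2\|V_{t,i}-\mf_i(\hatS_t)\|^2$. The one bookkeeping slip is the factor $C_1\alpha$ in your drift recursion: the Peter--Paul split with contraction $\beta(1-\alpha)\leq 1-\alpha/2$ forces $\beta/(\beta-1)\lesssim 2/\alpha$, so the $\pas_{t+1}^2 (L^2/n)\,\barerror_{t+1}$ term enters with a $1/\alpha$ (not $\alpha$) prefactor, which is exactly what produces the $\omega_\proba/\alpha^2$ factor in the stated step-size condition and in the left-hand-side coefficient of \eqref{eq:maintheo:sigma2}.
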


In the case the step sizes are constant ($\pas_t = \pas$ for all $t$), we obtain the following corollary. Here, $\pas$ is chosen small enough to ensure that the LSH is positive.
\begin{corollary}[of \Cref{theo:main}, when $\omega_{\proba}$>0.]  Let $\pas >0$ be such that 
\[
 1  -
  \frac{2 \bar C}{v_{\min}}\pas   -  64 \frac{\omega_\proba}{\alpha^2} \frac{v_{\max}}{ v_{\min}}  \frac{L^2}{n}
 \pas^2 \geq \frac{1}{2}.
\]
Let $\tau$ be a uniform random variable on $\{1, \cdots, T\}$, independent of the path of the algorithm.
    \begin{align*} \PE\left[\barerror_{\tau} \right] & \leq 
\frac{8}{T \pas}  \PE\left[ (f+g)(\map(\hatS_{0}))   - \min (f+g) \right]  +  \frac{32 \omega_\proba v_{\max}}{T\alpha}   \sum_{i=1}^n \mu_i^2 \PE\left[ \|V_{0,i} - \mf_i(\hatS_0)\|^2\right]  \\
& +   16 v_{\max}   (1+5\omega_\proba)    \frac{\sigma^2}{n}.
\end{align*} 

\end{corollary}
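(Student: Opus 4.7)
The plan is to derive the corollary as an immediate consequence of \Cref{theo:main}(B) by specializing to constant step sizes $\pas_t = \pas$ and then converting the weighted sum into an expectation over the uniform index $\tau$. Since \Cref{theo:main}(B) already gives the essential inequality, the work here is purely bookkeeping: applying the step-size condition to lower-bound the coefficient and dividing through.

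First, I would set $\pas_t = \pas$ for all $t \in \{1,\dots,T\}$ in the conclusion of \Cref{theo:main}(B). Observe that the monotonicity requirement $\pas_{t+1} \leq \pas_t \leq \pas_1$ is trivially satisfied, and the remaining step-size conditions
\[
1 - \frac{2\bar C}{v_{\min}}\pas - 64\frac{\omega_\proba}{\alpha^2}\frac{v_{\max}}{v_{\min}}\frac{L^2}{n}\pas^2 > 0, \qquad \pas \leq \frac{\alpha}{4\sqrt{\omega_\proba}}\frac{\sqrt{v_{\min}}}{\sqrt{v_{\max}}}\frac{\sqrt{n}}{L},
\]
are implied by the hypothesis $1 - \frac{2\bar C}{v_{\min}}\pas - 64\frac{\omega_\proba}{\alpha^2}\frac{v_{\max}}{v_{\min}}\frac{L^2}{n}\pas^2 \geq \tfrac12$ of the corollary (the second inequality follows from the first by a rearrangement of the quadratic term).

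Second, using this $\tfrac12$ lower bound on the bracketed factor, the left-hand side of \eqref{eq:maintheo:sigma2} becomes at least
\[
\sum_{t=1}^{T} \frac{\pas}{8}\, \PE[\barerror_t],
\]
while on the right-hand side, $\sum_{t=1}^T \pas_t = T\pas$. Multiplying the resulting inequality by $8/(T\pas)$ yields
\[
\frac{1}{T}\sum_{t=1}^T \PE[\barerror_t] \leq \frac{8}{T\pas}\PE[(f+g)(\map(\hatS_0)) - \min(f+g)] + \frac{32\omega_\proba v_{\max}}{T\alpha}\sum_{i=1}^n \mu_i^2 \PE[\|V_{0,i}-\mf_i(\hatS_0)\|^2] + 16 v_{\max}(1+5\omega_\proba)\frac{\sigma^2}{n},
\]
where the last constant drops the factor $\pas$ against $1/\pas$ from the normalization.

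Finally, since $\tau$ is uniform on $\{1,\dots,T\}$ and independent of the algorithm's trajectory, the tower property gives $\PE[\barerror_\tau] = \frac{1}{T}\sum_{t=1}^T \PE[\barerror_t]$, which matches the claim exactly. There is no genuine obstacle here; the only point requiring care is verifying that the corollary's single step-size hypothesis indeed implies both conditions demanded by \Cref{theo:main}(B), which follows because the stated lower bound on $1 - (2\bar C/v_{\min})\pas - 64(\omega_\proba/\alpha^2)(v_{\max}/v_{\min})(L^2/n)\pas^2$ forces $\pas$ to be small enough to satisfy the norm bound on $\pas_1$ up to the constant $1/4$ built into \Cref{theo:main}(B).
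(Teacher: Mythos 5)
Your proposal is correct and follows exactly the route the paper intends (the corollary is stated without a separate proof, as an immediate consequence of \Cref{theo:main}(B) with constant step sizes): lower-bounding the bracket by $\tfrac12$, dividing by $T\pas/8$, and identifying $\tfrac1T\sum_{t=1}^T\PE[\barerror_t]$ with $\PE[\barerror_\tau]$ gives precisely the stated constants. Your side-check that the hypothesis forces $64\,\omega_\proba\,\alpha^{-2}(v_{\max}/v_{\min})(L^2/n)\pas^2\le\tfrac12$, hence $\pas\le\alpha/(8\sqrt{2}\,\sqrt{\omega_\proba})\cdot\sqrt{v_{\min}/v_{\max}}\cdot\sqrt{n}/L$, which is stronger than the theorem's bound with constant $1/4$, is also accurate.
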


\textit{(Parameters and constants).}
 Constants $\sigma^2$ (resp. $L^2$) are defined so that when $\mu_i =1/n$ (i.e., in batch learning with equal number of data at each worker, or in online learning), and $\sigma_i$ (resp. $L_i$) are constant, then $\sigma=\sigma_i$ (resp. $L=L_i$).
 The step sizes $\pas_t$ have to be chosen smaller than an absolute constant depending on the regularity of the problem and the compression factor. \Cref{theo:main} does not require any convexity assumption on $f$ in {\sf MM-1}  though {\sf MM-2}  is satisfied when the surrogate is strongly convex even if not explicitly assumed.

  \textit{(Convergence guarantee).}
  Controlling the quantity at a random stopping time $\tau$, uniformly chosen over all iterations is classical to obtain convergence rates in non-convex optimization~\citep{ghadimi:lan:2013}.  The bound can be decomposed as the sum of an initial quantity, divided by the number of iterations, and a noise term, divided by the number of workers and the batch size $\lbatch$.

  \textit{(Heterogeneity).}
   The result is valid without any assumption on the data heterogeneity ($\pi_i\neq\pi_j$). In the batch learning case, a natural choice for the initial control variates is $V_{0,i}= \mf_i(\hat S_0)$  (at the cost of evaluating the expectation once).  Complexity guarantee  is then not affected (thus robust to) by the heterogeneity.  

  \textit{(Case $\alpha=0$).} When $\omega_\proba=0$, we can choose $\alpha=0$: control variates play no role even in the heterogeneity setting. 
  When $\omega_\proba>0$, the bound is obtained under the assumption that $\alpha>0$. Finite time analysis can be run in the case $\omega_\proba>0, \alpha=0$ but they provide bad conditions on the step sizes $\pas_t$ and poor upper bounds (see \Cref{sec:theowithnullalpha}).   We also illustrate numerically that, in the heterogeneity setting,  $\alpha=0$ may lead to a worsened convergence rate, see \Cref{fig:dict_learn:alpha} in \Cref{sec:dict_learn}.

\paragraph{Challenges in the analysis.} While \Cref{theo:main} provides a first result, the Federated MM framework that we introduce raises numerous theoretical challenges.
\begin{itemize}[noitemsep,leftmargin=*,wide,topsep=0pt]
    \item First and foremost,  the surrogate space $\mathcal S$ can be  constrained (see, e.g., \Cref{ex:SurrogateCvx,ex:EM} in \Cref{sec:checkDL2examples}). The updates resulting from the aggregation of the workers' contributions  may not stay within $\mathcal S $. We are thus required to perform a projection step. This issue is reinforced by variance reduction schemes, that often result in performing non-convex combinations (e.g., the difference of a candidate surrogate with a control variate). It is also reinforced by the compression step, the choice of the compression operator $\Q$ may be impactful; for example, a scalar quantization of a positive semidefinite matrix is not ensured to be a positive semidefinite matrix. Yet, these variance reduction schemes and compression steps are necessary to reduce the impact of heterogeneity and the computational cost, and helpful to accelerate convergence. 
    \item In order to guarantee that $\map(\hatS_{t+1})$ is well defined, \QSMM\ requires a projection of the point $\hatS_t + \pas_{t+1} H_{t+1}$ on $\calS$, yielding $\hatS_{t+1}$. Quite often, this projection step is applied to ensure numerical stability but is not taken into account when deriving a theoretical analysis of the algorithm. On the contrary, the projection is part of our convergence analysis provided in \Cref{theo:main}. Nevertheless, our current analysis requires the projection to be performed w.r.t. the geometry of the matrix  $\B(\hatS_t)$ which can be interpreted as a local approximation of the geometry of the mapping $s\mapsto \phi(\map(s))$ - see \Cref{hyp:DL2}. 
    \item From an analytical standpoint,   the projection introduces bias to the oracle (roughly speaking, $\PE[\Pi(\hat S,\B)]\neq \Pi (\PE[\hat S], \B)$). This bias makes the analysis more complicated  and degrades the convergence rate (the limit variance obtained in \Cref{theo:main} actually resembles the one in \citep[Eq.~(41)]{gorbunov2021marina} for biased online non-convex federated optimization). Moreover, the (convex) space $\mathcal S$ can also be considered as a Riemannian manifold: the projections could also be performed w.r.t.~this geometry.

\end{itemize}

\section{Dictionary Learning Experiments}\label{sec:dict_learn}\label{sec:experiments}

We apply the  \QSMM~algorithm to the following stochastic dictionary learning problem 
\begin{equation} \label{eq:dictlearn}
\textstyle \argmin_{ \theta \in \rset^{p \times K} }~\frac{1}{n}\sum_{i=1}^n \PE_{\pi_i} \left[ \min_{ h \in \rset^{K} } \left\{ \frac{1}{2} \| \sample - \theta h \|^2 + \lambda \| h \|_1 \right\} + \eta   \norm{\param}^2  \right],
\end{equation}
 where $\sample \in \rset^p$ is the random observation, $\|\cdot\|_1$ denotes the $L_1$-norm, and $\lambda, \eta > 0$ are regularization parameters.  As detailed in \Cref{subsec:ssmmeg}, this problem fits our variational surrogate framework (\Cref{ex:SurrogateCvx}) and \textbf{\QSMM} can be specialized by: (1) The surrogate space for aggregation is set to $\mathcal{S} \eqdef \mathcal{M}_K^+ \times \rset^{p \times K}$;  (2) At each iteration, a participating client receives  $\map(\hatS_t)$ and uses it to compute its local surrogate oracle $\Smem_{t+1,i} = (\Smem_{t+1,i}^{(1)}, \Smem_{t+1,i}^{(2)})$ - see \eqref{eq:MF:tool3}.   

We consider three data settings, two synthetic and one on real data. For the synthetic setting, we consider two scenarios: homogeneous distributions $\pi_i=\pi$ and heterogeneous distributions $\pi_i \neq \pi_j$. Both start by generating a matrix $\theta_*\in \rset^{p\times K}$ such that $\open{\theta_*}_{ij}\sim \ncal\open{0,1}$. Then, we generate the set $\ens{Z_t\eqdef \theta_* h_t}_{t=1}^\text{tot}$, where, for each $t$, $h_t$ is a sparse vector with $20\%$ of randomly selected non-zero entries, distributed as ${\cal N}(0,1)$. For the scenario where clients have homogeneous distributions, we set $\text{tot}=250$ and give each client a copy of the full data. For the heterogeneous case, with $\text{tot} =5000$,  constrained k-means \citep{bradley2000constrained} is used to split $\ens{Z_t}_{t=1}^\text{tot}$ into $n$ equal clusters while maximizing the average distance between different clusters. Then, each client is assigned a unique cluster. For our experiments, we used $n=20$, $K=15, \lambda=0.1$, and $\eta=0.2$.

For the real data, we used the MovieLens 1M dataset \citep{harper2015movielens}, which consists of the movie ratings of $6000$ users for $4000$ movies. We subsample the dataset to have 5000 user ratings for $500$ movies. Thus, we recover a dataset of $5000$ vectors embedded in $\rset^p$, for $p=500$. To generate the federated clients' data, we split the $5000$ vectors, using constrained k-means, among $n=20$ clients and set the data distributions $\pi_i$ for client $i$ to be the uniform distribution over the {$N_i=$} $ 250$ vectors received. Finally, we set the embedding dimension $K$ to $50$. Similar to the synthetic setting, we used $\lambda=0.1$, and $\eta=0.2$.

For all experiments, we used a decaying step size $\gamma_t = \beta/\sqrt{\beta+t}$ with $\beta$ tuned for each algorithm and setting from the range $[0.001, 0.05]$. For the projection step, line (\ref{line:upModel_central}), we set the projection matrix $\B_t$ to the identity. Nonetheless,  note that our theoretical analysis requires another selection of $\B_t$ (see  \Cref{hyp:DL2} and \Cref{app:hyp:DL}).

To evaluate our methodology, we compare it with a naive algorithm, where at step $t+1$, each active agent $\#i$ computes a new surrogate and optimizes it to obtain a new local parameter $\theta_{t+1}^i\in \Theta$. Then, it communicates $\theta^i_{t+1}$ to the server, which aggregates the parameters of active agents to construct $\theta_{t+1}$.  This naive algorithm exactly mirrors \QSMM{}, except that the communications and the server aggregation step occur in the parameter space and not in the surrogate space. 

Besides tracking the loss value in \eqref{eq:dictlearn}, we report two other metrics. First,  we track the normalized surrogate update squared norm: at iteration $t+1$, for \QSMM{}, we report  $\mathcal{E}^s_{t+1} \eqdef \nicefrac{\norm{\hat{S}_{t+1}-\hat{S}_t}^2}{\gamma_{t+1}^2}$. Meanwhile, for the naive algorithm aggregating in the parameter space, we report  $\mathcal{E}^{s,p}_{t+1} \eqdef \nicefrac{\norm{\tcal(\theta_{t+1})-\tcal(\theta_t)}^2}{\gamma_{t+1}^2}$, where $\tcal\open{\theta} \eqdef \nicefrac{1}{n}\sum_{i\in[n]}\PE_{\pi_i} \left[ \barS(\sample, \theta) \right]$. Similarly, we also track the parameter space update squared norm. In particular, for the algorithm aggregating in the parameter space, we track $\mathcal{E}^p_{t+1}\eqdef\nicefrac{\norm{\theta_{t+1}-\theta_{t}}^2}{\gamma_{t+1}^2}$. Meanwhile, for \QSMM{}, we track $\mathcal{E}^{p,s}_{t+1} \eqdef \nicefrac{\norm{\map(\hatS_{t+1})-\map(\hatS_{t})}^2}{\gamma_{t+1}^2}$. For all the figures, we report the average over $10$ independent runs, obtained with different seeds.

\paragraph{Impact of Aggregation Space:} Our first set of experiments is reported in \Cref{fig:new_matfac_parti}. We set the number of participating clients at each iteration to be $10$ ($\proba = 0.5$). Each active client computes $\Smem_{t+1,i}$ using $50$ examples, sampled at random among the local examples. We considered a setting where the communication between the clients and the server is compressed using quantization with $8$ bits per coordinate. We set the control variates stepsize to $\alpha=0.01$.

First, for the objective value \eqref{eq:dictlearn}, we observe that, when using \QSMM{}, it monotonically decays for all settings. Meanwhile, for the naive algorithm aggregating in the parameter space, it diverges in the synthetic heterogeneous setting. In addition, \QSMM{} achieves better objective value across all settings of data.

For $\mathcal{E}^{p}_{t+1}$ and $\mathcal{E}^{p,s}_{t+1}$, both the naive algorithm and \QSMM{} decay.   For the synthetic data, we observe faster decay for the algorithm designed to aggregate in the parameter space. Meanwhile, for the MovieLens dataset, \QSMM{} initially achieves faster decay but is eventually beaten by the naive algorithm.

Finally, for $\mathcal{E}^{s}_{t+1}$ and $\mathcal{E}^{s,p}_{t+1}$, \QSMM{} decays across all datasets. Meanwhile, the naive algorithm diverges on synthetic data. In addition, for MovieLens dataset, the naive algorithm initially decays steadily but becomes more unstable as the number of communication rounds increases. We note that $\mathcal{E}^{s}_{t+1}$ mirrors the quantity controlled by \Cref{theo:main} (up to a different projection geometry). Thus, the experimental results mirror the theoretical conclusions on convergence up to an error term. In addition, we highlight that even though the naive algorithm converges w.r.t. $\mathcal{E}^{p}_{t+1}$, it diverges in the surrogate space (see $\mathcal{E}^{s,p}_{t+1}$).

\begin{figure}[htb]
  \centering  \includegraphics[width=\textwidth]{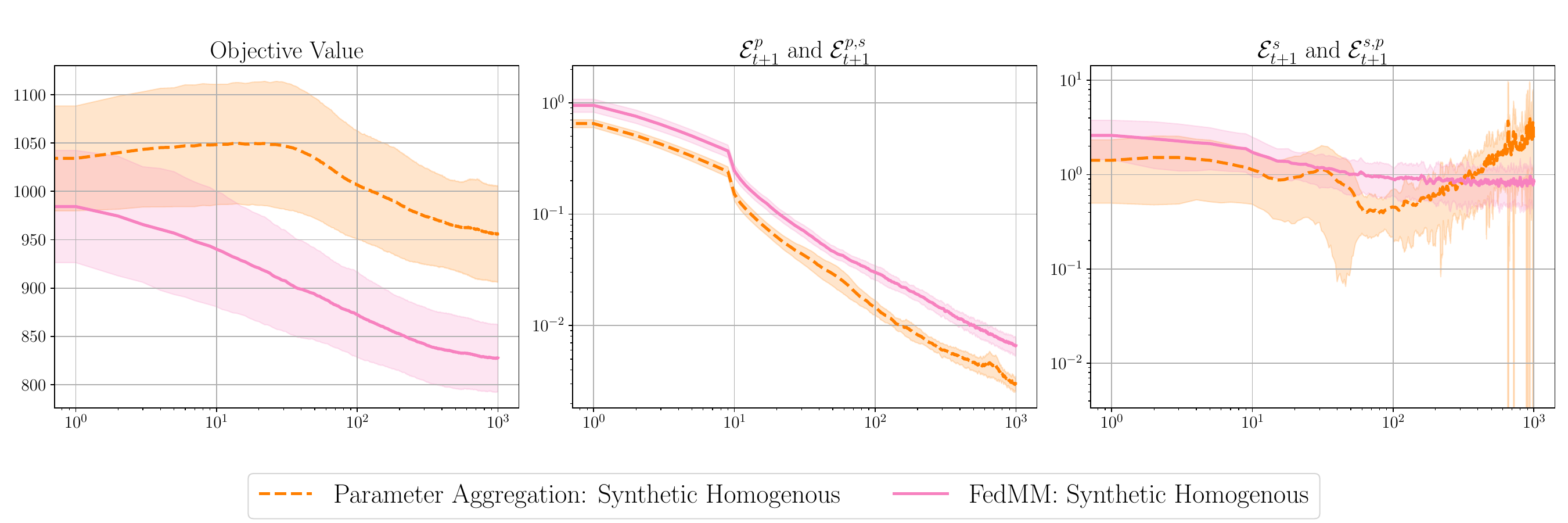}
  
  \includegraphics[width=\textwidth]{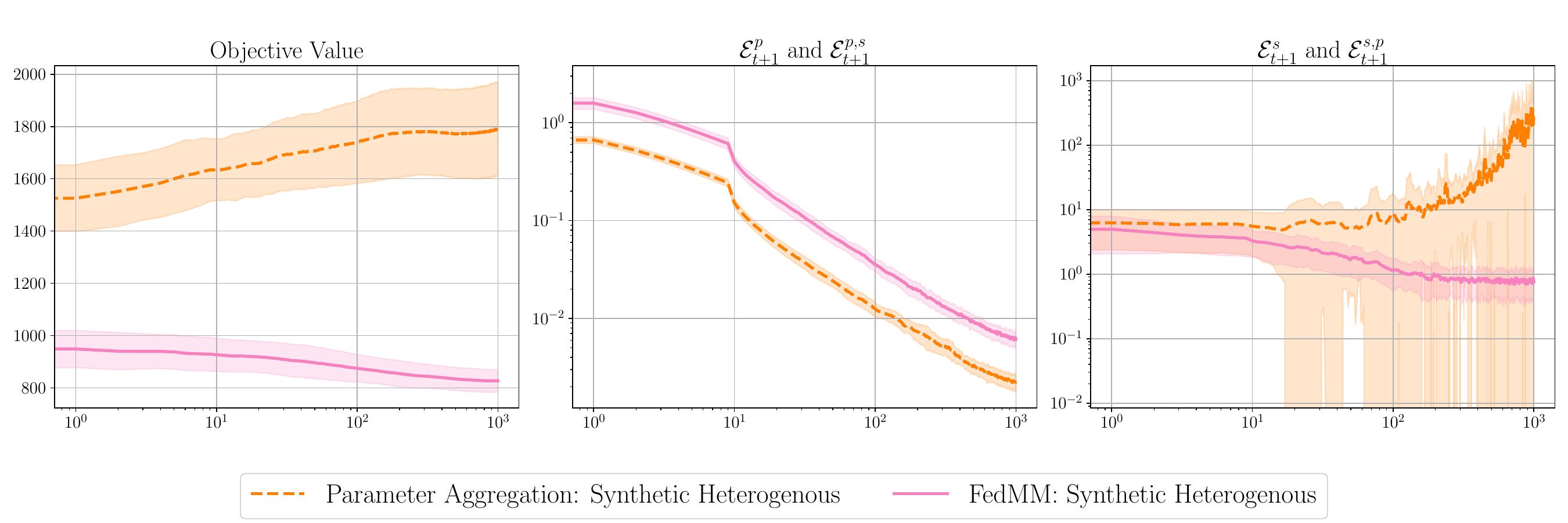}

   \includegraphics[width=\textwidth]{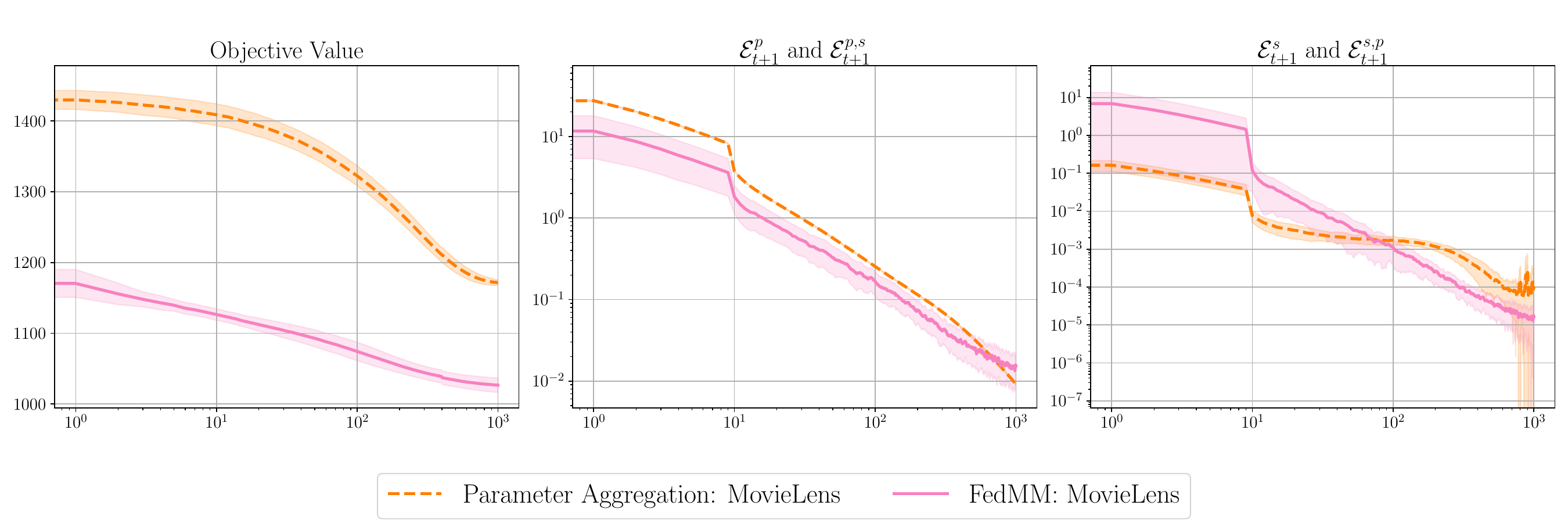}

 \caption{\textbf{Impact of Aggregation Space.} The columns (left to right) display the objective value, parameter update size ($\mathcal{E}_t^{p}$ and $\mathcal{E}_t^{p, s}$), and surrogate update size ($\mathcal{E}_t^{s}$ and $\mathcal{E}_t^{s,p}$) as a function of communication rounds, $t\in 1,\ldots, T_{\mathrm{max}}$. The rows correspond to different data settings (from top to bottom): synthetic homogeneous, synthetic heterogeneous, and MovieLens 1M. Results are averaged over $10$ runs, with the shaded regions indicating one standard deviation of the reported value across the $10$ runs. \label{fig:new_matfac_parti}}
\end{figure}

\paragraph{Impact of Control Variates:}   Our second set of experiments aimed to investigate the effect of control variates in mitigating the consequences of data heterogeneity and partial participation; they are presented in \Cref{fig:dict_learn:alpha}. In this set, we only run $\QSMM$. To investigate and isolate the effect of control variates, we initialize the control variates to zero, i.e. $V_{0,i} = 0$ and set their update step size to $\alpha\in\ens{0,0.01}$. As a result, when $\alpha=0$, control variates are not used. At each step $t$, $10$ agents are active, i.e. $\proba=0.5$. To remove the effect of stochasticity, apart from partial participation, we make each active agent $\#i$  use all their local examples, i.e. $\Smem_{t+1,i} = \PE_{\pi_i}\left[ \barS(\sample, \param_t) \right]$.

Across all data settings, we observe no effect of using control variates on the objective value.  For the synthetic homogeneous dataset and $\alpha=0.01$, given that all the clients share the same data and that there is no communication compression, the control variates exactly cancel out and result in no effect. Meanwhile, for the heterogeneous synthetic data and the MovieLens dataset, we observe that the use of control variates improves the rate of decay of both $\mathcal{E}_{t+1}^{s}$ and $\mathcal{E}_{t+1}^{p,s}$ allowing them to reach significantly lower values.  

\begin{figure}[htb]
  \centering  
  \includegraphics[width=\textwidth]{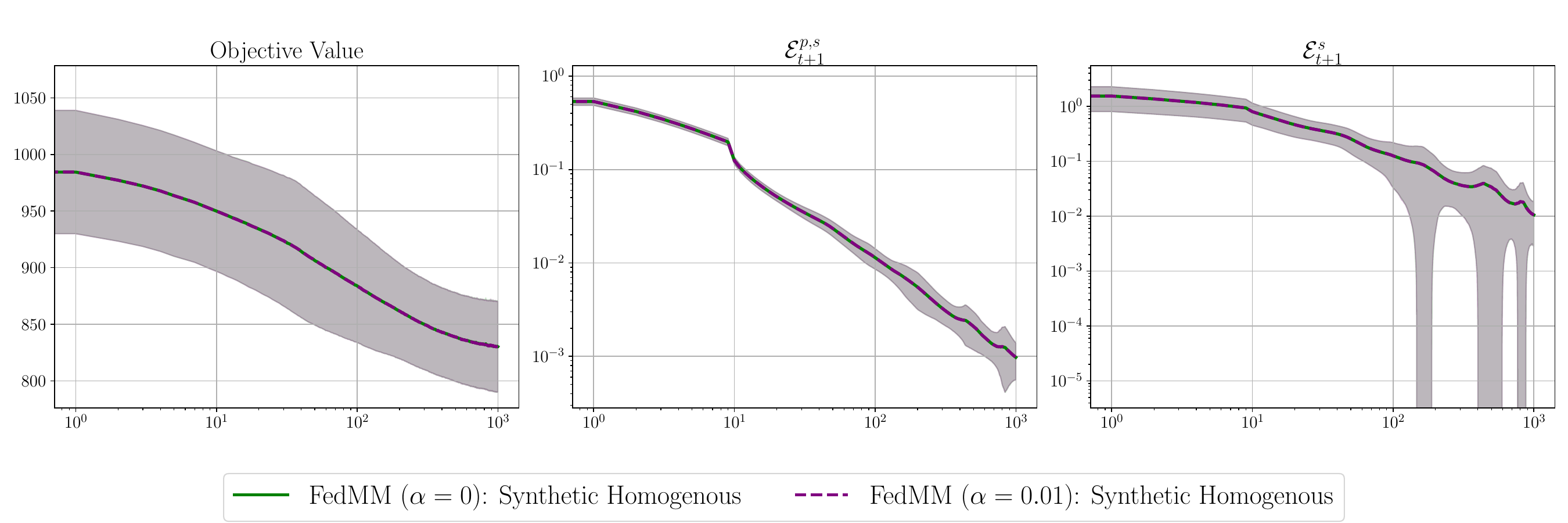}
  \includegraphics[width=\textwidth]{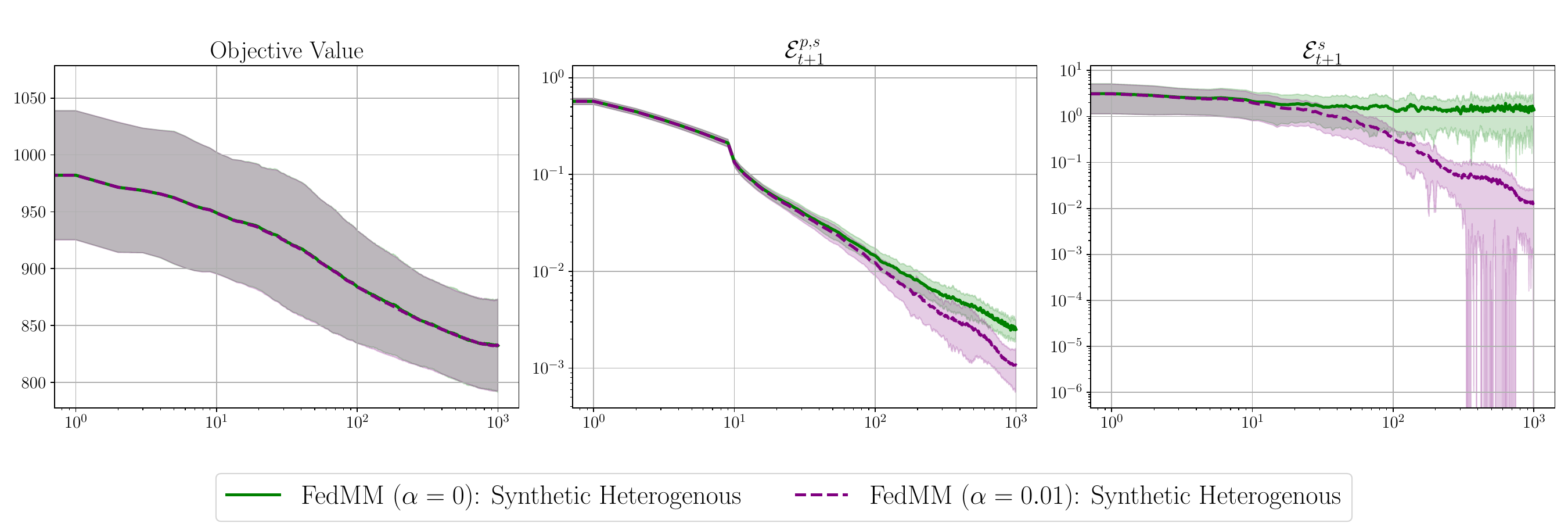}
   \includegraphics[width=\textwidth]{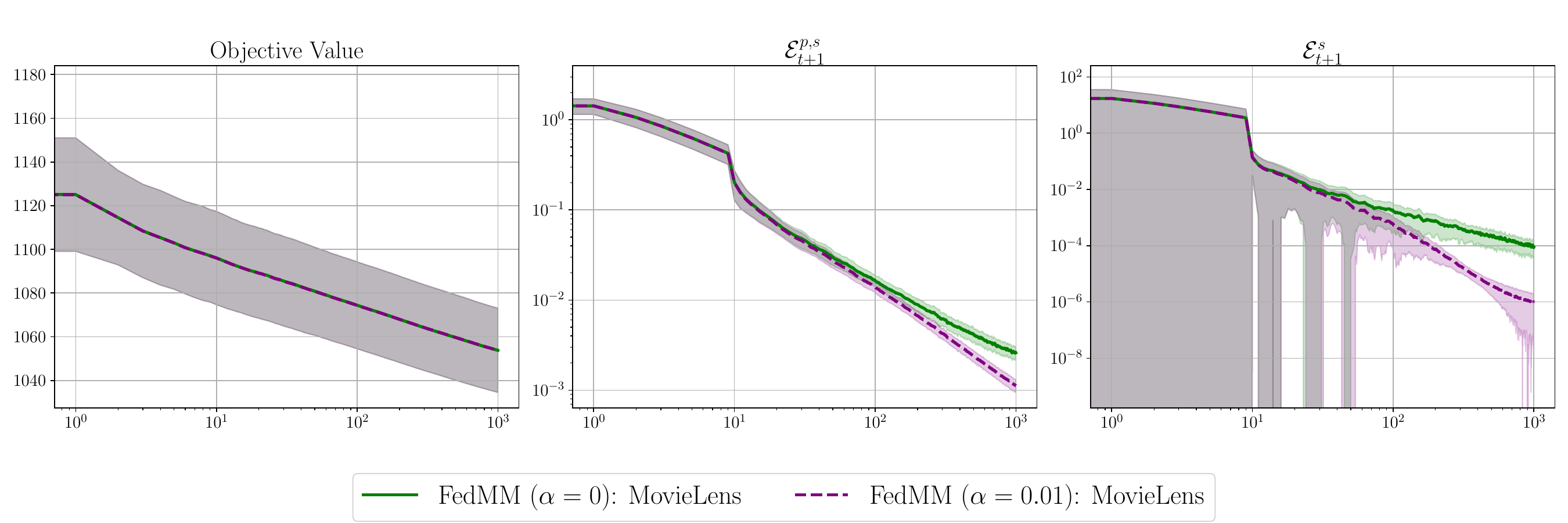}

    \caption{\textbf{Impact of Control Variates.} The columns (left to right) display the objective value, parameter update size ($\mathcal{E}_t^{p,s}$), and surrogate update size ($\mathcal{E}_t^{s}$) as a function of communication rounds, $t\in 1,\ldots, T_{\mathrm{max}}$. The rows correspond to different data settings (from top to bottom): synthetic homogeneous, synthetic heterogeneous, and MovieLens 1M. Results are averaged over $10$ runs, with the shaded regions indicating one standard deviation of the reported value across the $10$ runs. }
  \label{fig:dict_learn:alpha}
\end{figure}

\section{Federated Optimal Transport Map with an MM Approach}\label{sec:ot}

We now move beyond the scope of the framework characterized by {\MM12} to show how our methodology of aggregating in the surrogate space may be generalized beyond linearly parameterized surrogates. In particular, to illustrate how our methodology may be used to design novel federated algorithms,  we consider a federated version of the problem of computing parametric optimal transport (OT) maps between two measures according to the framework of \cite{korotin2019wasserstein}. Before presenting the federated extension and our MM-based approach, we first describe the problem in the centralized setting. Consider two measures $\mathcal{P}$ and $\mathcal{Q}$ on $\xcal \subset \rset^d$ and denote by $\mathbb{M}(\xcal)$ the set of measurable functions on $\xcal$  taking values in $\xcal$, the Wasserstein-2 transport map is defined as
\begin{equation}\label{eq:ot_map_primal}
    m^* \in  \mathop{\argmin}_{\substack{\ens{m\circ \pcal = \qcal}\\ m \in \mathbb{M}(\xcal)}}\expec_{\pcal}\ens{\norm{X-m(X)}^2},
\end{equation}
where the condition $\ens{m\circ \pcal = \qcal}$ indicates that the push-forward measure obtained by applying $m$ to $\pcal$ is equal to $\qcal$, i.e. for any Borel set $B\subseteq \xcal$, it holds that $\qcal(B) = \pcal\open{m^{-1}(B)}$. In practice, $m^*$ is computed by first solving the Kantarovich dual problem 
\begin{equation}
    f^* = \mathop{\argmax}_{f\in \text{CVX}(\xcal)}-\expec_\pcal\closed{f(X)} - \expec_\qcal\closed{f^c(Y)},
\end{equation}
where $\text{CVX}(\xcal)$ is the set of convex functions on $\xcal$ and $f^c$ is the convex conjugate of $f$. Under the condition that $\pcal$ and $\qcal$ admit continuous probability density functions, it is known that the solution of the primal formulation may be constructed from the dual solution as $ m^*=\nabla f^*$   \citep{villani2021topics}. In addition, the inverse map $(m^*)^{-1}$ satisfies $(m^*)^{-1} = \nabla\open{f^*}^c$. Given the structure of the problem,  \cite{makkuva20a_icnn_ot} aimed to find the optimal transport map as the gradient of an Input Convex Neural Network (ICNN) \citep{amos17bICNN}, leading to the problem 
\begin{equation*}
    \omega^*\in \mathop{\argmin}_{\omega\in \Theta} \expec_{\pcal, \qcal}\closed{f_\omega(X)+\max_{x\in \xcal}  \{ \inner{x}{Y}-f_\omega(x)\}},
\end{equation*}
where $\expec_{\pcal,\qcal}$ indicates that $X\sim \pcal$ and $Y\sim \qcal$. Here, $\Theta$ represents a parametric family of ICNN. Then, the transport map $m^*$ can be approximated using the map $x\mapsto \nabla_{\!\! x} f_{\omega^*}(x)$. \cite{korotin2019wasserstein} proposes to relax the min-max formulation by using a second ICNN to parameterize the convex conjugate leading to the minimization problem 
\begin{equation}\label{eq:two_icnn}
    \omega^*, \theta^* \in \mathop{\argmin}_{\omega, \theta\in \Theta^2} \ens{\expec_{\pcal}\closed{f_\omega(X)}+\expec_{\qcal}\closed{\inner{\nabla_{\!\! y} f_\theta(Y)}{Y}-f_\omega(\nabla_{\!\! y}f_\theta(Y))}}, 
\end{equation}
where we used the fact that the convex conjugate $f^c$ of a strictly convex function $f$ satisfies 
\begin{equation*}
    \nabla f^c(u) = \argmax_x (\langle u, x \rangle - f(x)).
\end{equation*}
However, a solution to \eqref{eq:two_icnn} may not necessarily guarantee that $f_\theta$ is the convex conjugate of $f_\omega$. Thus, \cite{korotin2019wasserstein} further leverage the structure of the problem and add the regularizer 
\begin{equation*}
       R_\mathcal{Q}(\omega,\theta) \eqdef \expec_{ \mathcal{Q}}\closed{\norm{\nabla f_\omega\circ\nabla {f_\theta}(Y)-Y}^2} ,
\end{equation*}
and thus recovering the formulation
\begin{equation}\label{eq:final_centrak_icnn}
    \omega^*, \theta^* \in \mathop{\argmin}_{\omega, \theta\in \Theta^2} \ens{\expec_{\pcal}\closed{f_\omega(X)}+\expec_{\qcal}\closed{\inner{\nabla f_\theta(Y)}{Y}-f_\omega(\nabla f_\theta(Y))}  +R_\mathcal{Q}(\omega,\theta).} 
\end{equation}

\subsection{Extension to the Federated Setting}

We are interested in extending this problem to the federated setting. Consider a scenario where $n$ clients, each possessing data from a distinct local distribution $\pcal_i$, aim to learn a single, shared optimal transport map to a common target distribution $\qcal$. This formulation is highly relevant for federated domain adaptation. For instance, imagine a consortium of hospitals (clients) wishing to use a diagnostic AI model trained on a large, public dataset (with distribution $\qcal$). Each hospital's local data has a local distribution $\pcal_i$, which may or may not be similar to other hospitals in the consortium. The goal is to leverage the consortium data to learn a transport map that transports the hospital's local distribution onto $\qcal$ on which the model was trained, thus improving its performance without retraining it.

In this federated context, the objective is to find a single pair $(\omega^*, \theta^*)$ that performs well on average across all clients:
\begin{equation}\label{eq:two_icnn_fed}
    \omega^*, \theta^* \in \mathop{\argmin}_{\omega, \theta\in \Theta^2} W(\omega ,\theta)\eqdef \sum_{i\in [n]}\mu_i l_i(\omega, \theta) +\lambda \rcal_\mathcal{Q}(\omega, \theta),
\end{equation}
where 
\begin{equation*}
    l_i(\omega, \theta)\eqdef \ens{\expec_{\mathcal{P}_i}\closed{f_\omega(X)}+\expec_{\qcal}\closed{\inner{\nabla f_\theta(Y)}{Y}-f_\omega(\nabla f_\theta(Y))}}.
\end{equation*}

When $\omega$ and $\theta$ parameterize an ICNN, the objective in \eqref{eq:two_icnn_fed} is differentiable in $(\omega, \theta)$, making it directly tractable by available first-order federated algorithms \citep{reddi2021adaptive,wang2021field}. Nonetheless, we will demonstrate a more effective alternative approach, which is based on our methodology of surrogate aggregation. In particular, the central thesis of our work is that for many optimization problems, the most effective federated approach is to have clients compute local surrogate functions, aggregate these surrogates on the server to form a global surrogate, and then perform the minimization step centrally. We now demonstrate how this design philosophy can be generalized to problems, like the OT formulation above, that do not strictly fit the linearly parameterized framework of {\MM12}.

First, we note that the objective in \eqref{eq:two_icnn_fed} decouples the data dependencies between the two sets of parameters, $\theta$ and $\omega$. In particular, once $f_\omega$ is fixed, the optimization problem for the conjugate network parameters $\theta$, only depends on the shared public distribution $\qcal$. This structural insight allows us to interpret the proposed alternating optimization scheme as a pseudo-MM algorithm that directly applies the principles developed earlier in this paper.

To formalize this, consider the objective function for a single client $i$, which we denote as $\min_{\omega,\theta}W_i(\omega, \theta)$, where $ W_i(\omega,\theta) \eqdef l_i(\omega, \theta) + \lambda \rcal_\qcal(\omega, \theta)$. This local objective can be alternatively viewed as a function of $\theta$ alone as it is equivalent to  $\min_\theta\mathcal{W}_i(\theta)$, where $\mathcal{W}_i\open{\theta}\eqdef \min_\omega W_i(\omega, \theta)$.  In particular, for an iterate  $\theta_t$, consider the client's local best-response $\omega_i(\theta_t) \eqdef \argmin_\omega W_i(\omega, \theta_t)$. This step implicitly defines a majorizing function for the client's true objective. Specifically, the function $U_{i,t}(\theta) \triangleq W_i(\omega_i(\theta_t), \theta)$ is a majorizing surrogate for $\mathcal{W}_i(\theta)$, as it satisfies the two key properties: (1) It is  tangent at the point of construction: $U_{i,t}(\theta_t) = W_i(\omega_i(\theta_t), \theta_t) = \mathcal{W}_i(\theta_t)$; (2) It upper bounds $\mathcal{W}_i$ as $U_{i,t}(\theta) = W_i(\omega_i(\theta_t), \theta) \ge \min_\omega W_i(\omega, \theta) = \mathcal{W}_i(\theta)$ for all $\theta$.

Within our proposed framework, the parameter $\omega_i(\theta_t)$ plays a role directly analogous to the surrogate parameter $\PE_{\pi_i}\left[  \barS(\sample, \theta)\right]$ in the linearly parameterized setting in {\MM12}: it is the finite-dimensional object that fully defines the client's surrogate function $U_{i,t}(\cdot)$. Thus, analogously to \QSMM{}, it is possible to develop an algorithm, which aims to aggregate $\omega_1(\theta_t),\ldots, \omega_n\open{\theta_t}$ to construct a single surrogate at the server, which is then minimized at the server. We operationalize this approach in \Cref{alg:fedrc}, which exploits this idea to solve the federated objective \eqref{eq:two_icnn_fed}.

\begin{algorithm}[H]
    \DontPrintSemicolon
    \caption{Pseudo-MM for Federated OT Map (\fedot{})}\label{alg:fedrc}
    \KwData{$\kmax \in \nset_{>0}$;   $\alpha>0$; $\proba \in \ocint{0,1}$; 
    a $\ocint{0,1}$-valued sequence $\{\pas_t, t \in [\kmax]\}$;  a distribution  $\lawPP$ on the subsets of $\{1, \cdots, n\}$;
    initial values  $\omega_0, \theta_0 \in \Theta$ and  $V_{0,i} \in \Theta$ for all $i \in [n]$} 
    \KwResult{Sequence $\ens{\open{\omega_t, \theta_t}, t\in \closed{T_{\max}}}$}
    Set $V_0 =
    \sum_{i=1}^n \mu_iV_{0,i}$  \;
    \For{$t = 0, \ldots, \kmax-1$}{
        Sample   $\set_{t+1} \sim \lawPP$ \;
        Broadcast $(\omega_t, \theta_t)$ to the agents $\#i$, $i \in \set_{t+1}$\;
        \For(\tcp*[f]{on active worker $\# i$}){$i \in \set_{t+1} $ }{
        Compute $\omega_i(\theta_t) = \argmin_\omega W_i(\omega, \theta_t)$\label{line:ot:min_client}\;
        Set $\Delta_{t+1,i} = \omega_i(\theta_t)-\omega_t-V_{t,i}$\;
        Set $V_{t+1,i} = V_{t,i} + (\nicefrac{\alpha}{\proba}) \Delta_{t+1,i}$ \;
        Send $\Delta_{t+1,i}$ to the central server \label{line:ot:compDelta} \;
        }
        \For(\tcp*[f]{on inactive worker $\#i$}){$i \notin \set_{t+1}$}{
            $V_{t+1,i} = V_{t,i}$\;
        }
        \tcp*{on the central server}
        Set $H_{t+1} = V_t +  (\nicefrac{1}{\proba})\sum_{i\in \set_{t+1}}  \mu_i \Delta_{t+1,i}$ \;
        Compute $\omega_{t+1} =   \omega_t + \pas_{t+1} H_{t+1}$\label{line:ot:update_theta}\;
        Project $\omega_{t+1}$ onto $\Theta$ \; 
        Update  $\theta_{t+1} = \argmin_{\theta\in \Theta} \;
        W(\omega_{t+1},\theta)$\label{line:ot:min_server}  \;
        Set $V_{t+1} = V_t +\nicefrac{\alpha}{\proba} \sum_{i \in \set_{t+1}}\mu_i \Delta_{t+1,i}$ \;
    }
\end{algorithm}

A single round of the algorithm is initiated by the server broadcasting the current global parameters, $(\omega_t,\theta_t)$. Each active client $i \in \set_{t+1}$ is tasked with constructing its local surrogate by computing the best-response parameter $\omega_i(\theta_t)$ for its local potential function. Then, clients transmit a control variate corrected version of their functional parameter update $\omega_i(\theta_t)$.  The server, in turn, performs the aggregation and global minimization steps. It first aggregates the updates received from all active clients to form a new global potential parameter $\omega_{t+1}$.  Using this newly formed $\omega_{t+1}$, the server then performs the global minimization for the conjugate parameter 
$$\theta_{t+1} \in \mathop{\argmin}_{\theta \in \Theta} W(\omega_{t+1}, \theta).$$
In practice, we note that the exact minimization steps in both lines (\ref{line:ot:min_client}) and (\ref{line:ot:min_server}) can be relaxed and approximated by performing multiple gradient steps.

Crucially, the surrogate constructed at the server, parameterized by $\omega_{t+1}$, is not guaranteed to be a majorizing function of the the global objective, and thus \fedot{} does not strictly constitute an MM algorithm.  As such, we term this approach as ``pseudo-MM''. Nonetheless, it follows the identical principle advocated in this paper: local computations produce surrogate parameters, which are then aggregate centrally.

\subsection{Evaluation of Optimal Transport Map}\label{sec:ot_eval}

Evaluating OT map solutions is challenging due to the high sample complexity required for accurate Wasserstein distance estimation, specifically $O(\varepsilon^{-d})$ samples for $O(\varepsilon)$ accuracy \citep{dudley1969speed}, which becomes computationally prohibitive in high dimensions. To address this, we adopt the standardized benchmark methodology for evaluating continuous quadratic-cost OT maps of \cite{korotin2021do}.

This benchmark  leverages ICNNs to construct pairs of continuous measures ($\mathcal{P}_d, \mathcal{Q}_d$) where the ground truth OT map $m^*_{\mathrm{true}}$ is analytically known. In particular, for each dimension $d$, the benchmark starts by defining the following measures: $\mathcal{P}_d$ is a Gaussian mixture with three components, while $\mathcal{Q}^1_d, \mathcal{Q}^2_d$, each a Gaussian mixture with ten components. Using two ICNNs, a push forward function $m_{\mathrm{true}}^*:\xcal\rightarrow\xcal$ is learned to approximate $T^*\circ\pcal \approx \nicefrac{1}{2}\open{\qcal^1_d+\qcal^2_d}$. Finally, the benchmark sets $\qcal_d = T^*\circ\pcal$ and, as such, the transport map between $\pcal_d$ and $\qcal_d$ is explicitly known. 

To quantify the performance of a fitted parametric transport map $m: \mathcal{X} \to \mathcal{X}$ against this ground truth, the benchmark employs the $L_2$ Unexplained Variance Percentage  metric ($L_2^{\text{UVP}}(m)$), which is formally defined as 
 
$$ L_2^{\text{UVP}}(m) \eqdef 100 \cdot \frac{\|m -m^*_{\mathrm{true}}\|^2_{L^2(\mathcal{P}_d)}}{\text{Var}(\mathcal{Q}_d)},$$
where $\text{Var}(\mathcal{Q}_d)$ is a scalar-valued notion  of the spread of $\mathcal{Q}_d$. We use the original implementation\footnote{https://github.com/iamalexkorotin/Wasserstein2Benchmark} to have $\text{Var}(\qcal_d)$ being the $L_1$ norm of the covariance matrix of $\qcal_d$.

The $L_2^{\text{UVP}}$ value serves as an indicator of how accurately the computed transport map $m$ approximates the true optimal transport map $m^*_{\mathrm{true}}$ .  

\subsection{Experimental Results}

To benchmark the effectiveness of \Cref{alg:fedrc}, we again compare against an algorithm that aims to solve the federated objective without aggregation of surrogate parameters. In particular, the objective in \eqref{eq:two_icnn_fed} is differentiable in $(\omega, \theta)$, making it amenable to federated first-order methods \citep{mcmahan_communication-efficient_2017, reddi2021adaptive}. As such, we compare \Cref{alg:fedrc} with FedAdam \citep{reddi2021adaptive}.

We adapted the experimental setting and hyperparameters used by \citet[MMv2]{korotin2021do}. In particular, we use a three-layer dense ICNN \citep{korotin2019wasserstein} for $f_\theta$ and $f_\omega$. In addition, except for the learning rate, which we tune, we use the {{\tt Adam}} \citep{kingma_adam_2017} hyperparameters used in the centralized setting in the code of \cite{korotin2021do} for {{\tt FedAdam}}.

For \Cref{alg:fedrc}, we relax the optimization procedure on the clients' side in line (\ref{line:ot:min_client}) to be a single gradient step. As such, both \Cref{alg:fedrc} and {{\tt FedAdam}} use similar computational resources on the client side. Finally, for the server side optimization process in line (\ref{line:ot:min_server}) to be ten gradient steps using the {{\tt Adam}} optimizer.

We report our results on the benchmark from \cite{korotin2021do} (described in \Cref{sec:ot_eval}), using the $L_2^{\text{UVP}}$ metric. To adapt this benchmark to the federated setting, we generate $\pcal_d$ as previously detailed. Then, for $n=10$ clients, we create individual client distributions by splitting samples from $\pcal_d$ using constrained k-means. Finally, in \eqref{eq:two_icnn_fed}, we set $\mu_i = \nicefrac{1}{n}$. The results for different dimensions are presented in \Cref{fig:ot_map_res}. Across all dimensions, we observe that our approach significantly outperforms the direct application of {{\tt FedAdam}}.

\begin{figure}[h!]
  \centering  \includegraphics[width=\textwidth]{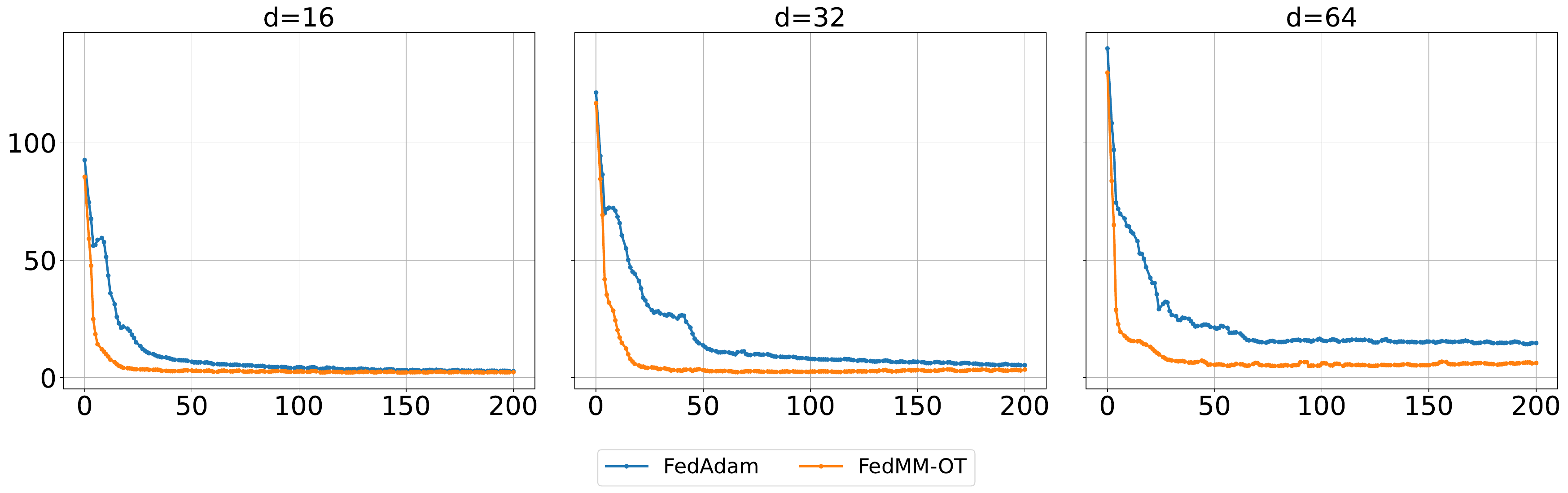}
 \caption{Performance is measured by ($L_2^{\text{UVP}}$), where lower is better, versus communication rounds for dimensions $d=16$, $d=32$, and $d=64$. In all settings, our proposed \fedot{} algorithm converges faster than the {{\tt FedAdam}} baseline.}
 \label{fig:ot_map_res}
\end{figure}

\section{Conclusion}

This paper investigates Stochastic Surrogate MM methods that iteratively construct surrogate functions in a \textit{parametric space}.
This principle allows us to introduce \QSMM{}, a unified framework for a novel class of FL algorithms designed to address the key bottlenecks of communication and heterogeneity. We believe that this simple yet elegant idea opens up many algorithmic directions, and simultaneously poses exciting theoretical difficulties.

Finally, a point of interest is the link with Mirror Descent (MD).  \citet{kunstner2021homeomorphic} showed that  a (deterministic) MM - namely, the EM algorithm -  can be seen as a MD algorithm. This remark strongly relies on a canonical parameterization $\phi(\theta)\equiv \theta$ of the surrogate functions, see {\sf MM-2}  making the analysis simpler \citep[eq. (3),][]{kunstner2021homeomorphic}. Understanding the impact of the parameterization on the behavior of Federated MM is an interesting aspect left unexplored.

 \section*{Acknowledgments} The work of Aymeric Dieuleveut and Mahmoud Hegazy is supported by  French State aid managed by the French Agence Nationale de la Recherche (ANR) under France 2030 program with the reference ANR-23-PEIA-005 (REDEEM project), and by ANR-23-IACL-0005 (Hi! PARIS AI Cluster), in particular Hi! PARIS \textit{FLAG} chair. The work of Gersende Fort is partly supported by ANR under the projects ANR-23-CE48-0009 and ANR-24-CE23-1529.
 The work of Hoi-To Wai is supported by project \#MMT-p5-23 of the Shun Hing Institute of Advanced Engineering, The Chinese University of Hong Kong
 
\appendix

\newpage

\section{The conditions {\bf {\sf MM-1}} and {\bf {\sf MM-2}}}
\label{app:exampleMM1-2}
\subsection{\Cref{ex:gdt}}
Let $\rho \in \ocint{0,\tochangeinmain{1/L_f}}$.  For any $\tau \in \paramset$, it holds on $\paramset$
\[
f(\cdot) \leq f(\tau) + \pscal{\nabla f(\tau)}{\cdot- \tau} +
\frac{L_f}{2} \|\cdot - \tau \|^2 \leq f(\tau) + \pscal{\nabla
  f(\tau)}{\cdot - \tau} + \frac{1}{2 \rho} \|\cdot- \tau \|^2 .
\]
Since
\[ 
\pscal{\nabla f(\tau)}{ \param-\tau} + \frac{1}{2 \rho} \|\param -
  \tau \|^2  = \frac{1}{2\rho} \left( \|\param \|^2 - \|\tau \|^2 \right) -  \frac{1}{\rho} \pscal{\tau - \rho \nabla f(\tau)}{\param-\tau}, 
\]
 {\bf {\sf MM-1}}  is satisfied with
\[
\psi(\param) \eqdef \frac{1}{2\rho} \|\param\|^2, \qquad \phi(\param)
\eqdef \frac{1}{\rho} \param, \qquad \barS(\sample, \tau) \eqdef \tau
- \rho G(\sample, \tau).
\]
Upon noting that $\nicefrac{1}{2} \| \param \|^2 - \pscal{s}{\param}= \nicefrac{1}{2} \| \param - s \|^2 - \nicefrac{1}{2} \| s \|^2 $, it holds  
\begin{align*}
\map(s) &\eqdef \mathrm{argmin}_\param \left( g(\param) +
\frac{1}{2\rho} \|\param\|^2 - \frac{1}{\rho}\pscal{s}{\param} \right) = \mathrm{argmin}_\param \left( \rho g(\param) +
\frac{1}{2}   \|\param - s \|^2 \right).
\end{align*}

\subsection{\Cref{ex:EM}}
\label{sec:detailedex2}
Let $\tau\in \paramset$. For any $\param \in \paramset$, we write
  \[
\loss(\sample,\param) - \loss(\sample, \tau) =  - \log \int_\calH
\frac{p(\sample,h,\param)}{p(\sample,h,\tau)}
  \mu(\rmd h \vert \sample, \tau).
  \]
  By the Jensen's inequality, we have  for any $\param \in \paramset$
  \[
\loss(\sample,\param) - \loss(\sample, \tau) \leq -  \int_\calH
\log \left( \frac{p(\sample,h,\param)}{p(\sample,h,\tau)} \right)  \mu(\rmd h \vert \sample, \tau).
\]
Since $-\log p(\sample, h, \param) = \psi(\param) - \pscal{S(\sample,h)}{\phi(\param)}$, this yields
\begin{equation} \label{eq:toolex2}
\loss(\sample,\param)  \leq \loss(\sample, \tau) + \psi(\param) - \psi(\tau) - \pscal{\barS(\sample,\tau) }{\phi(\param) -\phi(\tau)}
\end{equation}
where $\barS(\sample,\tau)$ is given by  \eqref{ex:EM:barS}. 
We now apply the expectation $\PE_\pi$, and this concludes the proof of {\bf \sf MM-1}.

\subsection{\Cref{ex:SurrogateCvx}}
\label{sec:detailedex3}
Let $\tau \in \paramset$. First, observe that by definition of $
\mapM(\sample, \tau)$, it holds for any $\sample$
\begin{equation}\label{eq:toolex3}
\ell(\sample, \tau) = \psi(\tau) - \pscal{S(\sample, \mapM(\sample,\tau))}{\phi(\tau)} + \xi(\sample, \mapM(\sample,\tau)). 
\end{equation}
We have $\loss(\sample, \cdot)
 \leq \tildeloss(\sample, h, \cdot)$ for any $h \in \calH$. Apply this
 inequality with $h \leftarrow \mapM(\sample,\tau)$ and use \eqref{eq:toolex3}: this yields
  \begin{align}
  \loss(\sample, \cdot) & \leq \tildeloss(\sample,\mapM(\sample, \tau), \cdot) =  \psi(\cdot)-
  \pscal{S(\sample, \mapM(\sample, \tau))}{\phi(\cdot)} + \xi(\sample, \mapM(\sample,\tau)), \qquad \forall \sample \in \rset^p, \\
  & = \ell(\sample, \tau) + \psi(\cdot)- \psi(\tau) - 
  \pscal{S(\sample, \mapM(\sample, \tau))}{\phi(\cdot) - \phi(\tau)} \label{eq:toolex3bis} 
  \end{align}
From this inequality and \eqref{eq:toolex3}, it follows  under integrability conditions that
  \[
f(\cdot) \leq f(\tau) +
 \psi(\cdot)  - \psi(\tau) - \pscal{\PE_\pi \left[S(\sample, \mapM(\sample, \tau))
    \right]}{\phi(\cdot) - \phi(\tau)}.
\]
Therefore, {\bf \sf MM-1} is satisfied with $\barS(\sample, \tau) \eqdef S(\sample, \mapM(\sample, \tau))$
and the functions $\phi, \psi$ defined by $\tilde \loss$.

\section{Proof of \Cref{prop:equiv:theta:s}}
\label{sec:proof:prop:equiv:theta:s}
By definition of $\map$ (see {\sf MM-2}) and the Fermat's rule (see e.g. \cite[Theorem 16.3]{bauschke:combettes:2011}), for any $s \in \calS$, we have 
\begin{equation}\label{prop:equiv:theta:s:tool1}
0 \in \partial \left( g(\cdot) + \psi(\cdot) - \pscal{s}{\phi(\cdot)} \right)\left(\map(s)\right). 
\end{equation}
Let $s \in \calS$. Under \Cref{hyp:surrogate:convexC1}, both $g(\cdot)$ and $\psi(\cdot) - \pscal{s}{\phi(\cdot)}$ are  proper lower semicontinuous convex functions, and  $\paramset \cap \inter \mathrm{dom}(\psi(\cdot) - \pscal{s}{\phi(\param)} \supset \inter \paramset$ is non empty. Therefore, by \cite[Corollary 16.50]{bauschke:combettes:2011}, for any $\tau \in \paramset$
\begin{equation}\label{prop:equiv:theta:s:tool2}
 \partial \left( g(\cdot) + \psi(\cdot) - \pscal{s}{\phi(\cdot)} \right)\left(\tau\right) = \partial g(\tau) + \partial\left( \psi(\cdot) - \pscal{s}{\phi(\cdot)} \right)(\tau).
\end{equation}
\Cref{hyp:surrogate:convexC1} and \cite[Proposition 17.31]{bauschke:combettes:2011} imply that for any $\tau \in \paramset$,
\begin{equation}\label{prop:equiv:theta:s:tool3}
  \partial\left( \psi(\cdot) - \pscal{s}{\phi(\cdot)} \right)(\tau) = \{ \nabla \psi(\tau) - J_\phi(\tau) \, s \}. 
\end{equation}
Therefore, from \eqref{prop:equiv:theta:s:tool1}, \eqref{prop:equiv:theta:s:tool2} and \eqref{prop:equiv:theta:s:tool3}, for any $s \in \calS$, we have \begin{equation}\label{prop:equiv:theta:s:tool4}
   J_\phi(\map(s)) \, s  - \nabla \psi(\map(s)) \in \partial  g(\map(s)).
\end{equation}
On the other hand,  {\bf {\sf MM-1}} implies that for any $\tau \in \paramset$, $ \param \mapsto  f(\param) - f(\tau) - \psi(\param) + \psi(\tau) + \pscal{\PE_\pi\left[ \barS(\sample, \tau)\right]}{\phi(\param) - \phi(\tau)} \leq 0$ and is equal to zero at $\param = \tau$. Therefore, under \Cref{hyp:surrogate:convexC1}  and \Cref{hyp:objective:C1}, for any $\tau \in \inter \paramset$, it holds 
\begin{equation}\label{prop:equiv:theta:s:tool5}
  \nabla f(\tau) - \nabla \psi(\tau) + J_\phi(\tau) \, \PE_\pi\left[ \barS(\sample, \tau)\right] = 0.
\end{equation}
For any $\tau \in \inter \paramset$, we have  
\[
\begin{split}
\nabla f(\tau) + \partial g(\tau) & \overset{\eqref{prop:equiv:theta:s:tool5}}{=} \nabla \psi(\tau) - J_\phi(\tau) \, \PE_\pi\left[ \barS(\sample, \tau)\right] + \partial g(\tau) \\
& \overset{\eqref{prop:equiv:theta:s:tool3}}{=}  \partial\left( \psi(\cdot) - \pscal{s}{\phi(\cdot)} \right)(\tau) + \partial g(\tau)  \\
& \overset{\eqref{prop:equiv:theta:s:tool2}}{=} \partial\left( g (\cdot) + \psi (\cdot) -\pscal{\PE_\pi\left[ \barS(\sample, \tau)\right]}{\phi(\cdot)}\right)(\tau).
\end{split}
\]
Combined with the Fermat's rule, this implies that $0 \in \nabla f(\tau) + \partial g(\tau)$ iff $\tau$ is a minimizer of $g(\cdot) + \psi(\cdot) -\pscal{\PE_\pi\left[ \barS(\sample, \tau)\right]}{\phi(\cdot)}$, i.e., under {\bf {\sf MM-2}}, $\tau = \map(\PE_\pi\left[ \barS(\sample, \tau)\right])$. This concludes the proof of \eqref{eq:prop:equiv:theta:s:state2}.

On the other hand, the result \eqref{eq:prop:equiv:theta:s:state1} follows by observing that as $\map(s) \in \inter \paramset$,
\[
\begin{split}
- J_\phi(\map(s)) \,  \mf(s) & = J_\phi(\map(s)) \left( s - \PE_\pi\left[ \barS(\sample,  \map(s) )\right] \right) \\
& \overset{\eqref{prop:equiv:theta:s:tool5}}{=} \nabla f( \map(s) ) - \nabla \psi( \map(s) ) + J_\phi(\map(s)) \, s \\
& \overset{\eqref{prop:equiv:theta:s:tool3}}{\in} \nabla f( \map(s) ) + \partial g( \map(s) ).
\end{split}
\]

Let $s_\star \in \calS$ be such that $\mf( s_\star ) = 0$ and $\map(s_\star) \in \inter \paramset$, then  \eqref{eq:prop:equiv:theta:s:state2} and \eqref{eq:prop:equiv:theta:s:state1} imply that $\param_\star = \map( s_\star )$. 
On the other hand, let
$\param_\star\in \inter \paramset$ be such that $\param_\star = \map\left( \PE_\pi \left[\barS(\sample, \param_\star) \right]\right)$, then \eqref{eq:prop:equiv:theta:s:state2} and \eqref{eq:prop:equiv:theta:s:state1} imply $\param_\star = \map(s_\star)$, and furthermore $s_\star = \PE_\pi \left[\barS(\sample, \map(s_\star)) \right]$. This yields $\mf(s_\star) = 0$.

\section{Examples of Jensen's surrogate: the Expectation Maximization algorithm}
\label{app:exampleEM}

We derive the surrogate parameterization for two concrete EM examples in \Cref{subsec:ssmmeg}.

\subsection{First Example: Poisson observations}\label{app:em_first_example}
\paragraph{Statistical model.} Consider a Poisson
random variable; conditionally to
some hidden random variable $h$, the intensity of the Poisson
variable  is $\exp(\param + h)$. We assume that the distribution of $h$ is 
$\mu$. Based on examples $\sample_1, \cdots, \sample_j, \cdots$, we want to estimate $\param \in \rset$ as the Maximum A Posteriori (MAP) when the prior is proportional to $\param \mapsto \exp(-\lambda \exp(\param))$ for some regularization parameter $\lambda >0$. The complete likelihood i.e. the likelihood  of the pair $(\sample, h)$ is proportional to $\param \mapsto \exp(\sample (\param + h) - \exp(\param +
  h))$. 
A MAP estimator is a minimizer of 
\begin{equation}\label{eq:EMexample:Poisson}
\param \mapsto   \PE_\pi \left[ - \log \int_{\rset} \exp\left( \param  \sample  - \exp(\param) \exp(h)
\right) \mu(\rmd h)  \right]  +  \lambda \exp(\param);
\end{equation} 
it is of the form $\param \mapsto f(\param) + g(\param)$ with $g(\param) \eqdef \lambda \exp(\param)$. 
In batch learning with i.i.d. observations $\sample_j$, $\pi$ is the empirical distribution of the $N$ examples in the batch: $\pi(\rmd z) \eqdef \nicefrac{1}{N} \sum_{j=1}^N \delta_{\sample_j}(\rmd z)$.  In online learning with a stream of independent examples having the same distribution as $\sample$, $\pi$ denotes the distribution of $\sample$.

\paragraph{Setting {\bf \sf MM-1} and {\bf \sf MM-2}.}  
 From \eqref{eq:EMexample:Poisson} and  \Cref{ex:EM}, {\bf \sf MM-1} is  satisfied with 
\[
\psi(\param) \eqdef  0, \quad \phi(\param) \eqdef  \begin{bmatrix} \param \\ \exp(\param) \end{bmatrix}, \quad  S(\sample,h) \eqdef \begin{bmatrix} \sample \\ -\exp(h) \end{bmatrix}, \quad
\paramset \eqdef \rset, \quad \calS \eqdef \rset_{>0} \times \rset_{<0}.
\]
{\bf \sf MM-2} is satisfied: the optimization operator $\map$ is defined on $\calS$ by   
\[
\map(s_1,s_2) \eqdef \mathrm{argmin}_{\param \in \rset}\left \{ \lambda \exp(\param) -
s_1 \param - s_2 \exp(\param) \right\} = \ln \left( \frac{s_1}{\lambda - s_2}\right).
\]
Upon noting that the first component of $\PE_\pi\left[ \barS(\sample, h, \tau) \right]$ does not depend on $\tau$ and is equal to $\PE_\pi[\sample]$, let us propose a second strategy when this expectation is explicit (typically in batch learning) possibly at a large computational cost (when the size $N$ of the batch is large). The function $\psi(\cdot)$ contains the terms depending upon $\param$ which have a closed form expression.

\paragraph{Setting {\bf \sf MM-1} and {\bf \sf MM-2} -  when $\PE_\pi\left[\sample \right]$ is explicit.} 
 {\bf \sf MM-1} is satisfied with 
\[
\psi(\param) \eqdef - \param \, \PE_\pi\left[\sample \right], \quad \phi(\param) \eqdef  \exp(\param) , \quad  S(\sample,h) \eqdef -\exp(h), \quad
\paramset \eqdef \rset, \quad \calS  \eqdef  \rset_{<0}.
\]
In {\bf \sf MM-2}, the optimization operator $\map$ is defined on $\calS$ by 
\[
\map(s) \eqdef \mathrm{argmin}_{\param \in \rset}\left \{ \lambda\exp(\param) -
\PE_\pi\left[ \sample \right] \param - s \exp(\param) \right\} = \ln \left( \frac{ \PE_\pi\left[ \sample \right]}{\lambda - s}\right).
\]

\subsection{Second example: mixture of Gaussian distributions}\label{app:em_second_example}
\paragraph{The statistical model.} Consider  a mixture model of $L$ $\rset^p$-valued Gaussian distributions. The   means $m_1, \cdots, m_L$ are unknown; the weights $\nu_1, \cdots, \nu_L$ and the covariance matrices $\Gamma_1, \cdots, \Gamma_L$ are known. Based on observations $\sample_1, \cdots, \sample_j, \cdots$, we want to estimate $\param \eqdef (m_1, \cdots, m_L) \in \rset^{p \times L} $ as a penalized Maximum Likelihood (ML) estimator that minimizes
\[
\param \mapsto \PE_\pi \left[- \log \sum_{\ell=1}^L \frac{\nu_\ell}{\sqrt{\mathrm{Det}(\Gamma_\ell)}} \exp\left(- \frac{1}{2} (\sample-m_\ell)^\top \Gamma_\ell^{-1} (\sample -
m_\ell) \right) \right] + \frac{\lambda}{2} \sum_{\ell=1}^L m_\ell^\top m_\ell,
\]
where $\lambda >0$ is a regularization parameter.  $\pi$ is either $\pi(\rmd z) \eqdef N^{-1} \sum_{j=1}^N \delta_{\sample_j}(\rmd z)$, the empirical distribution on the $N$ i.i.d. examples $\sample_j$ in batch learning; or $\pi$ denotes the distribution of $\sample$ in online learning with a stream of i.i.d. samples. 

The criterion is of the form   $\param \mapsto f(\param) + g(\param)$ with $g(\param) \eqdef \nicefrac{\lambda}{2} \, \sum_{\ell=1}^L m_\ell^\top m_\ell$ and
\begin{align*}
f(\param) & \eqdef \PE_\pi \left[- \log \sum_{\ell=1}^L \frac{\nu_\ell}{\sqrt{\mathrm{Det}(\Gamma_\ell)}} \exp\left(- \frac{1}{2} (\sample-m_\ell)^\top \Gamma_\ell^{-1} (\sample -
m_\ell) \right) \right] \\ 
& = \PE_\pi \left[ - \log \int  \frac{1}{\sqrt{\mathrm{Det}(\Gamma_h)}} \exp\left(- \frac{1}{2} (\sample-m_h)^\top \Gamma_h^{-1} (\sample -
m_h) \right)   \mu(\rmd h)   \right]
\end{align*}
where  $\mu$ is the distribution on $\{1, \cdots, L\}$ defined by $\mu(\{\ell\}) \eqdef \nu_\ell$.

\paragraph{The setting {\bf \sf MM-1}-{\bf \sf MM-2}.}
It holds 
\[
 (\sample-m_h)^\top \Gamma_h^{-1} (\sample -
m_h) = \sample^\top \Gamma_h^{-1} \sample + m_h^\top \Gamma_h^{-1} m_h - 2 \pscal{ \sample}{\Gamma_h^{-1} m_h}.
\]
Since $1 = \1_{\{1\}}(h) + \cdots + \1_{\{L\}}(h)$, we have 
\[
\pscal{\xi}{\chi(h)} = \pscal{\begin{bmatrix} 
\xi \1_{\{1\}}(h) \\
\cdots \\
\xi \1_{\{L-1 \}}(h) \end{bmatrix}}{ \begin{bmatrix}
\chi(1) -\chi(L) \\ \cdots \\ \chi(L-1) -\chi(L) \end{bmatrix}} +\pscal{\xi}{\chi(L)} 
\]
from which we deduce that 
\begin{align*}
\pscal{\sample}{\Gamma_h^{-1}  m_h}   & =  \pscal{\begin{bmatrix}  \sample \, \1_{\{1\}}(h) \\ \cdots \\ \sample \, \1_{\{L-1\}}(h)\end{bmatrix}}{\begin{bmatrix} \Gamma_1^{-1} m_1-\Gamma_L^{-1} m_L \\ \cdots \\ \Gamma_{L-1}^{-1} m_{L-1}-\Gamma_L^{-1} m_L \end{bmatrix}}+ \pscal{\sample}{\Gamma_L^{-1} m_L}. \\
m_h^\top \Gamma_h^{-1} m_h   & = 
\pscal{ \begin{bmatrix}  1_{\{1\}}(h) \\ \cdots \\ 1_{\{L-1 \}}(h)\end{bmatrix}}{ \begin{bmatrix}m_1^\top \Gamma_1^{-1} m_1  - m_L^\top \Gamma_L^{-1} m_L \\ \cdots \\  m_{L-1}^\top  \Gamma_{L-1}^{-1} m_{L-1} - m_L^\top \Gamma_L^{-1} m_L  \end{bmatrix}  }+ m_L^\top \Gamma_L^{-1} m_L.
\end{align*}
Therefore, from  \Cref{ex:EM}, {\sf \bf MM-1} is satisfied with  $\psi(\param) \eqdef - m_L^\top \Gamma_L^{-1} \PE_\pi\left[ Z \right]  + \nicefrac{1}{2} m_L^\top
\Gamma_L^{-1} m_L$, 
\[\phi(\param) \eqdef \begin{bmatrix} \Gamma_1^{-1}  m_1  - \Gamma_L^{-1} m_L \\
\cdots \\
\Gamma_L^{-1}     m_L   - \Gamma_L^{-1} m_L \\ 
 - \nicefrac{1}{2} \left( m_1^\top \Gamma_1^{-1} m_1 -
m_L^\top \Gamma_L^{-1} m_L \right) \\
\cdots \\
 - \nicefrac{1}{2} \left( m_{L-1}^\top \Gamma_{L-1}^{-1} m_{L-1} -
m_L^\top \Gamma_L^{-1} m_L \right)  \end{bmatrix}, 
  \qquad  S(\sample, h) \eqdef
\ \begin{bmatrix} \sample \, \1_{\{1\}}(h) \\
\cdots \\ 
\sample \,  \1_{\{L-1\}}(h)  \\
 \1_{\{1\}}(h) \\
 \cdots \\ 
 \1_{\{L-1\}}(h)  \end{bmatrix}.
\]
The set $\calS$ collects the points of the form $s=(s_{1,1}, \cdots, s_{1,L-1}, s_{2,1}, \cdots, s_{2,L-1})$  where $s_{1,\ell} \in \rset^p$,  $s_{2,\ell} \in \ccint{0,1}$ and $\sum_{\ell=1}^{L-1} s_{2,\ell} \leq 1$. In {\bf \sf MM-2}, for any $s \in \calS$, $\map(s)$ is the minimizer of $g(\param) + \psi(\param) - \pscal{s}{\phi(\param)}$. $\map(s) \in \rset^{p \times L}$ 
and the column $\# \ell$, for $\ell=1, \cdots, L-1$, is given by 
\[
 \left[\map(s) \right]_{:,\ell}\eqdef \left(s_{2,\ell} \Id_p + \lambda \Gamma_\ell \right)^{-1} \, s_{1,\ell}; 
\]
the column $\# L$ is given by
\[
\left[\map(s) \right]_{:,L}\eqdef \left((1-\sum_{\ell=1}^{L-1} s_{2,\ell}) \Id_p   + \lambda  \Gamma_L 
\right)^{-1}    \left( \PE_\pi[Z] - \sum_{\ell=1}^{L-1} s_{1,\ell}\right). 
\]

\section{Proof of \Cref{theo:main}} \label{sec:proof:maintheo}

\subsection{Outline of the section} \label{sec:outlineproof}
The proof of \Cref{theo:main} can be carried for the \textit{full participation regime} up to adjusting the properties of the compression operator i.e. when \Cref{assumption:PP}($1$) holds, with an adjusted compression operator  satisfying \Cref{assumption:URVB}($\omega_\proba$) where $\omega_\proba \eqdef \omega + \nicefrac{(1-\proba)}{\proba} \, (\omega+1)$. Such a property is addressed in \Cref{sec:PP2fullP} together with the statement of \Cref{algo:FedSS-MM:fullP}, equivalent  under \Cref{assumption:PP} to \Cref{algo:FedSS-MM}.  This line of analysis has been frequently used in the literature, for example in \citet{koloskova2019decentralized}.

Stochastic quantities are successively introduced in the learning process: when selecting active workers; when producing oracles $\Smem_{t+1,i}$ and possibly in the compression step. Finally, the local randomnesses are aggregated by the central server when updating the  system $\hatS_{t+1}$. In \Cref{sec:proof:filtration}, a filtration  is introduced in echo to these steps.

Finally, the proof of \Cref{theo:main} is in \Cref{sec:proof:detailedprooftheo}; technical lemmas conclude this appendix (see \Cref{sec:prelimlemma}).

\subsection{From Partial Participation to Compression} \label{sec:PP2fullP}
Let 
\begin{equation*}
    \Qtilde(s) \eqdef \frac{U}{\proba}\Q(s), \qquad s \in \rset^q,
\end{equation*}
where $U$ is a Bernoulli random variable with probability of success $\proba$ and independent of $\Q(s)$.   \Cref{lem:PP} shows that if $\Q$ satisfies \Cref{assumption:URVB}($\omega$), then $\Qtilde$  satisfies \Cref{assumption:URVB}($\omega_\proba$). 
\begin{lemma}\label{lem:PP}
Let $\Q$ be a compression operator satisfying \eqref{eq:Quant} with the constant $\omega$. Then $\Qtilde$ satisfies \eqref{eq:Quant}, with a constant $\omega_\proba$ given by 
\begin{equation} \label{eq:adjusted_quant_omega}
    \omega_\proba \eqdef \omega + \frac{1-\proba}{\proba}(\omega+1).
\end{equation}
\end{lemma}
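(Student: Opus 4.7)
The plan is to verify the two conditions in \eqref{eq:Quant} for $\Qtilde$ by a direct computation exploiting the independence of $U$ and $\Q(s)$, together with the identity $U^2 = U$ for the Bernoulli variable.

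First, I would establish unbiasedness. Since $U$ is independent of $\Q(s)$ and $\PE[U]=\proba$, we have
\[
\PE[\Qtilde(s)] = \frac{\PE[U]}{\proba}\,\PE[\Q(s)] = s,
\]
using that $\PE[\Q(s)] = s$ by hypothesis on $\Q$. This handles the first part of \eqref{eq:Quant}.

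Next, I would compute the second moment and then subtract $\|s\|^2$. Using independence once more and $U^2 = U$,
\[
\PE\left[\|\Qtilde(s)\|^2\right] = \frac{\PE[U^2]}{\proba^2}\,\PE\left[\|\Q(s)\|^2\right] = \frac{1}{\proba}\PE\left[\|\Q(s)\|^2\right].
\]
The bound \eqref{eq:Quant} for $\Q$ applied in the form $\PE[\|\Q(s)\|^2] = \PE[\|\Q(s)-s\|^2] + \|s\|^2 \leq (\omega+1)\|s\|^2$ (which follows from unbiasedness of $\Q$) then yields
\[
\PE\left[\|\Qtilde(s)-s\|^2\right] = \PE\left[\|\Qtilde(s)\|^2\right] - \|s\|^2 \leq \frac{\omega+1-\proba}{\proba}\,\|s\|^2.
\]
Finally, I would check that $(\omega+1-\proba)/\proba$ matches the expression $\omega_\proba$ given in \eqref{eq:adjusted_quant_omega}, which is an elementary rearrangement: $\omega + (1-\proba)(\omega+1)/\proba = (\omega\proba + (1-\proba)(\omega+1))/\proba = (\omega+1-\proba)/\proba$.

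There is no real obstacle here; the only thing to be careful about is to invoke independence between $U$ and $\Q(s)$ correctly (for both the first and second moment computations) and to use the identity $U^2 = U$ to avoid an extraneous factor of $\proba$ in the denominator. The argument is short and purely algebraic.
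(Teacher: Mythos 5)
Your proof is correct and follows essentially the same route as the paper's: a direct moment computation using the independence of $U$ and $\Q(s)$ together with the unbiasedness of $\Q$. The only difference is organizational — you compute $\PE[\|\Qtilde(s)\|^2]$ in one shot via $U^2=U$ and subtract $\|s\|^2$, whereas the paper splits the error into the orthogonal contributions of subsampling and compression before bounding each; both yield exactly the constant $\omega_\proba=(\omega+1-\proba)/\proba$.
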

\begin{proof} Let $s \in \rset^q$. It holds:
\[\PE\left[\Qtilde(s)\right]  =  \frac{1}{\proba} \PE\left[U \right] \,  \PE\left[ \Q(s)\right] = \PE\left[ \Q(s)\right]
\]
where we used that $U$ and $\Q$ are independent, and the expectation of $U$ is $\proba$. Since $U$ and $\Q(s)$ are independent, it holds
\[
\PE\left[ \left( \frac{U}{\proba}-1 \right) \pscal{\Q(s)}{\Q(s) -s} \right] = 0.
\]
Therefore
\[
  \PE \left[\|\Qtilde(s)-s\|^2 \right] =   \PE \left[\left( \frac{U}{\proba}-1 \right)^2 \, \left\|\Q(s)\right\|^2 \right] + \PE  \left[\|\Q(s) -s\|^2 \right]. 
\]
Using again that $U$ and $\Q(s)$ are independent and $\PE\left[\Q(s) \right]= s$, the first term is equal to 
\[
\PE\left[\left( \frac{U}{\proba}-1 \right)^2 \right] \PE\left[\left\|\Q(s)\right\|^2 \right] = \frac{1-\proba}{\proba}  \, \left( \PE\left[\left\|\Q(s) -s\right\|^2 \right] + \|s\|^2 \right).
\]
By using \eqref{eq:Quant}, we have
\[
 \PE \left[\|\Qtilde(s)-s\|^2 \right] \leq \frac{1-\proba}{\proba} (\omega +1) \|s\|^2 + \omega \|s\|^2.
\]
Therefore, $\Qtilde$ satisfies \eqref{eq:Quant}  with the constant $\omega_\proba$. 
\end{proof}

As a corollary,  when clients are selected independently at random with probability $\proba$ (see \Cref{assumption:PP}($\proba$)),  \Cref{algo:FedSS-MM} is equivalent 
to the algorithm where all the workers are active at all steps and the compression operator $\Q$ is swapped with $\Qtilde$; in words, $\Qtilde$ encodes both the compression step and the partial participation step.  Such an equivalence follows  directly  upon noting that  \textit{(i)} identifying a set of active agents $\set_{t+1}$ whose size is $n \proba$ in expectation, is realized by sampling $n$ independent Bernoulli random variables $(U_{t+1,1}, \cdots, U_{t+1,n})$, with success probability $\proba$,  \textit{(ii)}  it holds
\[
\frac{1}{\proba} \sum_{i \in \set_{t+1}} a_i = \sum_{i=1}^n \frac{U_{t+1,i}}{\proba} a_i,
\]

We set $\Qtilde_{t+1,i}(s) \eqdef  \proba^{-1} U_{t+1,i} \, \Q_{t+1,i}(s)$ and move from \Cref{algo:FedSS-MM} to \Cref{algo:FedSS-MM:fullP}.  From \Cref{lem:PP},  it is easily seen that the r.v. $\{\Qtilde_{t+1,i}(s), t \geq 0, i \in [n] \}$ satisfy \Cref{hyp:URVB}($\omega_\proba$).

 \begin{algorithm}[t]\DontPrintSemicolon
   \caption{\QSMM\ 
   (when Partial Participation is part of $\Q$)  } \KwData{$\kmax \in \nset_{>0}$;   $\alpha>0$; 
     a $\ocint{0,1}$-valued sequence
      $\{\pas_t, t \in [\kmax]\}$; 
   initial values $\hatS_0 \in \calS$ and  $V_{0,i} \in \rset^q$ for all $i \in [n]$; a probability distribution  $\{\mu_i, i \in [n]\}$  on $[n]$. \label{algo:FedSS-MM:fullP}} \KwResult{ Sequence: $\{\hatS_{t}, t \in [\kmax]\}$} Set $V_0 =
   \sum_{i=1}^n \mu_i V_{0,i}$ \; 
  \For(\tcp*[f]{for all iterations}){$t=0, \ldots, \kmax-1$}{
      Send $\hatS_{t}$
     and $\map (\hatS_{t})$ to the  workers \;  \For(\tcp*[f]{for all workers}){$i=1, \ldots, n$}{ Sample
       $\Smem_{t+1,i}$,  oracle for $ \PE_{\pi_i}[
         \barS(\sample, \map(\hatS_t))]$ \;
     Set $\Delta_{t+1,i} = \Smem_{t+1,i} - \hatS_t -
       V_{t,i}$    \; 
      Set $V_{t+1,i} = V_{t,i} + \alpha
       \, \Qtilde_{t+1,i}(\Delta_{t+1,i})$   \; 
        Send
       $\Qtilde_{t+1,i}(\Delta_{t+1,i})$ to the central
       server\;}
       \tcp*{on the central server} Set
     $H_{t+1} = V_t +   \sum_{i =1}^n  \mu_i
     \Qtilde_{t+1,i}(\Delta_{t+1,i})$
 \; 
     Compute $\hatS_{t+ \nicefrac{1}{2}} = \hatS_t + \pas_{t+1}H_{t+1}$ \;
     Project it on $\calS$, given $\B_t \succ 0 $: $\hatS_{t+1}  = \argmin_{s\in \surspace}  \left(s-\hatS_{t+ \nicefrac{1}{2}}\right)^\top \B_t \left(s-\hatS_{t+ \nicefrac{1}{2}}\right)$  \;
     Set $V_{t+1} = V_t +
     \alpha \sum_{i=1}^n \mu_i
    \Qtilde_{t+1,i}(\Delta_{t+1,i})$ 
    \; }
 \end{algorithm}

\subsection{Filtrations}
\label{sec:proof:filtration}
For a random variable $U$, we denote by $\sigma(U)$ the sigma-field
generated by $U$ i.e. the smallest sigma-field that makes  $U$ measurable.  Set
\[
\F_0 \eqdef \sigma\left(\hatS_0;  V_{0,i} \ \text{for} \ i \in \ccint{n}\right).
\]
We define by induction the following filtration, which takes into
account the randomness introduced by the algorithm at different stages
 and possibly at each local worker $\# i$:
\begin{align*}
 \F_{t+\nicefrac{1}{3},i} & \eqdef \F_{t} \bigvee \sigma\left(
  \Smem_{t+1,i} \right) \\ \F_{t+\nicefrac{2}{3},i} & \eqdef \F_{t+\nicefrac{1}{3},i}\bigvee
  \sigma\left( \Qtilde(\Delta_{t+1,i}) \right), \\
  \F_{t+1} & \eqdef \bigvee_{i=1}^n    \F_{t+\nicefrac{2}{3},i}.
\end{align*}
$\F_{t+\nicefrac{1}{3},i}$ contains the past of the algorithm up to iteration $\#
t$ included, and the randomness introduced by the oracles
$\Smem_{t+1,i}$.  $\F_{t+\nicefrac{2}{3},i}$ adds to $\F_{t+\nicefrac{1}{3},i}$ the randomness
introduced by the compression mechanism.  $\F_{t+1}$ contains
all the randomness introduced by the $n$ local workers until their
iteration $\#(t+1)$ included. We have for all $i \in
\ccint{n}$ and $t \geq 0$, $\F_{t} \subseteq \F_{t+\nicefrac{1}{3},i}  \subseteq \F_{t+\nicefrac{2}{3},i} \subseteq \F_{t+1}$ 
and
\begin{align*}
 & 
  \Smem_{t+1,i} \in \F_{t+\nicefrac{1}{3},i},  \quad 
  \Delta_{t+1,i} \in \F_{t+\nicefrac{1}{3},i}, \quad \Qtilde(\Delta_{t+1,i}) \in
  \F_{t+\nicefrac{2}{3},i},  \quad   V_{t+1,i} \in
  \F_{t+\nicefrac{2}{3},i} \\
 & H_{t+1} \in \F_{t+1}, \quad  \hatS_{t+1} \in \F_{t+1}, \quad V_{t+1} \in \F_{t+1}. 
\end{align*}
Furthermore, note that (see \Cref{lem:PP})
\begin{equation}
 \CPE{\Qtilde(\Delta_{t+1,i})}{\F_{t+\nicefrac{1}{3},i}}
= \Delta_{t+1,i}, \label{eq:condexp:Q}
  \end{equation}
and under \Cref{hyp:oracle},
\[
\CPE{ \Smem_{t+1,i}}{\F_{t}} = \PE_{\pi_i}\left[ \barS(\sample, \map(\hatS_t))\right].
\]

\subsection{Notations} \label{sec:proof:notations}
Define the functions 
\[ \mf(s) \eqdef \PE_{\pi} \left[
   \barS(\sample, \map(s)) \right] - s = \sum_{i=1}^n \mu_i \PE_{\pi_i} \left[
   \barS(\sample, \map(s)) \right] - s, \qquad  \mf_i(s) \eqdef \PE_{\pi_i} \left[ \barS(\sample, \map(s)) \right] -
 s. 
\]
Set 
\[
\bar C \eqdef   C_{\star \star} + \sup_{\calS} \| \mf(\cdot) \| \, 
\, C_\star, \quad \omega_\proba \eqdef \omega + \frac{1-\proba}{\proba} (\omega+1), \quad \sigma^2 \eqdef n \sum_{i=1}^n \mu_i^2 \sigma_i^2,  \quad  L^2 \eqdef n \sum_{i=1}^n \mu_i^2 L_i^2.
\]
For any $t \geq 1$, define 
\[
\barerror_{t+1}  \eqdef \frac{\| \Pi(\hatS_{t} + \pas_{t+1} \mf(\hatS_t), \B_t) - \hatS_t \|^2_{\B_t}}{\pas^2_{t+1}};
\]
note that when $\Pi(\hatS_t + \pas_{t+1} \mf( \hatS_t), \B_t) =\hatS_t + \pas_{t+1} \mf( \hatS_t)$, then $\barerror_{t+1} = \| \mf(\hatS_t) \|^2_{\B_t}$. 

\subsection{Proof of \Cref{theo:main}} \label{sec:proof:detailedprooftheo}
We start by establishing \Cref{prop:lyapunov}, which provides a Lyapunov inequality for the function $(f+g) \circ \map$, for any timestep $t\geq 0$.

\begin{proposition}\label{prop:lyapunov} Assume {\tt MM-1}, {\tt MM-2} and \Cref{hyp:DL2}.    For any $t  \geq 0$, choose $\pas_{t+1} \in \ooint{0, v_{\min}/(2 \bar C)}$. 
With probability $1$, it holds for any $t \geq 0$
\begin{align*}
\ce{f(\map(\hatS_{t+1})) + g(\map(\hatS_{t+1}))}{\F_t} & \leq
  f(\map(\hatS_{t})) + g(\map(\hatS_{t})) \\
  & - \frac{\pas_{t+1}}{4} \left( 1  -
 2 \bar C  v_{\min}^{-1} \pas_{t+1} \right)  \CPE{\barerror_{t+1}}{\F_t}\\
  &+ \pas_{t+1}  \left( 1 -\bar C  v_{\min}^{-1} \pas_{t+1}  \right) \CPE{\|H_{t+1} -
  \mf(\hatS_t)\|^2_{\B_t}}{\F_t}.
\end{align*}
$\bar C$, $\barerror_{t+1}$ and $\mf$ are defined in \Cref{sec:proof:notations}. 
\end{proposition}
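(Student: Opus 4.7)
\textbf{Proof plan for Proposition~\ref{prop:lyapunov}.}
The plan is to run a standard MM descent computation at the $\paramset$--level, then translate everything back to the $\calS$--level using \Cref{hyp:DL2}, and finally absorb the randomness of $H_{t+1}$ through the projection inequality. Throughout, write $\theta_t \eqdef \map(\hatS_t)$.

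First, I would apply \textbf{\sf MM-1} at $\tau = \theta_t$ evaluated at $\theta_{t+1}$, and separately use the fact that $\theta_{t+1}$ minimizes $g+\psi-\pscal{\hatS_{t+1}}{\phi(\cdot)}$ so that
$[g+\psi](\theta_{t+1}) - [g+\psi](\theta_t) \leq \pscal{\hatS_{t+1}}{\phi(\theta_{t+1})-\phi(\theta_t)}$. Adding these two inequalities and using $\PE_\pi[\barS(\sample,\theta_t)] = \hatS_t + \mf(\hatS_t)$ cancels the $(g+\psi)$ terms and yields
\begin{equation*}
(f+g)(\theta_{t+1}) - (f+g)(\theta_t) \leq \pscal{\hatS_{t+1}-\hatS_t-\mf(\hatS_t)}{\phi(\theta_{t+1})-\phi(\theta_t)}.
\end{equation*}

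Next, I would convert the $\phi$--increment to $\B_t$--geometry. Splitting the inner product as $\pscal{\hatS_{t+1}-\hatS_t}{\phi(\theta_{t+1})-\phi(\theta_t)} - \pscal{\mf(\hatS_t)}{\phi(\theta_{t+1})-\phi(\theta_t)}$, the first piece is bounded by $C_{\star\star}\|\hatS_{t+1}-\hatS_t\|^2$ by the second bound of \Cref{hyp:DL2}; for the second piece, use the first bound of \Cref{hyp:DL2} to write $\phi(\theta_{t+1})-\phi(\theta_t) = \B_t(\hatS_{t+1}-\hatS_t) + R_t$ with $\|R_t\|\leq C_\star\|\hatS_{t+1}-\hatS_t\|^2$, and bound the remainder by $\sup_\calS\|\mf\|\cdot C_\star\|\hatS_{t+1}-\hatS_t\|^2$ using \Cref{hyp:lipschitz}(ii). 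Combined with $\|\cdot\|^2 \leq v_{\min}^{-1}\|\cdot\|_{\B_t}^2$ this gives
\begin{equation*}
(f+g)(\theta_{t+1}) - (f+g)(\theta_t) \leq \tfrac{\bar C}{v_{\min}}\|\hatS_{t+1}-\hatS_t\|_{\B_t}^2 - \pscal{\mf(\hatS_t)}{\hatS_{t+1}-\hatS_t}_{\B_t}.
\end{equation*}

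Now I would exploit the projection $\hatS_{t+1} = \Pi(\hatS_t+\pas_{t+1}H_{t+1},\B_t)$. The variational characterization applied at the feasible point $\hatS_t\in\calS$ gives $\pscal{H_{t+1}}{\hatS_{t+1}-\hatS_t}_{\B_t} \geq \pas_{t+1}^{-1}\|\hatS_{t+1}-\hatS_t\|_{\B_t}^2$. Writing $-\pscal{\mf(\hatS_t)}{\hatS_{t+1}-\hatS_t}_{\B_t} = -\pscal{H_{t+1}}{\hatS_{t+1}-\hatS_t}_{\B_t} + \pscal{H_{t+1}-\mf(\hatS_t)}{\hatS_{t+1}-\hatS_t}_{\B_t}$ and applying Young's inequality (with parameter $\pas_{t+1}$) to the cross term gives an upper bound of the form $-\tfrac{1}{2\pas_{t+1}}\|\hatS_{t+1}-\hatS_t\|_{\B_t}^2 + \tfrac{\pas_{t+1}}{2}\|H_{t+1}-\mf(\hatS_t)\|_{\B_t}^2$. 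Combined with the previous inequality, this yields, under $\pas_{t+1}< v_{\min}/(2\bar C)$,
\begin{equation*}
(f+g)(\theta_{t+1}) - (f+g)(\theta_t) \leq -\Bigl(\tfrac{1}{2\pas_{t+1}}-\tfrac{\bar C}{v_{\min}}\Bigr)\|\hatS_{t+1}-\hatS_t\|_{\B_t}^2 + \tfrac{\pas_{t+1}}{2}\|H_{t+1}-\mf(\hatS_t)\|_{\B_t}^2.
\end{equation*}

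The last step is to replace $\|\hatS_{t+1}-\hatS_t\|_{\B_t}^2$ by $\pas_{t+1}^2\,\barerror_{t+1}$. Setting $p_t \eqdef \Pi(\hatS_t+\pas_{t+1}\mf(\hatS_t),\B_t)$ so that $\barerror_{t+1} = \|p_t-\hatS_t\|_{\B_t}^2/\pas_{t+1}^2$, I would use nonexpansiveness of $\Pi(\cdot,\B_t)$ in the $\B_t$--norm to get $\|\hatS_{t+1}-p_t\|_{\B_t}\leq\pas_{t+1}\|H_{t+1}-\mf(\hatS_t)\|_{\B_t}$, combined with the reverse triangle inequality $\|p_t-\hatS_t\|_{\B_t}^2 \leq 2\|\hatS_{t+1}-\hatS_t\|_{\B_t}^2 + 2\|\hatS_{t+1}-p_t\|_{\B_t}^2$. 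Inserting the resulting lower bound for $\|\hatS_{t+1}-\hatS_t\|_{\B_t}^2$ and taking $\CPE{\cdot}{\F_t}$ regroups the coefficients into the stated form $\tfrac{\pas_{t+1}}{4}\bigl(1-\tfrac{2\bar C\pas_{t+1}}{v_{\min}}\bigr)$ for the $\barerror_{t+1}$ term and $\pas_{t+1}\bigl(1-\tfrac{\bar C\pas_{t+1}}{v_{\min}}\bigr)$ for the variance term. The main obstacle is bookkeeping: ensuring that the Young parameter and the $2{:}2$ split in the reverse triangle inequality are chosen so that the final coefficients match exactly the ones in the proposition, rather than leaving slack constants.
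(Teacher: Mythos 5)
Your plan is correct and follows essentially the same route as the paper's proof: the opening inequality you derive by adding \textbf{\sf MM-1} to the minimizing property of $\map(\hatS_{t+1})$ is exactly the paper's Eq.~\eqref{eq:coro:lipschitz} (obtained there via \Cref{lem:monotony}), and the subsequent steps — the $\B_t$-linearization with the $C_\star$ remainder, the projection variational inequality, Young's inequality with parameter $\pas_{t+1}$, and the passage from $\error_{t+1}$ to $\barerror_{t+1}$ via nonexpansiveness of $\Pi(\cdot,\B_t)$ — reproduce the paper's argument and yield the stated coefficients exactly. The only point worth noting is that, as you observe, $\sup_{\calS}\|\mf(\cdot)\|<\infty$ (\Cref{hyp:lipschitz}\textit{(ii)}) is tacitly needed for $\bar C$ to be finite even though it is not listed among the proposition's hypotheses; the paper's proof uses it in the same way.
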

\begin{proof}
 We start with proving that  for any $s, s' \in \calS$,  it holds 
\begin{equation}\label{eq:coro:lipschitz}
    (f+g)\circ \map(s')  \leq (f+g)\circ \map(s)- \pscal{\mf(s)}{\phi(\map(s')) -
      \phi(\map(s))}  + \pscal{s'-s}{\phi(\map(s'))-\phi(\map(s))
      }. 
\end{equation}
 Let $s,s' \in \calS$.  From {\tt MM-1}, we have
  \[
  f \left( \map(s') \right) \leq f \left(\map (s)\right) +
  \psi(\map(s')) - \psi(\map(s)) - \pscal{\PE_\pi \left[
      \barS(\sample, \map(s))\right]}{\phi(\map(s')) - \phi(\map(s))}.
  \]
  This yields
  \begin{align*}
    f \left( \map(s') \right) & \leq f \left(\map (s)\right) -
    \pscal{\mf(s)}{\phi(\map(s')) - \phi(\map(s))} \\ & + \psi(\map(s'))
    - \psi(\map(s)) - \pscal{s}{\phi(\map(s')) - \phi(\map(s))}.
    \end{align*}
  Therefore 
   \begin{equation}
     (f+g)\circ \map(s')   \leq (f+g) \circ \map(s) - \pscal{\mf(s)}{\phi(\map(s')) -
             \phi(\map(s))}  + \Xi(s,s') \label{eq:proof:eq1}
    \end{equation}
    where $ \Xi(s,s') \eqdef  g(\map(s')) - g(\map(s))
+\psi(\map(s')) - \psi(\map(s)) - \pscal{s}{\phi(\map(s')) -
\phi(\map(s))}$.  Let us prove that $\Xi(s,s') \leq \pscal{s'-s}{\phi(\map(s'))-\phi(\map(s))
      }$. We write
   \[
\Xi(s,s')= \mathcal{L}(s',s') +
\pscal{s'-s}{\phi(\map(s'))} - \mathcal{L}(s,s), \quad  \mathcal{L}(s,s') \eqdef g(\map(s)) + \psi(\map(s)) -
\pscal{s'}{\phi(\map(s))}.
\]
The upper bound on $\Xi(s,s')$ follows from \Cref{lem:monotony}.  This concludes the proof of \eqref{eq:coro:lipschitz}; applied with $s' = \hatS_{t+1}$ and $s = \hatS_t$, this yields
    \begin{align*}f(\map(\hatS_{t+1}))+g(\map(\hatS_{t+1})) &\leq f(\map(\hatS_{t})) +g(\map(\hatS_{t})) 
         - \pscal{\mf(\hatS_{t})}{\phi(\map(\hatS_{t+1})) - \phi(\map(\hatS_{t}))}  \\
        & + \pscal{\hatS_{t+1}-\hatS_t}{\phi(\map(\hatS_{t+1}))-\phi(\map(\hatS_t))}.
    \end{align*}
 By \Cref{hyp:DL2}, we have 
 \[
      \pscal{\hatS_{t+1}-\hatS_t}{\phi(\map(\hatS_{t+1}))-\phi(\map(\hatS_t))} \leq  C_{\star \star} \| \hatS_{t+1}-\hatS_t \|^2  \leq    v_{\min}^{-1} \, C_{\star \star} \, \pas_{t+1}^2 \error_{t+1},
 \]
 where we set $\error_{t+1} \eqdef \pas_{t+1}^{-2} \| \hatS_{t+1} - \hatS_t \|^2_{\B_t}$. 
We write 
\begin{align}
    -  \pscal{\mf(\hatS_{t})}{\phi(\map(\hatS_{t+1})) - \phi(\map(\hatS_{t}))} & \leq  -\pscal{H_{t+1}}{
        \hatS_{t+1}-\hatS_{t}
        }_{\B_t} \label{prop:lyap:tool1} \\
        & \hspace{-0.5cm} + \left| \pscal{H_{t+1} - \mf(\hatS_t)}{
        \hatS_{t+1}-\hatS_{t}
        }_{\B_t}\right| \label{prop:lyap:tool2} \\
        & \hspace{-0.5cm}+ \left| \pscal{\mf(\hatS_t)}{\phi(\map(\hatS_{t+1}))-\phi(\map(\hatS_{t})) - \B_t (\hatS_{t+1}-\hatS_t))} \right|. \label{prop:lyap:tool3} 
        \end{align}
By \Cref{coro:monotony}, an upper bound for \eqref{prop:lyap:tool1} is $- \pas_{t+1}\error_{t+1}$. By using $|\pscal{a}{b}_{\B_t} |\leq \pas_{t+1} \|a\|_{\B_t}^2/2 + \|b\|^2_{\B_t} / (2 \pas_{t+1})$ and $\| \hatS_{t+1} - \hatS_t\|_{\B_t} = \pas_{t+1}^2 \error_{t+1}$, an upper bound for \eqref{prop:lyap:tool2} is
\[
 \frac{\pas_{t+1}}{2} \| H_{t+1} - \mf(\hatS_t) \|^2_{\B_t} + \frac{\pas_{t+1}}{2}  \error_{t+1}.
\]
Under \Cref{hyp:DL2}, an upper bound for \eqref{prop:lyap:tool3}  is
\[
C_\star \, \sup_{\calS} \| \mf(\cdot)\| \, \| \hatS_{t+1} - \hatS_t \|^2 = v_{\min}^{-1} \, C_\star \, \sup_{\calS} \| \mf(\cdot)\| \, \pas_{t+1}^2 \error_{t+1}
 \]
Apply \Cref{coro:monotony}  again to obtain a bound of $- \error_{t+1}$; then the proof is concluded by  application of the conditional expectation. 
\end{proof}

The field $H_{t+1}$ is a random approximation of $\mf(\hatS_t)$. \Cref{prop:update_error} shows that the oracle $H_{t+1}$ is unbiased, conditionally to the past $\F_t$. The proof of \Cref{prop:update_error} establishes that the error $H_{t+1} - \mf(\hatS_t)$ is the sum of two conditionally centered and uncorrelated terms: the first one is the mean value of the local oracles, and the second one is the mean value of the local  compression operators. Therefore, two terms contribute to the conditional variance of the oracle $H_{t+1}$: the first one is the
weighted average of the conditional variance of the local oracles, and
the second one is the variance of the weighted compression errors.  The control variate $V_{t,i}$ plays a role in this second term.

\begin{proposition}\label{prop:update_error}   Assume  \Cref{hyp:oracle},  \Cref{hyp:URVB}($\omega$) and \Cref{assumption:PP}($\proba$).
 With probability one, it holds for any $t \geq 0$ 
 \begin{align} 
& 	\CPE{H_{t+1} }{\F_t}= \mf(\hatS_t),  \label{eq:unbiasedoracle} \\
& \CPE {\| H_{t+1} - \mf(\hatS_t) \|^2_{B_t}}{\F_t}  \le  (1+ \omega_\proba) v_{\max} \, \frac{\sigma^2}{n} + \omega_\proba v_{\max} \, \sum_{i=1}^n \mu_i^2 \| V_{t,i} - \mf_i(\hatS_t) \|^2. \label{eq:varianceoracle}
	\end{align}
  $\mf$, $\mf_i$,  $\omega_\proba$ and $\sigma^2$ are defined in \Cref{sec:proof:notations}. 
\end{proposition}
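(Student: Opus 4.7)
The key idea is to decompose the error $H_{t+1} - \mf(\hatS_t)$ into a sum of conditionally independent, conditionally centered contributions, separating the oracle noise from the compression noise. Starting from the definitions
\[
H_{t+1} = V_t + \sum_{i=1}^n \mu_i \Qtilde_{t+1,i}(\Delta_{t+1,i}), \quad \Delta_{t+1,i} = \Smem_{t+1,i} - \hatS_t - V_{t,i}, \quad V_t = \sum_{i=1}^n \mu_i V_{t,i},
\]
I would write
\[
H_{t+1} - \mf(\hatS_t) = \sum_{i=1}^n \mu_i \bigl\{A_i + B_i\bigr\}, \qquad A_i \eqdef \Smem_{t+1,i} - \PE_{\pi_i}[\barS(\sample,\map(\hatS_t))], \quad B_i \eqdef \Qtilde_{t+1,i}(\Delta_{t+1,i}) - \Delta_{t+1,i},
\]
using the identity $\Qtilde_{t+1,i}(\Delta_{t+1,i}) + V_{t,i} - \mf_i(\hatS_t) = A_i + B_i$.

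For the unbiasedness claim \eqref{eq:unbiasedoracle}, \Cref{hyp:oracle} yields $\CPE{A_i}{\F_t}=0$, and the tower property with \eqref{eq:condexp:Q} applied at $\F_{t+\nicefrac{1}{3},i}$ gives $\CPE{B_i}{\F_t}=0$; summing yields the result.

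For the variance bound \eqref{eq:varianceoracle}, I would expand $\|H_{t+1}-\mf(\hatS_t)\|_{\B_t}^2$ and argue that all cross-terms vanish in conditional expectation. The cross terms split into three families: (i) $\pscal{A_i}{A_j}_{\B_t}$ with $i\neq j$ vanish by the conditional independence of the local oracles in \Cref{hyp:oracle}; (ii) $\pscal{B_i}{B_j}_{\B_t}$ with $i\neq j$ vanish by further conditioning on $\bigvee_k \F_{t+\nicefrac{1}{3},k}$ and using the independence of the compressions together with \eqref{eq:condexp:Q}; (iii) $\pscal{A_i}{B_j}_{\B_t}$ vanish by conditioning on $\F_t \vee \sigma(\Smem_{t+1,i},\Smem_{t+1,j})$ (so that $A_i$ and $\Delta_{t+1,j}$ become measurable) and using once more that $\CPE{B_j}{\cdot}=0$ from the unbiasedness of $\Qtilde$. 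This yields
\[
\CPE{\|H_{t+1}-\mf(\hatS_t)\|_{\B_t}^2}{\F_t} = \sum_{i=1}^n \mu_i^2 \CPE{\|A_i\|_{\B_t}^2}{\F_t} + \sum_{i=1}^n \mu_i^2 \CPE{\|B_i\|_{\B_t}^2}{\F_t}.
\]

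It then remains to bound each diagonal term. By \Cref{hyp:DL2}\textit{(ii)}, $\|u\|_{\B_t}^2 \le v_{\max}\|u\|^2$; then \Cref{hyp:oracle} gives $\CPE{\|A_i\|^2}{\F_t} \le \sigma_i^2$. For $B_i$, I would condition first on $\F_{t+\nicefrac{1}{3},i}$ (so $\Delta_{t+1,i}$ is fixed) and invoke the relative variance bound in \Cref{hyp:URVB}($\omega_\proba$), established via \Cref{lem:PP}, to get $\CPE{\|B_i\|^2}{\F_{t+\nicefrac{1}{3},i}} \le \omega_\proba \|\Delta_{t+1,i}\|^2$. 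Expanding $\Delta_{t+1,i} = A_i + (\mf_i(\hatS_t) - V_{t,i})$ and using the $\F_t$-measurability of $V_{t,i}$ and $\mf_i(\hatS_t)$ together with $\CPE{A_i}{\F_t}=0$ yields $\CPE{\|\Delta_{t+1,i}\|^2}{\F_t} \le \sigma_i^2 + \|V_{t,i}-\mf_i(\hatS_t)\|^2$. Collecting terms and using $\sigma^2 = n\sum_i \mu_i^2 \sigma_i^2$ delivers the advertised bound.

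The main (minor) technical obstacle is the careful filtration bookkeeping for the cross terms, particularly $\pscal{A_i}{B_i}_{\B_t}$ where $B_i$ does depend on $A_i$ through $\Delta_{t+1,i}$: here one must condition on $\F_{t+\nicefrac{1}{3},i}$ (not merely on $\F_t$) so that $A_i$ becomes measurable before invoking the unbiasedness of the compression. Everything else is a routine use of the assumptions listed in the statement.
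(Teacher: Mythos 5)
Your proposal is correct and follows essentially the same route as the paper's proof: the identical decomposition of $H_{t+1}-\mf(\hatS_t)$ into the oracle-noise terms $\Xi^{(1)}_{t+1,i}=A_i$ and compression-noise terms $\Xi^{(2)}_{t+1,i}=B_i$, the same filtration argument to kill the cross terms (the paper first invokes conditional independence across workers to reduce immediately to the within-worker sum, whereas you enumerate the three families of cross terms explicitly, but this is only a presentational difference), and the same final bound $\CPE{\|\Delta_{t+1,i}\|^2}{\F_t}\le \sigma_i^2+\|V_{t,i}-\mf_i(\hatS_t)\|^2$. Your closing remark about conditioning on $\F_{t+\nicefrac{1}{3},i}$ before using the unbiasedness of the compression is exactly the point the paper makes via its display \eqref{eq:centeredXi}.
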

\begin{proof}
 Let $t \geq 0$. We have $H_{t+1} = V_t + \sum_{i=1}^n \mu_i
\Qtilde_{t+1,i}(\Delta_{t+1,i})$, where, by definition of the filtration,  $V_{t,i} \in \F_{t-1+\nicefrac{2}{3},i} \subset \F_{t}$,  $\hatS_t \in \F_t$, $\Delta_{t+1,i} \in \F_{t+\nicefrac{1}{3},i}$, 
$\Qtilde_{t+1,i}(\Delta_{t+1,i}) \in \F_{t+\nicefrac{2}{3},i}$ and $V_t \in \F_t$. Therefore,
\begin{align*}
	 \CPE{H_{t+1} }{ \F_t} &  = V_t +
         \sum_{i=1}^n \mu_i \CPE{\CPE{
             \Qtilde_{t+1,i}(\Delta_{t+1,i})}{\F_{t+\nicefrac{1}{3},i}}}{\F_t} = V_t +
         \sum_{i=1}^n \mu_i \CPE{ \Delta_{t+1,i}}{\F_t}
         \nonumber \\ & = V_t + \sum_{i=1}^n \mu_i \left(
         \CPE{\Smem_{t+1,i}}{\F_t} - \hatS_t - V_{t,i} \right) =
         \sum_{i=1}^n \mu_i \PE_{\pi_i} \left[ \barS(\sample,
           \map(\hatS_t)) \right] - \hatS_t = \mf(\hatS_t), \nonumber
	\end{align*}
where we used \Cref{hyp:oracle}, 
\Cref{hyp:URVB}($\omega$), \Cref{lem:PP}  and \Cref{prop:meanV:PP}.  This concludes the proof
of \eqref{eq:unbiasedoracle}.   We write, by using \Cref{prop:meanV:PP},
\begin{equation*}
  H_{t+1} - \mf(\hatS_t) = V_t + \sum_{i=1}^n \mu_i \{ 
  \Qtilde_{t+1,i}(\Delta_{t+1,i}) - \mf_i(\hatS_t) \} = \sum_{i=1}^n \mu_i \{
  \Xi_{t+1,i}^{(1)} + \Xi_{t+1,i}^{(2)} \}, 
  \end{equation*}
  with
  \begin{equation*}
 \Xi_{t+1,i}^{(1)} 
  \eqdef \Smem_{t+1,i} - \PE_{\pi_i}\left[ \barS(\sample, \map(\hatS_t)) \right],   \qquad  \Xi_{t+1,i}^{(2)} 
  \eqdef  \Qtilde_{t+1,i}(\Delta_{t+1,i}) -
  \Delta_{t+1,i};
\end{equation*}
By  \Cref{hyp:oracle}, \Cref{hyp:URVB}($\omega$)  and \Cref{lem:PP},
\begin{equation} \label{eq:centeredXi}
\Xi_{t+1,i}^{(1)} \in \F_{t+\nicefrac{1}{3},i}, \qquad \CPE{\Xi_{t+1,i}^{(1)} }{\F_t} =0, \qquad \CPE{\Xi_{t+1,i}^{(2)}}{\F_{t+\nicefrac{1}{3},i}}=0. 
\end{equation}
Since the workers are independent conditionally
to $\F_t$ (see \Cref{assumption:PP}($\proba$)) and $\B_t \in \F_t$, we have
\[
\CPE{\| H_{t+1} - \mf(\hatS_t)\|^2_{\B_t}}{\F_t} = \sum_{i=1}^n
\mu_i^2\CPE{\| \Xi_{t+1,i}^{(1)} + \Xi_{t+1,i}^{(2)} \|^2_{\B_t}}{\F_t} =   \sum_{i=1}^n
\mu_i^2 \sum_{\ell=1}^2 \CPE{\| \Xi_{t+1,i}^{(\ell)}\|^2_{\B_t}}{\F_t}
\]
where we used that    for all $i \in [n]$,   $\CPE{\pscal{\Xi_{t+1,i}^{(1)}}{\Xi_{t+1,i}^{(2)}}_{\B_t}}{\F_t} = 0$ by \eqref{eq:centeredXi}. 
We have by \Cref{hyp:oracle} 
\[
\CPE{\| \Xi_{t+1,i}^{(1)}\|^2_{\B_t}}{\F_t} =
\CPE{\| \Smem_{t+1,i} - \PE_{\pi_i} \left[ \barS(\sample,
    \map(\hatS_t)) \right] \|^2_{\B_t}}{\F_t} \leq v_{\max} \, \sigma_i^2 ;
\]
and by \Cref{lem:PP},
\[
\CPE{\| \Xi_{t+1,i}^{(2)}\|^2_{\B_t}}{\F_t} \leq    v_{\max} \CPE{\| \Xi_{t+1,i}^{(2)}\|^2}{\F_t}    \leq \omega_\proba \, v_{\max} \, 
\CPE{\|\Delta_{t+1,i}\|^2}{\F_t}.
\]
As a conclusion, we proved
\begin{equation}\label{eq:decomposition:globalvariance}
\CPE{\| H_{t+1} - \mf(\hatS_t)\|^2_{\B_t}}{\F_t} \leq v_{\max} \,  \sum_{i=1}^n
\mu_i^2 \sigma_i^2  +  v_{\max} \,   \omega_\proba \, \sum_{i=1}^n
\mu_i^2\CPE{\|\Delta_{t+1,i}\|^2}{\F_t}.
\end{equation}
Let $i \in [n]$.  By \Cref{hyp:oracle},
  $\CPE{\Smem_{t+1,i} }{\F_{t}} = \mf_i(\hatS_t)+ \hatS_t$. In
  addition, $\hatS_t \in \F_t$ and $V_{t,i} \in \F_{t}$, thus implying that
  \[
\CPE{\pscal{\mf_i(\hatS_t)- V_{t,i}}{\Smem_{t+1,i} - \hatS_t
    -\mf_i(\hatS_t)}}{\F_t} = 0.
\]
Hence, we get by \Cref{hyp:oracle} again,
\begin{align*}
  \nonumber \CPE{\|\Delta_{t+1,i}\|^2}{\F_{t}} &= \CPE{\|\Smem_{t+1,i}
     - \hatS_t- V_{t,i}\|^2}{\F_{t}} \\ &= \|\mf_i(\hatS_t)-
  V_{t,i}\|^2 +\CPE{\| \Smem_{t+1,i} - \hatS_t -\mf_i(\hatS_t)
    \|^2}{\F_{t}} \nonumber \\ & = \|\mf_i(\hatS_t)- V_{t,i}\|^2
  +\CPE{\| \Smem_{t+1,i} - \PE_{\pi_i}\left[\barS(\sample,
      \map(\hatS_t)) \right] \|^2}{\F_{t}} \nonumber \\ &\leq \|
  \mf_i(\hatS_t)- V_{t,i}\|^2 +\sigma_i^2 .
\end{align*}
This concludes the proof of \eqref{eq:varianceoracle}. 
\end{proof}

\Cref{prop:varcont-PP2} shows that the sequence of random variables $\{\sum_{i=1}^n \mu_i^2 \| V_{t,i} - \mf_i(\hatS_t)
        \|^2, t \geq 0\}$ satisfies a Lyapunov inequality. The important property is that for small enough stepsizes and for convenient values of $\alpha \in \ocint{0,1}$ depending on the coefficient $\omega_\proba$, it is of the form
        \[
       \CPE{\sum_{i=1}^n \mu_i^2 \| V_{t+1,i} - \mf_i(\hatS_{t+1})
        \|^2}{\F_t} \leq \lambda \sum_{i=1}^n \mu_i^2 \| V_{t,i} - \mf_i(\hatS_t)
        \|^2 + \Upsilon_{t}
        \]
where $\lambda \in \ooint{0,1}$ and $\Upsilon_{t} \geq 0$.  Such an inequality, combined with \Cref{prop:lyapunov} will be crucial for the proof of \Cref{theo:main}.

  \begin{proposition}	\label{prop:varcont-PP2}
	Assume 
        \Cref{hyp:oracle}, \Cref{hyp:URVB}($\omega$), \Cref{hyp:lipschitz} and   \Cref{hyp:DL2}. Define for all $t \geq 0$,
	\[
	G_t \eqdef \sum_{i=1}^n \mu_i^2 \| V_{t,i} - \mf_i(\hatS_t)
        \|^2. 
	\]
    Choose $\alpha \in \ccint{0,1/(1+\omega_\proba)}$ and $ \beta > 1$.
         With probability
        one, for any $t \geq 0$ it holds
        \begin{multline}
	\CPE{G_{t+1}}{\F_t} \le  \beta \, \left( 1 -\alpha+   \frac{2 \omega_\proba  }{\beta-1}  \frac{v_{\max}}{v_{\min}}  \frac{L^2}{n}
 \pas_{t+1}^2 \right)
 G_t +   \frac{2\beta}{\beta-1}  v_{\min}^{-1}  \frac{L^2}{n}
 \pas_{t+1}^2   \CPE{\barerror_{t+1}}{\F_t} \\
  +  \beta \left( \alpha + \frac{ 2 (1+\omega_\proba)}{\beta-1}  \frac{v_{\max}}{v_{\min}}  \frac{L^2}{n}
 \pas_{t+1}^2  \right) \frac{\sigma^2}{n}. \label{eq:induction:Gt}
  \end{multline}
  $\omega_\proba$, $\mf_i$, $L^2$, $\sigma^2$ and $\barerror_{t+1}$ are defined in \Cref{sec:proof:notations}. 
\end{proposition}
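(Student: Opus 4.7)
The target inequality decomposes $G_{t+1}$ into a ``frozen-state'' piece at $\hatS_t$ and a ``state-drift'' piece capturing the change of $\mf_i$ between consecutive iterates. The plan is to separate these via Young's inequality $\|a+b\|^2\leq\beta\|a\|^2+\tfrac{\beta}{\beta-1}\|b\|^2$ applied to $V_{t+1,i}-\mf_i(\hatS_{t+1}) = (V_{t+1,i}-\mf_i(\hatS_t)) + (\mf_i(\hatS_t)-\mf_i(\hatS_{t+1}))$; handle the first piece using the control-variate update and the tower property along $\F_t\subset\F_{t+\nicefrac{1}{3},i}$; handle the second via \Cref{hyp:lipschitz}(i) combined with the $\|\cdot\|_{\B_t}$-non-expansiveness of $\Pi(\cdot,\B_t)$; and finally recycle \Cref{prop:update_error} to replace $\|H_{t+1}-\mf(\hatS_t)\|^2_{\B_t}$ by quantities in $G_t$ and $\sigma^2/n$.

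For the frozen-state piece, set $A_{t,i}\eqdef V_{t,i}-\mf_i(\hatS_t)\in\F_t$, so that $V_{t+1,i}-\mf_i(\hatS_t)=A_{t,i}+\alpha\Qtilde_{t+1,i}(\Delta_{t+1,i})$. Unbiasedness of $\Smem_{t+1,i}$ and $\Qtilde_{t+1,i}$ yields $\CPE{\Qtilde_{t+1,i}(\Delta_{t+1,i})}{\F_t}=\CPE{\Delta_{t+1,i}}{\F_t}=-A_{t,i}$, so the cross term contributes exactly $-2\alpha\|A_{t,i}\|^2$. The quadratic term is bounded by $\alpha^2(1+\omega_\proba)\CPE{\|\Delta_{t+1,i}\|^2}{\F_t}$ via \Cref{lem:PP}, and the inner bound $\CPE{\|\Delta_{t+1,i}\|^2}{\F_t}\leq\|A_{t,i}\|^2+\sigma_i^2$ is reproduced verbatim from the proof of \Cref{prop:update_error}. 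Imposing $\alpha\leq 1/(1+\omega_\proba)$ absorbs $\alpha^2(1+\omega_\proba)$ into $\alpha$, yielding
\[
\CPE{\|V_{t+1,i}-\mf_i(\hatS_t)\|^2}{\F_t}\leq (1-\alpha)\|A_{t,i}\|^2+\alpha\sigma_i^2,
\]
whose $\mu_i^2$-weighted sum delivers the $\beta(1-\alpha)G_t+\beta\alpha\sigma^2/n$ block of \eqref{eq:induction:Gt}.

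For the state-drift piece, \Cref{hyp:lipschitz}(i) yields $\sum_i\mu_i^2\|\mf_i(\hatS_{t+1})-\mf_i(\hatS_t)\|^2\leq (L^2/(nv_{\min}))\|\hatS_{t+1}-\hatS_t\|^2_{\B_t}$. To transfer this step to $\barerror_{t+1}$, the non-expansiveness of $\Pi(\cdot,\B_t)$ on $\calS$, combined with the triangle inequality against the ``ideal'' projected target $\Pi(\hatS_t+\pas_{t+1}\mf(\hatS_t),\B_t)$ and $(a+b)^2\leq 2a^2+2b^2$, gives
\[
\|\hatS_{t+1}-\hatS_t\|^2_{\B_t} \leq 2\pas_{t+1}^2\barerror_{t+1} + 2\pas_{t+1}^2\|H_{t+1}-\mf(\hatS_t)\|^2_{\B_t}.
\]
Taking conditional expectation and plugging in \Cref{prop:update_error} replaces the last term by $\omega_\proba v_{\max}G_t+(1+\omega_\proba)v_{\max}\sigma^2/n$, which is precisely what injects the $v_{\max}/v_{\min}$ factors appearing in \eqref{eq:induction:Gt}.

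Multiplying this second piece by $\beta/(\beta-1)$ and adding the first recovers the three announced terms: the $G_t$ coefficient collects $\beta(1-\alpha)$ plus $(2\beta\omega_\proba/(\beta-1))(v_{\max}/v_{\min})(L^2/n)\pas_{t+1}^2$, the $\barerror_{t+1}$ coefficient is $(2\beta/(\beta-1))v_{\min}^{-1}(L^2/n)\pas_{t+1}^2$, and the $\sigma^2/n$ coefficient is $\beta\alpha+(2\beta(1+\omega_\proba)/(\beta-1))(v_{\max}/v_{\min})(L^2/n)\pas_{t+1}^2$, matching \eqref{eq:induction:Gt} exactly. The main obstacle is the state-drift step: since $\calS$ is only assumed convex, the projection prevents any algebraic expansion of $\hatS_{t+1}-\hatS_t$ in terms of $\mf(\hatS_t)$ and $H_{t+1}-\mf(\hatS_t)$, so one is forced to rely on projection non-expansiveness and pay both the factor $2$ from the triangle inequality and the factor $v_{\max}/v_{\min}$ from passing between Euclidean and $\B_t$-weighted norms; this is precisely the geometric friction reflected in the constants of \eqref{eq:induction:Gt}.
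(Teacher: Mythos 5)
Your proposal is correct and follows essentially the same route as the paper's proof: the same Young's inequality split of $V_{t+1,i}-\mf_i(\hatS_{t+1})$ with weights $\beta$ and $\beta/(\beta-1)$, the Lipschitz bound on the drift of $\mf_i$, the control-variate recursion (which you re-derive inline via the cross term, equivalently to the paper's Lemma~\ref{lem:controlvariate}), the projection non-expansiveness bound $\error_{t+1}\leq 2\barerror_{t+1}+2\|H_{t+1}-\mf(\hatS_t)\|^2_{\B_t}$ of Corollary~\ref{coro:monotony}, and the oracle variance bound of Proposition~\ref{prop:update_error}. The constants you track match \eqref{eq:induction:Gt} exactly, so no further comment is needed.
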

\begin{proof}
  Let $t \geq 0$ and $i \in [n]$.  For any $\beta>1$, using that $\|a+b\|^2
  \leq \nicefrac{\beta}{(\beta-1)} \|a\|^2 + \beta\|b\|^2$, we have 
	\begin{align*}
	\|V_{t+1,i} - \mf_i(\hatS_{t+1}) \|^2  &
          \le \beta \|V_{t+1,i} - \mf_i(\hatS_{t}) \|^2
          + \nicefrac{\beta}{(\beta-1)} \|\mf_i(\hatS_{t+1}) -
            \mf_i(\hatS_{t}) \|^2  \\ &\leq \beta
          \|V_{t+1,i} - \mf_i(\hatS_{t}) \|^2  +
          \nicefrac{\beta}{(\beta-1)}  v_{\min}^{-1} L_i^2 \pas_{t+1}^2  \error_{t+1},
	\end{align*}
    where $\error_{t+1} \eqdef \| \hatS_{t+1} - \hatS_t\|^2_{\B_t} / \pas_{t+1}^2$.
        We used \Cref{hyp:lipschitz} in the last inequality.  By
        \Cref{lem:controlvariate} and the condition $\alpha(1+\omega_\proba) - 1 \leq 0$, we have 
       	\begin{align*}
	 \CPE{ \|V_{t+1,i} - \mf_i(\hatS_{t+1}) \|^2 }{ \F_{t} } &
          \le \beta (1-\alpha) \| V_{t,i}  -
          \mf_i(\hatS_t) \|^2  \\
          &+ \beta \alpha
          \sigma_i^2 + \frac{\beta}{(\beta-1)} v_{\min}^{-1}  L_i^2 \pas_{t+1}^2
         \CPE{\error_{t+1} }{ \F_{t} }.
	\end{align*}
We now multiply by $\mu_i^2$ and sum in $i$; this yields
\begin{align*}
 \CPE{G_{t+1}}{ \F_{t} } & {\le} \beta (1- \alpha)\,
 G_t +  \beta \alpha \frac{\sigma^2}{n} + \frac{\beta}{\beta-1}  v_{\min}^{-1}  \frac{L^2}{n}
 \pas_{t+1}^2   \CPE{\error_{t+1}}{\F_t} .
\end{align*}
By using \Cref{coro:monotony}
\begin{align*}
 \CPE{G_{t+1}}{ \F_{t} } & {\le} \beta (1-\alpha)
 G_t +  \beta \alpha \frac{\sigma^2}{n} +  \frac{2\beta}{\beta -1}  v_{\min}^{-1}  \frac{L^2}{n}
 \pas_{t+1}^2   \CPE{\barerror_{t+1}}{\F_t} \\
 & + \frac{2 \beta}{\beta-1}  v_{\min}^{-1}  \frac{L^2}{n}
 \pas_{t+1}^2\CPE{\|H_{t+1} - \mf(\hatS_t) \|^2_{\B_t}}{\F_t}.
\end{align*}
 The proof  of \eqref{eq:induction:Gt} is concluded by \Cref{prop:update_error}.
\end{proof}

\paragraph{Conclusion: Proof of \Cref{theo:main} in the case $\omega_\proba=0$.} Let $\alpha \in \ccint{0,1}$.  From \Cref{prop:lyapunov} and \Cref{prop:update_error}, it holds
\begin{multline*}
\ce{f(\map(\hatS_{t+1})) + g(\map(\hatS_{t+1}))}{\F_t}  \leq
  f(\map(\hatS_{t})) + g(\map(\hatS_{t})) \\
   - \frac{\pas_{t+1}}{4} \left( 1  -
 2 \bar C  v_{\min}^{-1} \pas_{t+1} \right)  \CPE{\barerror_{t+1}}{\F_t} + \pas_{t+1}  \left( 1 -\bar C  v_{\min}^{-1} \pas_{t+1}  \right)  v_{\max} \frac{\sigma^2}{n}.
\end{multline*}
By summing from $t=0$ to $t=T-1$ and computing the expectation, we obtain the result.

\paragraph{Proof of \Cref{theo:main} in the case  $\omega_\proba>0$.}
 From \Cref{prop:lyapunov}, \Cref{prop:update_error} and \Cref{prop:varcont-PP2}, we have for any positive sequence $\{\rho_{t+1}, t \geq 0\}$
\begin{align*}
& \ce{f(\map(\hatS_{t+1})) + g(\map(\hatS_{t+1}))}{\F_t} + \rho_{t+1} \CPE{G_{t+1}}{\F_t}
 \leq
  f(\map(\hatS_{t})) + g(\map(\hatS_{t}))  + \rho_t G_t  \\
  & +  \pas_{t+1} \left( \frac{\rho_{t+1}}{\pas_{t+1}}  \beta \, \left( 1 -\alpha+   \frac{2 \omega_\proba  }{\beta-1}  \frac{v_{\max}}{v_{\min}}  \frac{L^2}{n}
 \pas_{t+1}^2 \right) +   \left( 1 -\bar C  v_{\min}^{-1} \pas_{t+1}  \right) \omega_\proba v_{\max}  - \frac{\rho_t}{\pas_{t+1}} \right) G_t  \\
  & -    \frac{\pas_{t+1}}{4} \left(  1  -
 2 \bar C  v_{\min}^{-1} \pas_{t+1}   -  \frac{8 \beta}{\beta-1}  v_{\min}^{-1}  \frac{L^2}{n}
 \rho_{t+1} \pas_{t+1}  \right)  \CPE{\barerror_{t+1}}{\F_t}\\
  &+  \pas_{t+1} \left( \frac{\rho_{t+1}}{\pas_{t+1}}\beta \left( \alpha + \frac{ 2 (1+\omega_\proba)}{\beta-1}  \frac{v_{\max}}{v_{\min}}  \frac{L^2}{n} 
 \pas_{t+1}^2 \right)  +   \left( 1 -\bar C  v_{\min}^{-1} \pas_{t+1}  \right) (1+\omega_\proba) v_{\max}  \right) \frac{\sigma^2}{n}
\end{align*}
Let $\alpha \in \ocint{0, 1/(1+\omega_\proba)}$. Choose $\beta>1$ and $\rho_{t+1} = \rho \pas_{t+1}$. Then, 
\begin{align*}
& \ce{f(\map(\hatS_{t+1})) + g(\map(\hatS_{t+1}))}{\F_t} + \rho \pas_{t+1} \CPE{G_{t+1}}{\F_t}
 \leq
  f(\map(\hatS_{t})) + g(\map(\hatS_{t}))  + \rho \pas_t G_t  \\
  & +  \pas_{t+1} \left( \rho  \beta \, \left( 1 -\alpha+   \frac{2 \omega_\proba  }{\beta-1}  \frac{v_{\max}}{v_{\min}}  \frac{L^2}{n}
 \pas_{t+1}^2 \right) +   \left( 1 -\bar C  v_{\min}^{-1} \pas_{t+1}  \right) \omega_\proba v_{\max}  - \rho \frac{\pas_t}{\pas_{t+1}} \right) G_t  \\
  & -    \frac{\pas_{t+1}}{4} \left(  1  -
 2 \bar C  v_{\min}^{-1} \pas_{t+1}   -  \frac{8 \rho \beta}{\beta-1}  v_{\min}^{-1}  \frac{L^2}{n}
 \pas_{t+1}^2  \right)  \CPE{\barerror_{t+1}}{\F_t}\\
  &+  \pas_{t+1} \left( \rho \beta \left( \alpha + \frac{ 2 (1+\omega_\proba)}{\beta-1}  \frac{v_{\max}}{v_{\min}}  \frac{L^2}{n} 
 \pas_{t+1}^2 \right)  +   \left( 1 -\bar C  v_{\min}^{-1} \pas_{t+1}  \right) (1+\omega_\proba) v_{\max}  \right) \frac{\sigma^2}{n}
\end{align*}
Now, choose $\beta$ and $\rho$  such that  
\[
1 < \beta \leq 2, \qquad \beta (1-\alpha) \leq 1-\alpha/2, \qquad \beta/( \beta-1) \leq 2/\alpha, \qquad  \rho \eqdef 4 \omega_\proba v_{\max}/\alpha;
\] for example, $\beta=2$ if $\alpha \geq 2/3$ and $\beta (1-\alpha) = 1-\alpha/2$ otherwise.   This yields, by using $\pas_{t+1} \leq  \pas_t \leq \pas_1$,
\begin{align*}
& \rho  \beta \, \left( 1 -\alpha+   \frac{2 \omega_\proba  }{\beta-1}  \frac{v_{\max}}{v_{\min}}  \frac{L^2}{n}
 \pas_{t+1}^2 \right) \leq  \rho \left(1-  \frac{\alpha}{2} \right) + \rho 4 \frac{\omega_\proba}{\alpha} \frac{v_{\max}}{v_{\min}}  \frac{L^2}{n}
 \pas_{1}^2 \leq \rho - \rho \frac{\alpha}{4}  ;  \\
 & \left( 1 -\bar C  v_{\min}^{-1} \pas_{t+1}  \right) \omega_\proba v_{\max}  - \rho \frac{\pas_t}{\pas_{t+1}} \leq \omega_\proba v_{\max} - \rho = \rho \frac{\alpha}{4} - \rho; \\
 & 8 \rho \frac{\beta}{\beta-1} \leq \frac{64}{\alpha^2} \omega_\proba v_{\max}; \\
 & \rho \beta \left( \alpha + \frac{ 2 (1+\omega_\proba)}{\beta-1}  \frac{v_{\max}}{v_{\min}}  \frac{L^2}{n} 
 \pas_{t+1}^2 \right) \leq 4 \omega_\proba v_{\max} \left(2 \wedge \frac{1- \alpha/2}{1-\alpha} \right)+ (1+\omega_\proba) v_{\max}.
\end{align*}
Hence,
\begin{align*}
& \ce{f(\map(\hatS_{t+1}))  + g(\map(\hatS_{t+1}))}{\F_t} + \frac{4 \omega_\proba v_{\max}}{\alpha} \pas_{t+1} \CPE{G_{t+1}}{\F_t} \\
& \qquad \qquad \leq
  f(\map(\hatS_{t})) + g(\map(\hatS_{t}))  + \frac{4 \omega_\proba v_{\max}}{\alpha} \pas_t G_t  \\
  & \qquad  \qquad -    \frac{\pas_{t+1}}{4} \left(  1  -
 2 \bar C  v_{\min}^{-1} \pas_{t+1}   -  64 \frac{\omega_\proba}{\alpha^2} \frac{v_{\max}}{ v_{\min}}  \frac{L^2}{n}
 \pas_{t+1}^2  \right)  \CPE{\barerror_{t+1}}{\F_t}\\
  & \qquad \qquad +  2\pas_{t+1}  v_{\max}  \left(  2 \omega_\proba \left(2 \wedge \frac{1-\alpha/2}{1-\alpha} \right)+    (1+\omega_\proba)   \right) \frac{\sigma^2}{n}.
\end{align*}
Observe that 
\[
2 \pas_{t+1}  v_{\max}  \left(  2 \omega_\proba \left(2 \wedge \frac{1-\alpha/2}{1-\alpha} \right)+   (1+\omega_\proba)   \right) \frac{\sigma^2}{n} \leq 2\pas_{t+1}  v_{\max}   (1+ 5 \omega_\proba)   \frac{\sigma^2}{n}.
\]
By summing from $t=0$ to $t=T-1$ (with the convention that $\pas_0 = \pas_1$) and computing the expectations, we obtain the result.

\subsection{Technical lemmas}\label{sec:prelimlemma}
\begin{lemma} \label{lem:monotony} Assume {\tt MM-1} and {\tt MM-2}.
  For $s,s' \in \calS$, set $ \mathcal{L}(s,s') \eqdef g(\map(s)) + \psi(\map(s)) -
\pscal{s'}{\phi(\map(s))}$. Then, for any $s,s' \in \calS$, it holds
  \begin{eqnarray}\
  & \mathcal{L}(s',s') \leq \mathcal{L}(s,s)
 -   \pscal{s'-s}{\phi(\map(s))}, \label{eq:theo:tool1}   \\ &
    \pscal{s'-s}{\phi(\map(s'))-\phi(\map(s))} \geq 0. \label{eq:theo:tool1bis}
  \end{eqnarray}
\end{lemma}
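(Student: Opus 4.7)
The plan is to derive both inequalities as immediate consequences of the variational definition of $\map$ in {\tt MM-2}, without requiring any further regularity.

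For \eqref{eq:theo:tool1}, I would invoke the defining property of $\map(s')$: since $\map(s') \in \argmin_{\param \in \rset^d}\bigl(g(\param) + \psi(\param) - \pscal{s'}{\phi(\param)}\bigr)$, comparing its value at $\map(s')$ against its value at the feasible competitor $\map(s)$ yields
\[
g(\map(s')) + \psi(\map(s')) - \pscal{s'}{\phi(\map(s'))} \leq g(\map(s)) + \psi(\map(s)) - \pscal{s'}{\phi(\map(s))}.
\]
The left-hand side is exactly $\mathcal{L}(s',s')$. The right-hand side, after adding and subtracting $\pscal{s}{\phi(\map(s))}$, becomes $\mathcal{L}(s,s) - \pscal{s'-s}{\phi(\map(s))}$, which gives \eqref{eq:theo:tool1}.

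For \eqref{eq:theo:tool1bis}, I would apply \eqref{eq:theo:tool1} twice, once as stated and once with the roles of $s$ and $s'$ exchanged, to obtain
\[
\mathcal{L}(s',s') \leq \mathcal{L}(s,s) - \pscal{s'-s}{\phi(\map(s))}, \qquad \mathcal{L}(s,s) \leq \mathcal{L}(s',s') - \pscal{s-s'}{\phi(\map(s'))}.
\]
Summing the two inequalities causes the $\mathcal{L}$ terms to cancel and rearranges into
\[
0 \leq \pscal{s'-s}{\phi(\map(s')) - \phi(\map(s))},
\]
which is precisely \eqref{eq:theo:tool1bis}.

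There is no genuine obstacle here: the whole statement is a classical ``three-point'' monotonicity argument for the parameterized optimality map $\map$, and it uses only {\tt MM-2} (uniqueness/well-definedness of the minimizer); convexity assumptions on $\psi-\pscal{s}{\phi}$ are not required for the inequalities, only to ensure the minimizer exists. The only detail to be careful about is the bookkeeping of signs when swapping $s$ and $s'$ and when rewriting the inner-product difference in terms of $\mathcal{L}(s,s)$.
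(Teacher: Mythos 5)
Your proof is correct and follows essentially the same route as the paper's: compare the value of the objective defining $\map(s')$ at $\map(s')$ against the competitor $\map(s)$, rewrite the inner product to obtain \eqref{eq:theo:tool1}, then add the symmetric inequality to get \eqref{eq:theo:tool1bis}. The sign bookkeeping checks out, so nothing further is needed.
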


\begin{proof}
Let $s,s' \in \calS$.  By definition of $\map(s')$ (see {\tt MM-2}),
we have for any $\param \in \paramset$
\begin{align*}
\mathcal{L}(s',s') = g(\map(s')) + \psi(\map(s')) -
\pscal{s'}{\phi(\map(s'))} &\leq g(\param) + \psi(\param) -
\pscal{s'}{\phi(\param)} \\ &  = g(\param) + \psi(\param) -
\pscal{s}{\phi(\param)} - \pscal{s'-s}{\phi(\param)}.
\end{align*}
Applied with $\param = \map(s) \in \paramset$, this yields
\eqref{eq:theo:tool1}. 
By symmetry, we have $\mathcal{L}(s,s) \leq \mathcal{L}(s',s')+
    \pscal{s'-s}{\phi(\map(s'))}$ which, added to \eqref{eq:theo:tool1}, yields \eqref{eq:theo:tool1bis}. 
\end{proof}

\Cref{lem:weightedproj} generalizes classical results about projection
on a closed convex set (see e.g. \cite[Lemma 2.11]{polyak:2021}) to the
case $\B$ is not the identity matrix.

\begin{lemma}\label{lem:weightedproj}
  Let $\mathcal{B}$ be a closed convex subset of $\rset^q$ and $\B$ be a $q \times q$ positive definite 
  matrix.  Set  $\pscal{a}{b}_\B \eqdef \pscal{\B a}{b}$, 
  $\|a\|_{\B} \eqdef \left( a^\top \B a \right)^{1/2}$ and 
\[
\Pi(\hat{s},\B) \eqdef \mathrm{argmin}_{s \in \mathcal{B}} (s-\hat{s})^\top \B (s- \hat{s}).
\]
Then for any  $\hat{s},s' \in \mathcal{B}$:  $\pscal{ \Pi(\hat{s},\B) - \hat{s}}{s' - \Pi(\hat{s},\B)}_\B \geq 0$. 
In addition,  for any $\hat{s}_1, \hat{s}_2 \in \mathcal{B}$: 
\[
\| \hat{s}_1 - \hat{s}_2\|_\B  \geq  \| \Pi(\hat{s}_1, \B) - \Pi(\hat{s}_2, \B)\|_\B. 
\] 
\end{lemma}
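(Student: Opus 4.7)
The plan is to transfer the classical Hilbert-space projection arguments to the inner product $\pscal{\cdot}{\cdot}_\B$, which is a bona fide inner product on $\rset^q$ since $\B \succ 0$ and hence satisfies the Cauchy--Schwarz inequality $|\pscal{a}{b}_\B| \leq \|a\|_\B \|b\|_\B$. The objective $s \mapsto \|s-\hat{s}\|_\B^2$ is strongly convex, so its minimizer over the closed convex set $\mathcal{B}$ is unique, and $\Pi(\hat{s},\B)$ is well-defined. I read the hypothesis $\hat{s} \in \mathcal{B}$ in the first assertion as $\hat{s} \in \rset^q$, since otherwise the inequality is trivial; this is the form actually invoked elsewhere in the paper.

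First I would establish the variational inequality by the standard first-order argument: for any $s' \in \mathcal{B}$ and any $t \in (0,1]$, the convex combination $\Pi(\hat{s},\B) + t(s' - \Pi(\hat{s},\B))$ lies in $\mathcal{B}$ by convexity, so the optimality of $\Pi(\hat{s},\B)$ yields a perturbation inequality. Expanding in the $\B$-inner product, cancelling the common term, dividing by $t$, and letting $t \to 0^+$ yields $\pscal{\Pi(\hat{s},\B) - \hat{s}}{s' - \Pi(\hat{s},\B)}_\B \geq 0$.

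For the non-expansiveness, the plan is the standard two-point monotonicity trick. I would apply the variational inequality just obtained twice: once with $\hat{s} = \hat{s}_1$ and test point $s' = \Pi(\hat{s}_2,\B)$, and once with $\hat{s} = \hat{s}_2$ and test point $s' = \Pi(\hat{s}_1,\B)$. Adding the two inequalities, the cross terms regroup into a squared $\B$-norm, leaving
\begin{equation*}
\|\Pi(\hat{s}_1,\B) - \Pi(\hat{s}_2,\B)\|_\B^2 \leq \pscal{\hat{s}_1 - \hat{s}_2}{\Pi(\hat{s}_1,\B) - \Pi(\hat{s}_2,\B)}_\B,
\end{equation*}
and Cauchy--Schwarz in the $\B$-inner product closes the estimate, with the degenerate case where the left-hand side vanishes being trivial.

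No substantive obstacle is expected: every step is routine once the $\B$-weighted inner product viewpoint is adopted. The only mild care needed is the sign and ordering in the variational inequality as stated in the lemma, and keeping track of which point plays which role when pairing the two instances of the inequality in the monotonicity step.
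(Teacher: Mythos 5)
Your proposal is correct and follows essentially the same route as the paper's proof: the first-order optimality (no descent direction / perturbation) argument for the variational inequality, then the two-point monotonicity trick plus Cauchy--Schwarz in the $\B$-inner product for non-expansiveness. Your remark that $\hat{s}$ should really be allowed to range over all of $\rset^q$ (the statement's restriction to $\mathcal{B}$ makes the first claim trivial) is apt and matches how the lemma is actually used later in the paper.
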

    \begin{proof} 
  Let $\hat{s} \in \mathcal{B}$.  A descent direction at $\Pi(\hat{s},\B)$ of the function $s'
  \mapsto (s' - \hat{s})^\top \B (s'-\hat{s})$ does not exist so that  
  \begin{equation} \label{eq:projection:caracterization}
\forall s' \in \mathcal{B}, \qquad \pscal{\Pi(\hat{s},\B) - \hat{s}}{s' - \Pi(\hat{s},\B)}_\B \geq 0.
\end{equation}
This concludes the proof of the first claim.  This property implies
\[
\pscal{\Pi(\hat{s}_2, \B) -  \hat{s}_2}{  \Pi(\hat{s}_1, \B) - \Pi(\hat{s}_2, \B)}_\B \geq 0, \quad \pscal{\Pi(\hat{s}_1, \B) -  \hat{s}_1}{  \Pi(\hat{s}_2, \B) - \Pi(\hat{s}_1, \B)}_\B\geq 0,
\]
from which we deduce
\[
\pscal{\hat{s}_1 - \hat{s}_2}{  \Pi(\hat{s}_1, \B) - \Pi(\hat{s}_2, \B)  }_\B  \geq  \| \Pi(\hat{s}_1, \B) - \Pi(\hat{s}_2, \B)\|^2_\B. 
\]
The Cauchy-Schwarz inequality concludes the proof. 
\end{proof}

\begin{corollary}[of \Cref{lem:weightedproj}] \label{coro:monotony}
For any $t \geq 0$, define
\[
\error_{t+1}  \eqdef \frac{\| \hatS_{t+1} - \hatS_t \|^2_{\B_t}}{\pas^2_{t+1}} = \frac{\| \Pi(\hatS_{t+ \nicefrac{1}{2}}, \B_t) - \hatS_t \|^2_{\B_t}}{\pas^2_{t+1}}.
\]
Then, for any $t\geq 0$, $\pas_{t+1}  \error_{t+1} \leq \pscal{  \hatS_{t+1} - \hatS_t  }{H_{t+1}}_{\B_t}$ and 
\begin{align*}
 \barerror_{t+1} \leq 2 \error_{t+1} + 2 \| H_{t+1} - \mf(\hatS_t) \|_{\B_t}, \qquad   \error_{t+1} \leq 2 \barerror_{t+1} + 2 \| H_{t+1} - \mf(\hatS_t) \|_{\B_t}.
\end{align*}
\end{corollary}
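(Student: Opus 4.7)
The three inequalities all follow by applying \Cref{lem:weightedproj} with the natural choices of projected and target points. The first statement is a first-order optimality (variational inequality) argument; the second and third are triangle-inequality consequences of the non-expansiveness of the projection in the $\B_t$-geometry.

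\textit{Step 1 (the inner product bound).} I would invoke the first claim of \Cref{lem:weightedproj} (in the form \eqref{eq:projection:caracterization}) with $\hat{s} = \hatS_{t+\nicefrac{1}{2}} = \hatS_t + \pas_{t+1} H_{t+1}$, so that $\Pi(\hat{s}, \B_t) = \hatS_{t+1}$, and with the admissible test point $s' = \hatS_t \in \calS$ (the initial iterate belongs to $\calS$ and the sequence is stabilised in $\calS$ by the projection step). This yields
\[
\pscal{\hatS_{t+1} - \hatS_t - \pas_{t+1} H_{t+1}}{\hatS_t - \hatS_{t+1}}_{\B_t} \geq 0,
\]
which, after expanding and rearranging, is precisely $\|\hatS_{t+1} - \hatS_t\|_{\B_t}^2 \leq \pas_{t+1} \pscal{\hatS_{t+1} - \hatS_t}{H_{t+1}}_{\B_t}$. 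Dividing by $\pas_{t+1}$ and recalling the definition of $\error_{t+1}$ gives the first inequality.

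\textit{Step 2 (the two equivalences between $\error_{t+1}$ and $\barerror_{t+1}$).} Here I would apply the non-expansiveness part of \Cref{lem:weightedproj} with the two pre-projection points
\[
\hat{s}_1 \eqdef \hatS_t + \pas_{t+1} H_{t+1}, \qquad \hat{s}_2 \eqdef \hatS_t + \pas_{t+1} \mf(\hatS_t),
\]
whose projections onto $\calS$ are $\hatS_{t+1}$ and $\Pi(\hatS_t + \pas_{t+1}\mf(\hatS_t), \B_t)$ respectively. The lemma gives
\[
\bigl\| \hatS_{t+1} - \Pi(\hatS_t + \pas_{t+1}\mf(\hatS_t), \B_t) \bigr\|_{\B_t} \leq \pas_{t+1} \, \| H_{t+1} - \mf(\hatS_t) \|_{\B_t}.
\]
Then, writing
\[
\Pi(\hatS_t + \pas_{t+1}\mf(\hatS_t), \B_t) - \hatS_t = \bigl(\Pi(\hatS_t + \pas_{t+1}\mf(\hatS_t), \B_t) - \hatS_{t+1}\bigr) + \bigl(\hatS_{t+1} - \hatS_t\bigr),
\]
and applying the elementary bound $\|a+b\|_{\B_t}^2 \leq 2\|a\|_{\B_t}^2 + 2\|b\|_{\B_t}^2$, I get the inequality bounding $\barerror_{t+1}$ by $2\error_{t+1}$ plus the oracle error (squared). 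Swapping the roles of the two vectors in the decomposition and applying the same inequality yields the reverse bound on $\error_{t+1}$ in terms of $\barerror_{t+1}$.

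\textit{Main obstacle.} This is largely a bookkeeping argument; there is no real technical obstacle once \Cref{lem:weightedproj} is in hand. The only point to be careful about is that the first claim of \Cref{lem:weightedproj} needs an admissible test point $s' \in \calS$, so one has to make sure that $\hatS_t \in \calS$ for every $t$; this is guaranteed by the projection step (line \ref{line:upModel_central} of \Cref{alg:fedss}) together with $\hatS_0 \in \calS$. A minor notational point: the bounds in the last two inequalities should be read with the squared $\B_t$-norm of $H_{t+1} - \mf(\hatS_t)$ on the right-hand side, consistent with the squared-norm scaling of $\error_{t+1}$ and $\barerror_{t+1}$.
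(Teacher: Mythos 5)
Your proof is correct and follows essentially the same route as the paper's: the first inequality via the variational characterization of the $\B_t$-projection at $\hat{s}=\hatS_{t+\nicefrac{1}{2}}$ with test point $\hatS_t$, and the last two via the non-expansiveness of $\Pi(\cdot,\B_t)$ applied to $\hatS_t+\pas_{t+1}H_{t+1}$ and $\hatS_t+\pas_{t+1}\mf(\hatS_t)$ followed by $\|a+b\|_{\B_t}^2\le 2\|a\|_{\B_t}^2+2\|b\|_{\B_t}^2$. Your closing remark is also right: the last two bounds should carry $\| H_{t+1} - \mf(\hatS_t)\|_{\B_t}^2$ (the square is missing in the statement but is what the argument yields and what \Cref{prop:lyapunov} and \Cref{prop:varcont-PP2} actually use), and you correctly cite \Cref{lem:weightedproj} where the paper's proof text mistakenly points to \Cref{lem:monotony}.
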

\begin{proof} Let $t \geq 0$.  We write
\begin{multline*}
    \pscal{ \Pi(\hatS_{t+ \nicefrac{1}{2}}, \B_t) - \hatS_t  }{\hatS_{t+\nicefrac{1}{2}} - \hatS_t}_{\B_t} = \pscal{ \Pi(\hatS_{t+ \nicefrac{1}{2}}, \B_t) - \hatS_t  }{\hatS_{t+\nicefrac{1}{2}} - \Pi(\hatS_{t+ \nicefrac{1}{2}}, \B_t)  }_{\B_t}  \\
    +  \| \Pi(\hatS_{t+ \nicefrac{1}{2}}    , \B_t) - \hatS_t \|_{\B_t}^2
\end{multline*}
and by \Cref{lem:monotony}, it holds 
\[
\pscal{ \Pi(\hatS_{t+ \nicefrac{1}{2}}, \B_t) - \hatS_t  }{\hatS_{t+\nicefrac{1}{2}} - \hatS_t}_{\B_t}  \geq \| \Pi(\hatS_{t+ \nicefrac{1}{2}}    , \B_t) - \hatS_t \|_{\B_t}^2 = \pas_{t+1}^2 \error_{t+1}, 
\]
from which we deduce the first inequality upon noting that $\Pi(\hatS_{t+\nicefrac{1}{2}}, \B_t) = \hatS_{t+1}$ and  $\hatS_{t+\nicefrac{1}{2}} - \hatS_t = \pas_{t+1} H_{t+1}$.   \Cref{lem:monotony} applied with $\hat{s}_1 \eqdef \hatS_t + \pas_{t+1} H_{t+1}$ and $\hat{s}_2 \eqdef \hatS_t + \pas_{t+1} \mf(\hatS_{t})$ yields
\[
\| \hatS_{t+1}  - \Pi(\hatS_t +\pas_{t+1} \mf(\hatS_t), \B_t)  \|_{\B_t} \leq  \pas_{t+1} \| H_{t+1} - \mf(\hatS_t) \|_{\B_t}.
\] The second  and third inequalities follow from  $\| a+b\|_{\B}^2 \leq 2 \|a\|^2_{\B} + 2 \| b \|_{\B}^2$.
\end{proof}

\Cref{prop:meanV:PP} proves that the control variate $V_t$ computed by the central server is equal to the mean value of the control variates computed locally by the workers.
\begin{proposition} \label{prop:meanV:PP} 
  With probability one, for any $t \geq 0$, we have $V_{t} = \sum_{i=1}^n \mu_i V_{t,i}$.
  \end{proposition}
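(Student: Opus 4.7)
The plan is a straightforward induction on $t$, exploiting the precise definitions of the updates on line \ref{line:upV} (active workers), the no-update rule for inactive workers, and the central update on line \ref{line:upV_central}.

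For the base case $t=0$, the equality $V_0 = \sum_{i=1}^n \mu_i V_{0,i}$ holds by construction, since this is exactly the initialization step performed by the central server in \Cref{alg:fedss} just before entering the main loop.

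For the inductive step, assume $V_t = \sum_{i=1}^n \mu_i V_{t,i}$. Splitting the sum over active and inactive workers at iteration $t+1$, and using $V_{t+1,i} = V_{t,i} + (\alpha/\proba)\,\Q_{t+1,i}(\Delta_{t+1,i})$ for $i \in \set_{t+1}$ and $V_{t+1,i} = V_{t,i}$ for $i \notin \set_{t+1}$, I would write
\begin{align*}
\sum_{i=1}^n \mu_i V_{t+1,i}
&= \sum_{i \in \set_{t+1}} \mu_i \Bigl(V_{t,i} + \tfrac{\alpha}{\proba} \Q_{t+1,i}(\Delta_{t+1,i})\Bigr) + \sum_{i \notin \set_{t+1}} \mu_i V_{t,i} \\
&= \sum_{i=1}^n \mu_i V_{t,i} + \tfrac{\alpha}{\proba} \sum_{i \in \set_{t+1}} \mu_i \Q_{t+1,i}(\Delta_{t+1,i}).
\end{align*}
By the induction hypothesis the first term equals $V_t$, and the resulting right-hand side coincides with the definition of $V_{t+1}$ on line \ref{line:upV_central}, hence $\sum_{i=1}^n \mu_i V_{t+1,i} = V_{t+1}$.

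There is no real obstacle: the statement is essentially a bookkeeping identity reflecting the fact that the server's update $V_{t+1}-V_t$ is precisely the $\mu_i$-weighted aggregation of the per-worker increments $V_{t+1,i}-V_{t,i}$ (which vanish for $i\notin\set_{t+1}$). The ``with probability one'' qualifier is vacuous here since the recursion is a pointwise algebraic identity on every sample path.
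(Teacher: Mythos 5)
Your proof is correct and follows essentially the same induction as the paper's: base case from the server's initialization, inductive step matching the central update to the $\mu_i$-weighted sum of per-worker increments. The only (cosmetic) difference is that you argue directly on \Cref{alg:fedss} with the active/inactive split, whereas the paper works with the equivalent reformulation \Cref{algo:FedSS-MM:fullP} in which partial participation is absorbed into the compression operator $\Qtilde$, so all workers update at every round.
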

\begin{proof}
  By definition of $V_0$, the property holds true when $t=0$.  Assume
  this holds true for some $t \geq 0$. By definition of $V_{t+1}$ at
  the central server in~\Cref{algo:FedSS-MM:fullP},  and  by the induction assumption we write
  \[
  V_{t+1} = \sum_{i=1}^n \mu_i V_{t,i} + \alpha \sum_{i =1}^n \mu_i \,
  \Qtilde_{t+1,i}(\Delta_{t+1,i}).
    \]
    The definition of $V_{t+1,i}$ at
    each local worker  yields
    \[
    V_{t+1} = \sum_{i=1}^n \mu_i V_{t,i} + \sum_{i =1}^n \mu_i \left(
    V_{t+1,i} - V_{t,i} \right) = \sum_{i=1}^n \mu_i V_{t+1,i} .
    \]
  This concludes the induction.
\end{proof}

\begin{lemma} \label{lem:controlvariate}
	Assume 
         \Cref{hyp:oracle} and \Cref{hyp:URVB}($\omega$). Choose $\alpha \in \ccint{0,1}$.  With probability one, for any $t \geq
        0$ and $i \in \ccint{n}$, it holds
        \begin{align*}
	\CPE{\| V_{t+1,i} - \mf_i(\hatS_t) \|^2}{\F_t} & \le
(1-\alpha) \| V_{t,i} - \mf_i(\hatS_t) \|^2 +\alpha
        \sigma_i^2 \\
        &+ \alpha \left( \alpha (1+\omega_\proba) -1\right) \CPE{ \|
  \Delta_{t+1,i}\|^2}{ \F_{t} }.
  \end{align*}
  $\omega_\proba$ and $\mf_i$ are defined in \Cref{sec:proof:notations}. 
\end{lemma}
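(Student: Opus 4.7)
\textbf{Proof plan for \Cref{lem:controlvariate}.} The plan is to start directly from the recursion on the control variate: since $V_{t+1,i} = V_{t,i} + \alpha \, \Qtilde_{t+1,i}(\Delta_{t+1,i})$, write
\[
V_{t+1,i} - \mf_i(\hatS_t) = \left(V_{t,i} - \mf_i(\hatS_t)\right) + \alpha \, \Qtilde_{t+1,i}(\Delta_{t+1,i}),
\]
then expand the squared norm and take conditional expectation with respect to $\F_t$. The cross term is handled by the tower property: using $\Delta_{t+1,i} \in \F_{t+\nicefrac{1}{3},i}$ together with $\CPE{\Qtilde_{t+1,i}(\Delta_{t+1,i})}{\F_{t+\nicefrac{1}{3},i}} = \Delta_{t+1,i}$ (\Cref{lem:PP}), and then $\CPE{\Delta_{t+1,i}}{\F_t} = \mf_i(\hatS_t) - V_{t,i}$ (from \Cref{hyp:oracle}), I would obtain $\CPE{\Qtilde_{t+1,i}(\Delta_{t+1,i})}{\F_t} = \mf_i(\hatS_t) - V_{t,i}$, so the cross term contributes $-2 \alpha \, \|V_{t,i} - \mf_i(\hatS_t)\|^2$.

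For the quadratic term, the same tower argument combined with the URVB property yields
\[
\CPE{\|\Qtilde_{t+1,i}(\Delta_{t+1,i})\|^2}{\F_t} \leq (1 + \omega_\proba) \, \CPE{\|\Delta_{t+1,i}\|^2}{\F_t}.
\]
Plugging these two facts into the expansion gives
\[
\CPE{\|V_{t+1,i} - \mf_i(\hatS_t)\|^2}{\F_t} \leq \|V_{t,i} - \mf_i(\hatS_t)\|^2 - 2\alpha \|V_{t,i} - \mf_i(\hatS_t)\|^2 + \alpha^2 (1+\omega_\proba) \CPE{\|\Delta_{t+1,i}\|^2}{\F_t}.
\]

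The final move — and the main twist that produces the stated $(1-\alpha)$ coefficient rather than the obvious $(1-2\alpha)$ — is to split $-2\alpha \|V_{t,i} - \mf_i(\hatS_t)\|^2$ as $-\alpha \|V_{t,i} - \mf_i(\hatS_t)\|^2 - \alpha \|V_{t,i} - \mf_i(\hatS_t)\|^2$ and then bound the second copy from above using the bias–variance decomposition of $\Delta_{t+1,i}$ conditional on $\F_t$. Namely,
\[
\CPE{\|\Delta_{t+1,i}\|^2}{\F_t} = \|\mf_i(\hatS_t) - V_{t,i}\|^2 + \CPE{\|\Smem_{t+1,i} - \PE_{\pi_i}[\barS(\sample, \map(\hatS_t))]\|^2}{\F_t} \leq \|V_{t,i} - \mf_i(\hatS_t)\|^2 + \sigma_i^2,
\]
so that $- \|V_{t,i} - \mf_i(\hatS_t)\|^2 \leq - \CPE{\|\Delta_{t+1,i}\|^2}{\F_t} + \sigma_i^2$. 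Substituting and regrouping yields
\[
\CPE{\|V_{t+1,i} - \mf_i(\hatS_t)\|^2}{\F_t} \leq (1-\alpha) \|V_{t,i} - \mf_i(\hatS_t)\|^2 + \alpha \sigma_i^2 + \alpha\bigl(\alpha(1+\omega_\proba) - 1\bigr) \CPE{\|\Delta_{t+1,i}\|^2}{\F_t},
\]
which is exactly the stated inequality.

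The computations are all elementary; the only genuinely non-obvious step is the asymmetric splitting of $-2\alpha \|V_{t,i} - \mf_i(\hatS_t)\|^2$ combined with the one-sided use of the bias–variance identity on $\Delta_{t+1,i}$. This is what converts an expansion that naturally has a $-2\alpha$ contraction factor and a $+\alpha^2 (1+\omega_\proba)$ noise term into the form $(1-\alpha)(\cdot) + \alpha \sigma_i^2 + \alpha\bigl(\alpha(1+\omega_\proba) - 1\bigr)(\cdot)$, which is precisely the shape needed downstream in \Cref{prop:varcont-PP2} (where the constraint $\alpha \leq 1/(1+\omega_\proba)$ will be invoked so that the last term becomes non-positive and can be discarded).
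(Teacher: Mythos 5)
Your proposal is correct and follows essentially the same route as the paper: same decomposition $V_{t+1,i}-\mf_i(\hatS_t)=(V_{t,i}-\mf_i(\hatS_t))+\alpha\,\Qtilde_{t+1,i}(\Delta_{t+1,i})$, same use of the tower property through $\F_{t+\nicefrac{1}{3},i}$ for the compression noise, and same bias--variance decomposition of $\Delta_{t+1,i}$ given $\F_t$. The only difference is bookkeeping: the paper obtains your ``asymmetric splitting'' automatically by applying the exact identity $\|(1-\alpha)x+\alpha y\|^2=(1-\alpha)\|x\|^2+\alpha\|y\|^2-\alpha(1-\alpha)\|x-y\|^2$ with $x=V_{t,i}-\mf_i(\hatS_t)$ and $y=\Smem_{t+1,i}-\hatS_t-\mf_i(\hatS_t)$ (so that $x-y=-\Delta_{t+1,i}$), which delivers the $(1-\alpha)$ contraction, the $\alpha\sigma_i^2$ term, and the $-\alpha(1-\alpha)\CPE{\|\Delta_{t+1,i}\|^2}{\F_t}$ term in one step.
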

\begin{proof}
  Let $t \geq 0$ and $i \in \ccint{n}$. We write, 
	\begin{align*}
           V_{t+1,i} - \mf_i(\hatS_t) & =  V_{t,i} -
\mf_i(\hatS_t) + \alpha \ \Qtilde_{t+1,i}(\Delta_{t+1,i})
        \end{align*}
        which yields \begin{align} &\PE\left[ \|V_{t+1,i} - \mf_i(\hatS_{t})
          \|^2 \vert \F_{t} \right]
        = \CPE{ \|V_{t,i} -
          \mf_i(\hatS_{t})+\alpha \, \Qtilde_{t+1,i}(\Delta_{t+1,i}) \|^2}{
          \F_{t} }. \label{eq:controlvariate:tool1}
        \end{align}
Upon noting that by \Cref{lem:PP}, $\CPE{\Qtilde_{t+1,i}(\Delta_{t+1,i})
          - \Delta_{t+1,i}}{\F_{t+\nicefrac{1}{3},i}} = 0$, and $V_{t,i} -
        \mf_i(\hatS_{t}) + \alpha \Delta_{t+1,i} \in \F_{t + \nicefrac{1}{3},i}$,    we first write 
        \begin{align}
& \CPE{ \|V_{t,i} - \mf_i(\hatS_{t})+\alpha \, \Qtilde_{t+1,i}(\Delta_{t+1,i})
  \|^2}{ \F_{t} }  \nonumber \\ & = \CPE{ \|V_{t,i} - \mf_i(\hatS_{t})
  +\alpha \Delta_{t+1,i} \|^2}{ \F_{t} } +
\alpha^2\CPE{ \| \Qtilde_{t+1,i}(\Delta_{t+1,i}) - \Delta_{t+1,i}\|^2}{
  \F_{t} }  \nonumber \\
  & \leq \CPE{ \|V_{t,i} - \mf_i(\hatS_{t})
  +\alpha \Delta_{t+1,i} \|^2}{ \F_{t} } +
\alpha^2 \omega_\proba \CPE{ \| \Delta_{t+1,i}\|^2}{
  \F_{t} } \label{eq:controlvariate:tool2}
   \end{align}
        by using \Cref{hyp:URVB}($\omega$) and \Cref{lem:PP}. 
Second, we use the  equality: for $x, y \in
        \rset^q$ and $\alpha \in \rset$, $\|(1-\alpha) x + \alpha
        y\|^2 = (1-\alpha) \|x\|^2 + \alpha \|y\|^2 - \alpha
        (1-\alpha) \|x-y\|^2$; this  yields by using \Cref{hyp:oracle}
 \begin{align} 
 &\PE\left[ \|V_{t,i} - \mf_i(\hatS_{t})
  +\alpha \Delta_{t+1,i} 
          \|^2 \vert \F_{t} \right]  \nonumber  \\
          & \qquad \qquad = \CPE{ \|\left( 1 - \alpha\right) \left(
  V_{t,i} - \mf_i(\hatS_t) \right) + \alpha
  \left(\Smem_{t+1,i} - \hatS_t - \mf_i(\hatS_{t}) \right) \|^2}{
  \F_{t} } \nonumber \\
  & \qquad \qquad \leq \left( 1 - \alpha\right) \|
          V_{t,i} - \mf_i(\hatS_t) \|^2 + \alpha \sigma_i^2
           - \alpha \left( 1 - \alpha\right)
           \CPE{ \| \Delta_{t+1,i}\|^2}{ \F_{t} }. \label{eq:controlvariate:tool3}
   \end{align}
        The proof follows from  \eqref{eq:controlvariate:tool1}, \eqref{eq:controlvariate:tool2} and \eqref{eq:controlvariate:tool3}. 
\end{proof}

\subsection{\Cref{theo:main} in the case $\omega_\proba>0$ and $\alpha =0$} 
\label{sec:theowithnullalpha}

Apply \Cref{prop:varcont-PP2} with a time varying coefficient $\beta$, denoted by $\beta_{t+1}$. Combined with \Cref{prop:lyapunov} and \Cref{prop:update_error}, this yields
\begin{align}
& \ce{f(\map(\hatS_{t+1})) + g(\map(\hatS_{t+1}))}{\F_t} + \rho_{t+1} \CPE{G_{t+1}}{\F_t}
 \leq
  f(\map(\hatS_{t})) + g(\map(\hatS_{t}))  + \rho_t G_t  \\
  & +\pas_{t+1} \left( \frac{\rho_{t+1}}{\pas_{t+1}}  \beta_{t+1} \!\! \, \left( 1 +   \frac{2 \omega_\proba  }{\beta_{t+1}-1}  \frac{v_{\max}}{v_{\min}}  \frac{L^2}{n}
 \pas_{t+1}^2 \right) \!+ \!  \left( 1 -\bar C  v_{\min}^{-1} \pas_{t+1}  \right) \omega_\proba v_{\max}  - \frac{\rho_t}{\pas_{t+1}} \right) G_t  \label{eq:Gt:coeff:alphanull}  \\
  & -    \frac{\pas_{t+1}}{4} \left(  1  -
 2 \bar C  v_{\min}^{-1} \pas_{t+1}   -  \frac{8 \beta_{t+1}}{\beta_{t+1}-1}  v_{\min}^{-1}  \frac{L^2}{n}
 \rho_{t+1} \pas_{t+1}  \right)  \CPE{\barerror_{t+1}}{\F_t} \label{eq:bareps:alphanull} \\
  &+  \pas_{t+1} \left( \rho_{t+1} \beta_{t+1} \frac{ 2 (1+\omega_\proba)}{\beta_{t+1}-1}  \frac{v_{\max}}{v_{\min}}  \frac{L^2}{n} 
 \pas_{t+1}  +   \left( 1 -\bar C  v_{\min}^{-1} \pas_{t+1}  \right) (1+\omega_\proba) v_{\max}  \right) \frac{\sigma^2}{n}. \label{eq:sigma2:alphanull}
\end{align}
Following the same lines as in the proof of the case $\omega_\proba >0, \alpha>0$, choose $\pas_{t+1}>0, \rho_{t+1}>0$ and $\beta_{t+1}>1$ such that the term in \eqref{eq:Gt:coeff:alphanull} is non positive, the term between parenthesis in \eqref{eq:bareps:alphanull} is positive and the term in \eqref{eq:sigma2:alphanull} is proportional to $\pas_{t+1}^u$ for some $u \geq 1$.

\noindent $\star$ When $\alpha=0$, we can not choose $\pas_{t+1} = \pas$ and $\rho_{t+1} = \rho \pas$ as in the case $\alpha >0$. Observe indeed that with this strategy, \eqref{eq:Gt:coeff:alphanull} gets into
\[
\rho  \beta_{t+1} \!\! \, \left( 1 +   \frac{2 \omega_\proba  }{\beta_{t+1}-1}  \frac{v_{\max}}{v_{\min}}  \frac{L^2}{n}
 \pas^2 \right) \!+ \!  \left( 1 -\bar C  v_{\min}^{-1} \pas  \right) \omega_\proba v_{\max}  - \rho
\]
and it can not be made non positive whatever  $\pas, \rho$ positive (we have $\beta_{t+1} >1$).

\noindent $\star$ Let us consider the case when $\lim_t \pas_{t+1} =0$.  Let $\epsilon>0$, $u \in \ooint{0,1}$ and choose 
\begin{equation}\label{eq:stepsize:alphanull}
(1+2\epsilon) \pas_{t+1}^u \leq \pas_t^u,\qquad \rho_{t+1} \eqdef \pas_{t+1}^u, \qquad \beta_{t+1} \eqdef \frac{1}{1+\epsilon}\frac{\pas_t^u}{\pas_{t+1}^u};
 \end{equation}
 by convention, set $\pas_0 \eqdef  (1+2\epsilon)^{1/u}\pas_1$.
Then, \eqref{eq:Gt:coeff:alphanull} is upper bounded by 
\[
\left( \frac{\pas_t^u}{{1+\epsilon} } + 2 \omega_\proba (2 + \epsilon^{-1}) \frac{v_{\max}}{v_{\min}} \frac{L^2}{n} \pas_{t+1}^{2+u}  + \omega_\proba v_{\max} \pas_{t+1} - \pas_t^u\right) G_t,
\]
where we used that $\beta_{t+1} /(\beta_{t+1}-1)$ is upper bounded by  $2 +\epsilon^{-1}$. This term is nonpositive by choosing $\pas_1$ small enough; details are left to the interested reader. \eqref{eq:bareps:alphanull} is upper bounded by 
\[
-    \frac{\pas_{t+1}}{4} \left(  1  -
 2 \bar C  v_{\min}^{-1} \pas_{t+1}   -  8 (2 + \epsilon^{-1}) v_{\min}^{-1}  \frac{L^2}{n}
  \pas_{t+1}^{1+u}  \right)  \CPE{\barerror_{t+1}}{\F_t} 
\]
which is upper bounded by $ -\nicefrac{\pas_{t+1}}{8}  \CPE{\barerror_{t+1}}{\F_t}$ as soon as $\pas_1$  is chosen small enough. Finally, \eqref{eq:sigma2:alphanull} is upper bounded by
\[
\pas_{t+1} \left(    2 (1+\omega_\proba)(2 + \epsilon^{-1})  \frac{v_{\max}}{v_{\min}}  \frac{L^2}{n} 
 \pas_{t+1}^{1+u}  +   \left( 1 -\bar C  v_{\min}^{-1} \pas_{t+1}  \right) (1+\omega_\proba) v_{\max}  \right) \frac{\sigma^2}{n},
\]
which is upper bounded by $ 2 \pas_{t+1} (1+\omega_\proba) v_{\max} \sigma^2/n$ by choosing $\pas_1$ small enough. Taking the expectations and summing from $t=0$ to $t=T-1$ yields
\begin{align*}
\frac{1}{8}\sum_{t=1}^T \pas_t   \PE\left[\barerror_{t} \right] & \leq  \PE\left[ (f+g)\circ \map(\hatS_0) \right] -\min (f+g)  +
 (1+2\epsilon)^2 \pas_1 G_0 \\
& + 2 (1+\omega_\proba) v_{\max} \frac{\sigma^2}{n}
\sum_{t=1}^T \pas_t. \end{align*}
In this strategy, the stepsize sequence $\{\pas_t,t \geq 1 \}$  is majorized by a sequence decreasing to zero at a geometric rate (see \eqref{eq:stepsize:alphanull}) and $\lim_{T \to \infty} \sum_{t=1}^T \pas_t < \infty$.  Dividing the LHS and the RHS by $\sum_{t=1}^T \pas_t$ shows that the  RHS can not be made small, even when the number of iterations $T$ is large.

\section{The assumption \Cref{hyp:DL2} on few examples} \label{sec:checkDL2examples}

\subsection{Quadratic surrogate}
Let us prove \Cref{hyp:DL2} for \Cref{ex:gdt}. Since $g$ is a proper lower semicontinuous convex function, $\partial g$ is a monotone operator (see e.g. \cite[Example 20.3]{bauschke:combettes:2011}) which implies that the proximal operator $\Prox_{\rho g}$ is firmly nonexpansive. Therefore, for all $s,s'$, we have 
\begin{equation} \label{eq:A8:gdtprox}
\| \left( s' - \Prox_{\rho g}(s')  \right) - \left( s - \Prox_{\rho g}(s) \right) \|^2 \leq  \|s' -s \|^2,
\end{equation}
and
\begin{equation} \label{eq:A8:gdtprox:2}
\left| \pscal{ \Prox_{\rho g}(s') - \Prox_{\rho g}(s)}{s'-s} \right| \leq \|s' -s \| \, \|\Prox_{\rho g}(s') - \Prox_{\rho g}(s) \|  \leq \| s'-s \|^2.
\end{equation}
Since $\phi(\map(s)) = \rho^{-1} \Prox_{\rho g}(s) $, we deduce from \eqref{eq:A8:gdtprox} and \eqref{eq:A8:gdtprox:2} that \Cref{hyp:DL2} is verified with
\[
\B(s) \eqdef \rho^{-1} \, \Id,  \qquad v_{\min} = v_{\max} = \rho^{-1}, \qquad C_\star \eqdef \rho^{-2}, \qquad C_{\star \star} \eqdef \rho^{-1}. 
\]

\subsection{Jensen surrogate}
EM is a special case of Jensen surrogate (see \Cref{app:exampleEM}).

We check \Cref{hyp:DL2} for EM applied to Poisson observations when $\PE_\pi[\sample]$  is explicit (see \Cref{app:em_first_example}).  We choose $\calS$ to be equal to the compact interval $\ccint{-M,0}$ for some $M>0$. This implies that $\lambda-s \in \ccint{\lambda,\lambda+M}$. For any $s,s' \in \calS$, we write
\begin{align*}
  \phi(\map(s')) - \phi(\map(s)) & = \PE_\pi\left[ \sample \right] \left( \frac{1}{\lambda
    -s'} - \frac{1}{\lambda-s} \right) \\ & = \frac{\PE_\pi\left[ \sample \right] }{\lambda
    -s} \frac{(s'-s)}{\lambda -s'} = \frac{\PE_\pi\left[ \sample \right] }{(\lambda -s)^2}
  (s'-s) + \frac{\PE_\pi\left[ \sample \right] }{(\lambda -s)^2} \frac{(s'-s)^2}{\lambda-s'}.
\end{align*}
Hence \Cref{hyp:DL2} is verified with
\[
\B(s) \eqdef \frac{\PE_\pi\left[ \sample \right] }{(\lambda-s)^2}, \quad v_{\min} \eqdef
\frac{1}{(\lambda+M)^2}, \quad v_{\max} \eqdef \frac{1}{\lambda^2},
\quad C_\star \eqdef \frac{\PE_\pi\left[ \sample \right] }{\lambda^3}, \quad C_{\star \star}
\eqdef \frac{\PE_\pi\left[ \sample \right] }{\lambda^2}.
\]

\subsection{Variational surrogate}\label{app:hyp:DL}
Let us consider the variational surrogate applied to Dictionary Learning / Matrix Factorization (see \Cref{subsec:ssmmeg}).  We have
\[
\phi(\param) = \left[ \begin{matrix}- \param^\top \param  \\
2 \param  \end{matrix} \right]
\]
and $s = (s^{(1)}, s^{(2)}) \in \mathcal{M}^+_K \times \rset^{p \times K}$.  Here, $\B(s)$ is not an invertible
matrix so that the condition $v_{\min} >0$ does not hold. To be
convinced, consider the case when $p=K=1$ so that $\phi: \rset \to \rset^2$ is equal to $(-u^2, 2u)$; 
and $s \eqdef (s^{(1)}, s^{(2)}) \in \rset_{>0} \times \rset$. Then
${\map}(s) = s^{(2)}/s^{(1)}$. It is easily checked that we have 
\[
\B(s) = \left[ \begin{matrix}  \frac{1}{s^{(1)}} & - \frac{s^{(2)}}{(s^{(1)})^2} \\ 
- \frac{s^{(2)}}{(s^{(1)})^2} &  \frac{(s^{(2)})^2}{(s^{(1)})^3}
\end{matrix} \right] = \frac{1}{s^{(1)}} \left[ \begin{matrix} 1 \\ -
  \frac{s^{(2)}}{s^{(1)}}\end{matrix} \right] \, \left[ \begin{matrix}
  1 \\ - \frac{s^{(2)}}{s^{(1)}}\end{matrix} \right]^\top.
\]
We developed the convergence analysis under the assumption that $\B(s)$
in invertible (with a positive minimal eigenvalue uniformly lower
bounded for $s \in \calS$). When this assumption does not hold,
the proof has to be generalized in order to define projections and
scalar product in a space defined by the image of $\B(\hatS_t)$. This
makes the proof more intricate, it is out of the scope of
this paper.

\bibliography{ref}

\begin{thebibliography}{92}
\providecommand{\natexlab}[1]{#1}
\providecommand{\url}[1]{\texttt{#1}}
\expandafter\ifx\csname urlstyle\endcsname\relax
  \providecommand{\doi}[1]{doi: #1}\else
  \providecommand{\doi}{doi: \begingroup \urlstyle{rm}\Url}\fi

\bibitem[Alistarh et~al.(2017)Alistarh, Grubic, Li, Tomioka, and Vojnovic]{alistarh_qsgd_2017}
D.~Alistarh, D.~Grubic, J.~Li, R.~Tomioka, and M.~Vojnovic.
\newblock {QSGD}: {Communication}-{Efficient} {SGD} via {Gradient} {Quantization} and {Encoding}.
\newblock \emph{Advances in Neural Information Processing Systems}, 30:\penalty0 1709--1720, 2017.

\bibitem[Amos et~al.(2017)Amos, Xu, and Kolter]{amos17bICNN}
B.~Amos, L.~Xu, and J.~Kolter.
\newblock {Input Convex Neural Networks}.
\newblock In D.~Precup and Y.~W. Teh, editors, \emph{Proceedings of the 34th International Conference on Machine Learning}, volume~70, pages 146--155. PMLR, 06--11 Aug 2017.

\bibitem[Andrieu et~al.(2005)Andrieu, Moulines, and Priouret]{andrieu:moulines:priouret:2005}
C.~Andrieu, E.~Moulines, and P.~Priouret.
\newblock {Stability of Stochastic Approximation under Verifiable Conditions}.
\newblock \emph{SIAM Journal on Control and Optimization}, 44\penalty0 (1):\penalty0 283--312, 2005.

\bibitem[Atchad{{\'e}} et~al.(2017)Atchad{{\'e}}, Fort, and Moulines]{atchade:fort:moulines:2017}
Y.~Atchad{{\'e}}, G.~Fort, and E.~Moulines.
\newblock {On Perturbed Proximal Gradient Algorithms}.
\newblock \emph{Journal of Machine Learning Research}, 18\penalty0 (10):\penalty0 1--33, 2017.

\bibitem[Bauschke and Combettes(2011)]{bauschke:combettes:2011}
H.~H. Bauschke and P.~L. Combettes.
\newblock \emph{Convex Analysis and Monotone Operator Theory in Hilbert Spaces}.
\newblock Springer Publishing Company, Incorporated, 2011.

\bibitem[Beck and Teboulle(2009{\natexlab{a}})]{beck2009fast}
A.~Beck and M.~Teboulle.
\newblock {A Fast Iterative Shrinkage-Thresholding Algorithm for Linear Inverse Problems}.
\newblock \emph{SIAM journal on imaging sciences}, 2\penalty0 (1):\penalty0 183--202, 2009{\natexlab{a}}.

\bibitem[Beck and Teboulle(2009{\natexlab{b}})]{beck:teboulle:2010}
A.~Beck and M.~Teboulle.
\newblock \emph{Gradient-based algorithms with applications to signal-recovery problems}, page 42–88.
\newblock Cambridge University Press, 2009{\natexlab{b}}.

\bibitem[Benveniste et~al.(1990)Benveniste, M{\'{e}}tivier, and Priouret]{benveniste:etal:1990}
A.~Benveniste, M.~M{\'{e}}tivier, and P.~Priouret.
\newblock \emph{{Adaptive Algorithms and Stochastic Approximations}}.
\newblock Springer Verlag, 1990.

\bibitem[Borkar(2008)]{borkar:2008}
V.~S. Borkar.
\newblock \emph{{Stochastic Approximation. A Dynamical Systems Viewpoint}}.
\newblock Cambridge University Press, Cambridge; Hindustan Book Agency, New Delhi, 2008.

\bibitem[Bradley et~al.(2000)Bradley, Bennett, and Demiriz]{bradley2000constrained}
P.~Bradley, K.~Bennett, and A.~Demiriz.
\newblock {Constrained K-Means Clustering}.
\newblock \emph{Microsoft Research, Redmond}, 20\penalty0 (0):\penalty0 0, 2000.

\bibitem[Brown(1986)]{brown:1986}
L.~D. Brown.
\newblock \emph{{Fundamentals of Statistical Exponential Families with Applications in Statistical Decision Theory}}.
\newblock Lecture notes-monograph series Fundamentals of statistical exponential families. Institute of Mathematical Statistics, 1986.

\bibitem[Canh et~al.(2020)Canh, Nguyen, and Josh]{t2020personalize}
T.~Canh, T.~Nguyen, and N.~Josh.
\newblock {Personalized Federated Learning with Moreau Envelopes}.
\newblock In H.~Larochelle, M.~Ranzato, R.~Hadsell, M.~Balcan, and H.~Lin, editors, \emph{Advances in Neural Information Processing Systems}, volume~33, pages 21394--21405. Curran Associates, Inc., 2020.

\bibitem[Capp{\'e} and Moulines(2009)]{cappe:moulines:2009}
O.~Capp{\'e} and {\'E}.~Moulines.
\newblock {On-Line Expectation–Maximization Algorithm for Latent Data Models}.
\newblock \emph{Journal of the Royal Statistical Society: Series B (Statistical Methodology)}, 71\penalty0 (3):\penalty0 593--613, 2009.

\bibitem[Celeux and Diebolt(1992)]{celeux:diebolt:1992}
G.~Celeux and J.~Diebolt.
\newblock {A stochastic approximation type EM algorithm for the mixture problem}.
\newblock \emph{Stochastics and Stochastic Reports}, 41\penalty0 (1-2):\penalty0 119--134, 1992.

\bibitem[Chen et~al.(2018)Chen, Zhu, Teh, and Zhang]{chen:etal:2018}
J.~Chen, J.~Zhu, Y.~Teh, and T.~Zhang.
\newblock {Stochastic Expectation Maximization with Variance Reduction}.
\newblock In S.~Bengio, H.~Wallach, H.~Larochelle, K.~Grauman, N.~Cesa-Bianchi, and R.~Garnett, editors, \emph{Advances in Neural Information Processing Systems 31}, pages 7967--7977, 2018.

\bibitem[Collins et~al.(2002)Collins, Schapire, and Singer]{collins2002logistic}
M.~Collins, R.~E. Schapire, and Y.~Singer.
\newblock {Logistic Regression, AdaBoost and Bregman Distances}.
\newblock \emph{Machine Learning}, 48\penalty0 (1):\penalty0 253--285, 2002.

\bibitem[Combettes and Pesquet(2011)]{combettes:pesquet:2011}
P.~L. Combettes and J.-C. Pesquet.
\newblock \emph{Proximal Splitting Methods in Signal Processing}, pages 185--212.
\newblock Springer New York, New York, NY, 2011.

\bibitem[Condat and Richt{\'a}rik(2023)]{condat2022randprox}
L.~Condat and P.~Richt{\'a}rik.
\newblock {RandProx: Primal-Dual Optimization Algorithms with Randomized Proximal Updates}.
\newblock In \emph{The Eleventh International Conference on Learning Representations}. OpenReview.net, 2023.

\bibitem[Dai et~al.(2019)Dai, Yan, Zhou, Yang, Ng, Cheng, and Fan]{dai2019hyper}
X.~Dai, X.~Yan, K.~Zhou, H.~Yang, K.~K. Ng, J.~Cheng, and Y.~Fan.
\newblock {Hyper-Sphere Quantization: Communication-Efficient SGD for Federated Learning}.
\newblock Technical report, arXiv:1911.04655, 2019.

\bibitem[Della~Pietra et~al.(2001)Della~Pietra, Della~Pietra, and Lafferty]{della2001duality}
S.~Della~Pietra, V.~Della~Pietra, and J.~Lafferty.
\newblock {Duality and Auxiliary Functions for Bregman Distances}.
\newblock Technical report, Carnegie-Mellon University, 2001.

\bibitem[Delyon et~al.(1999)Delyon, Lavielle, and Moulines]{Delyon:etal:1999}
B.~Delyon, M.~Lavielle, and E.~Moulines.
\newblock {Convergence of a Stochastic Approximation Version of the {EM} Algorithm}.
\newblock \emph{Ann. Statist.}, 27\penalty0 (1):\penalty0 94--128, 1999.

\bibitem[Dempster et~al.(1977)Dempster, Laird, and Rubin]{dempster:etal:1977}
A.~P. Dempster, N.~M. Laird, and D.~B. Rubin.
\newblock {Maximum Likelihood from Incomplete Data via the {EM} Algorithm}.
\newblock \emph{Journal of the Royal Statistical Society: Series B}, 39:\penalty0 1--38, 1977.

\bibitem[Dieuleveut et~al.(2021)Dieuleveut, Fort, Moulines, and Robin]{dieuleveut:etal:FedEM:2021}
A.~Dieuleveut, G.~Fort, E.~Moulines, and G.~Robin.
\newblock {Federated-EM with heterogeneity mitigation and variance reduction}.
\newblock In M.~Ranzato, A.~Beygelzimer, Y.~Dauphin, P.~Liang, and J.~W. Vaughan, editors, \emph{Advances in Neural Information Processing Systems}, volume~34, pages 29553--29566. Curran Associates, Inc., 2021.

\bibitem[Dieuleveut et~al.(2023)Dieuleveut, Fort, Moulines, and Wai]{dieuleveut:etal:2023:TSP}
A.~Dieuleveut, G.~Fort, E.~Moulines, and H.-T. Wai.
\newblock {Stochastic Approximation Beyond Gradient for Signal Processing and Machine Learning}.
\newblock \emph{IEEE Transactions on Signal Processing}, 71:\penalty0 3117--3148, 2023.

\bibitem[Dinh et~al.(2021)Dinh, Vu, Tran, Dao, and Zhang]{dinh2021fedu}
C.~Dinh, T.~T. Vu, N.~H. Tran, M.~N. Dao, and H.~Zhang.
\newblock {FedU: A Unified Framework for Federated Multi-Task Learning with Laplacian Regularization}.
\newblock Technical report, arXiv:2102.07148, 2021.

\bibitem[Dudley(1969)]{dudley1969speed}
R.~M. Dudley.
\newblock The speed of mean glivenko-cantelli convergence.
\newblock \emph{The Annals of Mathematical Statistics}, 40\penalty0 (1):\penalty0 40--50, 1969.

\bibitem[Efron et~al.(2004)Efron, Hastie, Johnstone, and Tibshirani]{Efron_2004}
B.~Efron, T.~Hastie, I.~Johnstone, and R.~Tibshirani.
\newblock {Least Angle Regression}.
\newblock \emph{The Annals of Statistics}, 32\penalty0 (2), April 2004.

\bibitem[Flanagan et~al.(2020)Flanagan, Oyomno, Grigorievskiy, Tan, Khan, and Ammad-Ud-Din]{flanagan2020federated}
A.~Flanagan, W.~Oyomno, A.~Grigorievskiy, K.~E. Tan, S.~Khan, and M.~Ammad-Ud-Din.
\newblock {Federated Multi-view Matrix Factorization for Personalized Recommendations}.
\newblock In \emph{Joint European Conference on Machine Learning and Knowledge Discovery in Databases}, pages 324--347. Springer, 2020.

\bibitem[Fort and Moulines(2003)]{Fort:moulines:2003}
G.~Fort and E.~Moulines.
\newblock {Convergence of the Monte Carlo Expectation Maximization for Curved Exponential Families}.
\newblock \emph{Ann. Statist.}, 31\penalty0 (4):\penalty0 1220--1259, 2003.

\bibitem[Fort and Moulines(2023)]{fort:moulines:2023}
G.~Fort and E.~Moulines.
\newblock Stochastic variable metric proximal gradient with variance reduction for non-convex composite optimization.
\newblock \emph{Stat Comput}, 33\penalty0 (65), 2023.

\bibitem[Fort et~al.(2021{\natexlab{a}})Fort, Gach, and Moulines]{gach:fort:moulines:2020}
G.~Fort, P.~Gach, and E.~Moulines.
\newblock {Fast incremental expectation maximization for finite-sum optimization: nonasymptotic convergence}.
\newblock \emph{Statistics and Computing}, 31, 2021{\natexlab{a}}.

\bibitem[Fort et~al.(2021{\natexlab{b}})Fort, Moulines, and Wai]{fort:moulines:wai:2021}
G.~Fort, E.~Moulines, and H.-T. Wai.
\newblock {Geom-Spider-EM: Faster Variance Reduced Stochastic Expectation Maximization for Nonconvex Finite-Sum Optimization}.
\newblock In \emph{ICASSP 2021 - 2021 IEEE International Conference on Acoustics, Speech and Signal Processing (ICASSP)}, pages 3135--3139, 2021{\natexlab{b}}.

\bibitem[Gao et~al.(2020)Gao, Tan, Ju, Zheng, and Yang]{gao2020privacy}
D.~Gao, B.~Tan, C.~Ju, V.~W. Zheng, and Q.~Yang.
\newblock {Privacy Threats Against Federated Matrix Factorization}.
\newblock Technical report, arXiv:2007.01587, 2020.

\bibitem[Ghadimi and Lan(2013)]{ghadimi:lan:2013}
S.~Ghadimi and G.~Lan.
\newblock {Stochastic First- and Zeroth-Order Methods for Nonconvex Stochastic Programming}.
\newblock \emph{SIAM J. Optimiz.}, 23\penalty0 (4):\penalty0 2341--2368, 2013.

\bibitem[Ghosh et~al.(2020)Ghosh, Chung, Yin, and Ramchandran]{ghosh2020efficient}
A.~Ghosh, J.~Chung, D.~Yin, and K.~Ramchandran.
\newblock {An Efficient Framework for Clustered Federated Learning}.
\newblock \emph{Advances in Neural Information Processing Systems}, 33:\penalty0 19586--19597, 2020.

\bibitem[Gorbunov et~al.(2021)Gorbunov, Burlachenko, and Richt{\'a}rik]{gorbunov2021marina}
E.~Gorbunov, Z.~Burlachenko, K. P.and~Li, and P.~Richt{\'a}rik.
\newblock {MARINA: Faster Non-Convex Distributed Learning with Compression}.
\newblock In \emph{International Conference on Machine Learning}, pages 3788--3798. PMLR, 2021.

\bibitem[Hanzely et~al.(2020)Hanzely, Hanzely, Horv{\'a}th, and Richt{\'a}rik]{hanzely2020lower}
F.~Hanzely, S.~Hanzely, S.~Horv{\'a}th, and P.~Richt{\'a}rik.
\newblock {Lower Bounds and Optimal Algorithms for Personalized Federated Learning}.
\newblock \emph{Advances in Neural Information Processing Systems}, 33:\penalty0 2304--2315, 2020.

\bibitem[Harper and Konstan(2015)]{harper2015movielens}
F.~Harper and J.~A. Konstan.
\newblock {The Movielens Datasets: History and context}.
\newblock \emph{ACM Transactions on interactive intelligent systems}, 5\penalty0 (4):\penalty0 1--19, 2015.

\bibitem[Heged{\H{u}}s et~al.(2019)Heged{\H{u}}s, Danner, and Jelasity]{hegedHus2019decentralized}
I.~Heged{\H{u}}s, G.~Danner, and M.~Jelasity.
\newblock {Decentralized Recommendation Based on Matrix Factorization: A Comparison of Gossip and Federated Learning}.
\newblock In \emph{Joint European Conference on Machine Learning and Knowledge Discovery in Databases}, pages 317--332. Springer, 2019.

\bibitem[Islamov et~al.(2021)Islamov, Qian, and Richt{\'a}rik]{islamov2021distributed}
R.~Islamov, X.~Qian, and P.~Richt{\'a}rik.
\newblock {Distributed Second Order Methods with Fast Rates and Compressed Communication}.
\newblock In \emph{International Conference on Machine Learning}, pages 4617--4628. PMLR, 2021.

\bibitem[Jain et~al.(1999)Jain, Murty, and Flynn]{jain1999data}
A.~K. Jain, M.~N. Murty, and P.~J. Flynn.
\newblock {Data clustering: a review}.
\newblock \emph{ACM computing surveys (CSUR)}, 31\penalty0 (3):\penalty0 264--323, 1999.

\bibitem[Kairouz et~al.(2021)Kairouz, McMahan, et~al.]{kairouz_advances_2019}
P.~Kairouz, H.~McMahan, et~al.
\newblock \emph{Advances and Open Problems in Federated Learning}.
\newblock New Foundations and Trends, 2021.

\bibitem[Karimireddy et~al.(2019)Karimireddy, Rebjock, Stich, and Jaggi]{karimireddy_error_2019}
S.~P. Karimireddy, Q.~Rebjock, S.~Stich, and M.~Jaggi.
\newblock Error {Feedback} {Fixes} {SignSGD} and other {Gradient} {Compression} {Schemes}.
\newblock In \emph{International {Conference} on {Machine} {Learning}}, pages 3252--3261. PMLR, May 2019.
\newblock ISSN: 2640-3498.

\bibitem[Karimireddy et~al.(2020)Karimireddy, Kale, Mohri, Reddi, Stich, and Suresh]{karimireddy_scaffold_2019}
S.~P. Karimireddy, S.~Kale, M.~Mohri, S.~Reddi, S.~Stich, and A.~T. Suresh.
\newblock {SCAFFOLD: Stochastic Controlled Averaging for Federated Learning}.
\newblock In \emph{International Conference on Machine Learning}, pages 5132--5143. PMLR, 2020.

\bibitem[Khaled and Richtárik(2020)]{khaled_gradient_2020}
A.~Khaled and P.~Richtárik.
\newblock Gradient {Descent} with {Compressed} {Iterates}.
\newblock Technical report, arXiv:1909.04716, 2020.

\bibitem[Kingma and Ba(2017)]{kingma_adam_2017}
D.~P. Kingma and J.~Ba.
\newblock Adam: {A} {Method} for {Stochastic} {Optimization}.
\newblock \emph{arXiv:1412.6980 [cs]}, January 2017.
\newblock arXiv: 1412.6980.

\bibitem[Koloskova et~al.(2019)Koloskova, Stich, and Jaggi]{koloskova2019decentralized}
A.~Koloskova, S.~Stich, and M.~Jaggi.
\newblock {Decentralized Stochastic Optimization and Gossip Algorithms with Compressed Communication}.
\newblock In \emph{International conference on machine learning}, pages 3478--3487. PMLR, 2019.

\bibitem[Konečný et~al.(2016)Konečný, McMahan, Yu, Richtarik, Suresh, and Bacon]{konecny_federated_2016-1}
J.~Konečný, H.~B. McMahan, F.~X. Yu, P.~Richtarik, A.~T. Suresh, and D.~Bacon.
\newblock Federated {Learning}: {Strategies} for {Improving} {Communication} {Efficiency}.
\newblock In \emph{{NIPS} {Workshop} on {Private} {Multi}-{Party} {Machine} {Learning}}, 2016.

\bibitem[Korotin et~al.(2021{\natexlab{a}})Korotin, Egiazarian, Asadulaev, Safin, and Burnaev]{korotin2019wasserstein}
A.~Korotin, V.~Egiazarian, A.~Asadulaev, A.~Safin, and E.~Burnaev.
\newblock {Wasserstein-2 Generative Networks}.
\newblock In \emph{International Conference on Learning Representations}, 2021{\natexlab{a}}.

\bibitem[Korotin et~al.(2021{\natexlab{b}})Korotin, Li, Genevay, Solomon, Filippov, and Burnaev]{korotin2021do}
A.~Korotin, L.~Li, A.~Genevay, J.~Solomon, A.~Filippov, and E.~Burnaev.
\newblock {Do Neural Optimal Transport Solvers Work? A Continuous Wasserstein-2 Benchmark}.
\newblock In A.~Beygelzimer, Y.~Dauphin, P.~Liang, and J.~W. Vaughan, editors, \emph{Advances in Neural Information Processing Systems}, 2021{\natexlab{b}}.

\bibitem[Kunstner et~al.(2021)Kunstner, Kumar, and Schmidt]{kunstner2021homeomorphic}
F.~Kunstner, R.~Kumar, and M.~Schmidt.
\newblock {Homeomorphic-Invariance of EM: Non-Asymptotic Convergence in KL Divergence for Exponential Families via Mirror Descent}.
\newblock In \emph{International Conference on Artificial Intelligence and Statistics}, pages 3295--3303. PMLR, 2021.

\bibitem[Lange(2013)]{lange:2013}
K.~Lange.
\newblock \emph{Optimization}.
\newblock Springer Texts in Statistics. Springer New York, 2013.

\bibitem[Lange et~al.(2000)Lange, Hunter, and Yang]{lange2000optimization}
K.~Lange, D.~Hunter, and I.~Yang.
\newblock {Optimization Transfer Using Surrogate Objective Functions}.
\newblock \emph{Journal of computational and graphical statistics}, 9\penalty0 (1):\penalty0 1--20, 2000.

\bibitem[Leconte et~al.(2021)Leconte, Dieuleveut, Oyallon, Moulines, and Pages]{leconte2021texttt}
L.~Leconte, A.~Dieuleveut, E.~Oyallon, E.~Moulines, and G.~Pages.
\newblock {DoStoVoQ: Doubly Stochastic Voronoi Vector Quantization SGD for Federated Learning}.
\newblock Technical report, openreview.net, 2021.

\bibitem[Lee and Seung(1999)]{lee1999learning}
D.~Lee and H.~S. Seung.
\newblock Learning the parts of objects by non-negative matrix factorization.
\newblock \emph{Nature}, 401\penalty0 (6755):\penalty0 788--791, 1999.

\bibitem[Li et~al.(2020)Li, Huang, Yang, Wang, and Zhang]{li_convergence_2019}
X.~Li, K.~Huang, W.~Yang, S.~Wang, and Z.~Zhang.
\newblock On the {Convergence} of {FedAvg} on {Non}-{IID} {Data}.
\newblock In \emph{International Conference on Learning Representations}, 2020.

\bibitem[Mairal(2013)]{mairal:2013}
J.~Mairal.
\newblock {Stochastic Majorization-Minimization Algorithms for Large-Scale Optimization}.
\newblock In C.~Burges, L.~Bottou, M.~Welling, Z.~Ghahramani, and K.~Weinberger, editors, \emph{Advances in Neural Information Processing Systems}, volume~26. Curran Associates, Inc., 2013.

\bibitem[Mairal(2015)]{mairal:2015}
J.~Mairal.
\newblock {Incremental Majorization-Minimization Optimization with Application to Large-Scale Machine Learning}.
\newblock \emph{SIAM Journal on Optimization}, 25\penalty0 (2):\penalty0 829--855, 2015.

\bibitem[Mairal et~al.(2010)Mairal, Bach, Ponce, and Sapiro]{mairal:etal:2010}
J.~Mairal, F.~Bach, J.~Ponce, and G.~Sapiro.
\newblock Online learning for matrix factorization and sparse coding.
\newblock \emph{J. Mach. Learn. Res.}, 11:\penalty0 19–60, 2010.

\bibitem[Makkuva et~al.(2020)Makkuva, Taghvaei, Oh, and Lee]{makkuva20a_icnn_ot}
A.~Makkuva, A.~Taghvaei, S.~Oh, and J.~Lee.
\newblock Optimal transport mapping via input convex neural networks.
\newblock In H.~D. III and A.~Singh, editors, \emph{Proceedings of the 37th International Conference on Machine Learning}, volume 119 of \emph{Proceedings of Machine Learning Research}, pages 6672--6681. PMLR, 2020.

\bibitem[Mansour et~al.(2020)Mansour, Mohri, Ro, and Suresh]{mansour2020three}
Y.~Mansour, M.~Mohri, J.~Ro, and A.~Suresh.
\newblock {Three Approaches for Personalization with aApplications to Federated Learning}.
\newblock Technical report, arXiv:2002.10619, 2020.

\bibitem[Marfoq et~al.(2021)Marfoq, Neglia, Bellet, Kameni, and Vidal]{marfoq2021federated}
O.~Marfoq, G.~Neglia, A.~Bellet, L.~Kameni, and R.~Vidal.
\newblock {Federated Multi-Task Learning under a Mixture of Distributions}.
\newblock \emph{Advances in Neural Information Processing Systems}, 34, 2021.

\bibitem[McLachlan and Krishnan(2008)]{maclachlan:2008}
G.~McLachlan and T.~Krishnan.
\newblock \emph{{The EM Algorithm and Extensions}}.
\newblock Wiley series in probability and statistics. Wiley, 2008.

\bibitem[McMahan et~al.(2017{\natexlab{a}})McMahan, Moore, Ramage, Hampson, and Aguera~y Arcas]{mcmahan2016communication}
B.~McMahan, E.~Moore, D.~Ramage, S.~Hampson, and B.~Aguera~y Arcas.
\newblock {Communication-Efficient Learning of Deep Networks from Decentralized Data}.
\newblock In \emph{Artificial intelligence and statistics}, pages 1273--1282. PMLR, 2017{\natexlab{a}}.

\bibitem[McMahan et~al.(2017{\natexlab{b}})McMahan, Moore, Ramage, Hampson, and Arcas]{mcmahan_communication-efficient_2017}
B.~McMahan, E.~Moore, D.~Ramage, S.~Hampson, and B.~A.~y. Arcas.
\newblock Communication-{Efficient} {Learning} of {Deep} {Networks} from {Decentralized} {Data}.
\newblock In \emph{Artificial {Intelligence} and {Statistics}}, pages 1273--1282. PMLR, April 2017{\natexlab{b}}.
\newblock ISSN: 2640-3498.

\bibitem[Mishchenko et~al.(2019)Mishchenko, Gorbunov, Takáč, and Richtárik]{mishchenko_distributed_2019}
K.~Mishchenko, E.~Gorbunov, M.~Takáč, and P.~Richtárik.
\newblock Distributed {Learning} with {Compressed} {Gradient} {Differences}.
\newblock Technical report, arXiv:1901.09269, 2019.

\bibitem[Mishchenko et~al.(2022)Mishchenko, Malinovsky, Stich, and Richtarik]{MishchenkoProxSkip22}
K.~Mishchenko, G.~Malinovsky, S.~Stich, and P.~Richtarik.
\newblock {P}rox{S}kip: Yes! {L}ocal gradient steps provably lead to communication acceleration! {F}inally!
\newblock In K.~Chaudhuri, S.~Jegelka, L.~Song, C.~Szepesvari, G.~Niu, and S.~Sabato, editors, \emph{Proceedings of the 39th International Conference on Machine Learning}, volume 162 of \emph{Proceedings of Machine Learning Research}, pages 15750--15769, 2022.

\bibitem[Moreau(1962)]{Moreau:1962}
J.-J. Moreau.
\newblock Fonctions convexes duales et points proximaux dans un espace hilbertien.
\newblock \emph{C. R. Acad. Sci., Paris}, 255:\penalty0 2897--2899, 1962.

\bibitem[Neal and Hinton(1998)]{Neal:hinton:1998}
R.~M. Neal and G.~E. Hinton.
\newblock {A View of the EM Algorithm that Justifies Incremental, Sparse, and other Variants}.
\newblock In M.~I. Jordan, editor, \emph{Learning in Graphical Models}, pages 355--368. Springer Netherlands, Dordrecht, 1998.

\bibitem[Nguyen et~al.(2023)Nguyen, Forbes, Fort, and Capp{\'e}]{nguyen:etal:2022}
H.~Nguyen, F.~Forbes, G.~Fort, and O.~Capp{\'e}.
\newblock {An Online Minorization-Maximization Algorithm}.
\newblock In P.~Brito, J.~G. Dias, B.~Lausen, A.~Montanari, and R.~Nugent, editors, \emph{Classification and Data Science in the Digital Age}, pages 263--271. Springer International Publishing, 2023.

\bibitem[Paatero and Tapper(1994)]{paatero1994positive}
P.~Paatero and U.~Tapper.
\newblock Positive matrix factorization: A non-negative factor model with optimal utilization of error estimates of data values.
\newblock \emph{Environmetrics}, 5\penalty0 (2):\penalty0 111--126, 1994.

\bibitem[Parikh and Boyd(2014)]{parikh:boyd:2013}
N.~Parikh and S.~Boyd.
\newblock {Proximal Algorithms}.
\newblock \emph{Found. Trends Optim.}, 1\penalty0 (3):\penalty0 127–239, 2014.

\bibitem[Philippenko and Dieuleveut(2021)]{philippenko_preserved_2021}
C.~Philippenko and A.~Dieuleveut.
\newblock Preserved central model for faster bidirectional compression in distributed settings.
\newblock In A.~Beygelzimer, Y.~Dauphin, P.~Liang, and J.~W. Vaughan, editors, \emph{Advances in Neural Information Processing Systems}, 2021.

\bibitem[Polyak(2021)]{polyak:2021}
R.~Polyak.
\newblock \emph{{Introduction to Continuous Optimization}}.
\newblock Springer Optimization and Its Applications. Springer, April 2021.

\bibitem[Reddi et~al.(2021)Reddi, Charles, Zaheer, Garrett, Rush, Kone{\v{c}}n{\'y}, S., and McMahan]{reddi2021adaptive}
S.~Reddi, Z.~Charles, M.~Zaheer, Z.~Garrett, K.~Rush, J.~Kone{\v{c}}n{\'y}, K.~S., and H.~McMahan.
\newblock {Adaptive Federated Optimization}.
\newblock In \emph{International Conference on Learning Representations}, 2021.

\bibitem[Richt{\'a}rik et~al.(2021)Richt{\'a}rik, Sokolov, and Fatkhullin]{richtarik2021ef21}
P.~Richt{\'a}rik, I.~Sokolov, and I.~Fatkhullin.
\newblock {{EF}21: A New, Simpler, Theoretically Better, and Practically Faster Error Feedback}.
\newblock In A.~Beygelzimer, Y.~Dauphin, P.~Liang, and J.~W. Vaughan, editors, \emph{Advances in Neural Information Processing Systems}, 2021.

\bibitem[Robbins and Monro(1951)]{robbins_stochastic_1951}
H.~Robbins and S.~Monro.
\newblock A {Stochastic} {Approximation} {Method}.
\newblock \emph{Annals of Mathematical Statistics}, 22\penalty0 (3):\penalty0 400--407, 1951.

\bibitem[Sato and Ishii(2000)]{sato:ishii:2000}
M.-A. Sato and S.~Ishii.
\newblock {On-line EM Algorithm for the Normalized Gaussian Network}.
\newblock \emph{Neural Computation}, 12\penalty0 (2):\penalty0 407--432, 2000.

\bibitem[Sattler et~al.(2019)Sattler, Wiedemann, Müller, and Samek]{sattler_robust_2019}
F.~Sattler, S.~Wiedemann, K.-R. Müller, and W.~Samek.
\newblock Robust and {Communication}-{Efficient} {Federated} {Learning} {From} {Non}-i.i.d. {Data}.
\newblock \emph{IEEE Transactions on Neural Networks and Learning Systems}, pages 1--14, 2019.

\bibitem[Sattler et~al.(2020)Sattler, M{\"u}ller, and Samek]{sattler2020clustered}
F.~Sattler, K.-R. M{\"u}ller, and W.~Samek.
\newblock {Clustered Federated Learning: Model-Agnostic Distributed Multitask Optimization under Privacy Constraints}.
\newblock \emph{IEEE Transactions on Neural Networks and Learning Systems}, 32\penalty0 (8):\penalty0 3710--3722, 2020.

\bibitem[Singhal et~al.(2021)Singhal, Sidahmed, Garrett, Wu, Rush, and Prakash]{singhal2021federated}
K.~Singhal, H.~Sidahmed, Z.~Garrett, S.~Wu, J.~Rush, and S.~Prakash.
\newblock {Federated Reconstruction: Partially Local Federated Learning}.
\newblock In M.~Ranzato, A.~Beygelzimer, Y.~Dauphin, P.~Liang, and J.~W. Vaughan, editors, \emph{Advances in Neural Information Processing Systems}, volume~34, pages 11220--11232. Curran Associates, Inc., 2021.

\bibitem[Stich(2019)]{stich_local_2019}
S.~Stich.
\newblock {Local {SGD} Converges Fast and Communicates Little}.
\newblock In \emph{International Conference on Learning Representations}, 2019.

\bibitem[Tian et~al.(2024)Tian, Weng, and Feng]{tiantowards2024icml}
Y.~Tian, H.~Weng, and Y.~Feng.
\newblock {Towards the Theory of Unsupervised Federated Learning: Non-asymptotic Analysis of Federated EM Algorithms}.
\newblock In \emph{Forty-first International Conference on Machine Learning}, 2024.

\bibitem[Villani(2021)]{villani2021topics}
C.~Villani.
\newblock \emph{{Topics in Optimal Transportation}}, volume~58.
\newblock American Mathematical Soc., 2021.

\bibitem[Wang et~al.(2021)Wang, Charles, et~al.]{wang2021field}
J.~Wang, Z.~Charles, et~al.
\newblock {A Field Guide to Federated Optimization}.
\newblock Technical report, arXiv:2107.06917, 2021.

\bibitem[Wang and Chang(2022)]{wang:chang:2022}
S.~Wang and T.-H. Chang.
\newblock {Federated Matrix Factorization: Algorithm Design and Application to Data Clustering}.
\newblock \emph{IEEE Transactions on Signal Processing}, 70:\penalty0 1625--1640, 2022.

\bibitem[Wangni et~al.(2018)Wangni, Wang, Liu, and Zhang]{wangni2018gradient}
J.~Wangni, J.~Wang, J.~Liu, and T.~Zhang.
\newblock {Gradient Sparsification for Communication-Efficient Distributed Optimization}.
\newblock In S.~Bengio, H.~Wallach, H.~Larochelle, K.~Grauman, N.~Cesa-Bianchi, and R.~Garnett, editors, \emph{Advances in Neural Information Processing Systems}, volume~31. Curran Associates, Inc., 2018.

\bibitem[Wei and Tanner(1990)]{Wei:tanner:1990}
G.~Wei and M.~Tanner.
\newblock {A Monte Carlo Implementation of the EM Algorithm and the Poor Man's Data Augmentation Algorithms}.
\newblock \emph{J. Am. Stat. Assoc.}, 85\penalty0 (411):\penalty0 699--704, 1990.

\bibitem[Woodworth et~al.(2020)Woodworth, Patel, Stich, Dai, Bullins, Mcmahan, Shamir, and Srebro]{woodworth_is_2020}
B.~Woodworth, K.~Patel, S.~Stich, Z.~Dai, B.~Bullins, B.~Mcmahan, O.~Shamir, and N.~Srebro.
\newblock Is {Local} {SGD} {Better} than {Minibatch} {SGD}?
\newblock In \emph{International {Conference} on {Machine} {Learning}}, pages 10334--10343. PMLR, 2020.

\bibitem[Wright et~al.(2009)Wright, Nowak, and Figueiredo]{wright2009sparse}
S.~J. Wright, R.~Nowak, and M.~Figueiredo.
\newblock {Sparse Reconstruction by Separable Approximation}.
\newblock \emph{IEEE Transactions on signal processing}, 57\penalty0 (7):\penalty0 2479--2493, 2009.

\bibitem[Yuan et~al.(2021)Yuan, Zaheer, and Reddi]{yuan2021federated}
H.~Yuan, M.~Zaheer, and S.~Reddi.
\newblock {Federated Composite Optimization}.
\newblock In \emph{International Conference on Machine Learning}, pages 12253--12266. PMLR, 2021.

\bibitem[Zhixu et~al.(2024)Zhixu, Rajita, and Sanjeev]{tao2024convergence}
T.~Zhixu, C.~Rajita, and K.~Sanjeev.
\newblock {On the Convergence of a Federated Expectation-Maximization Algorithm}.
\newblock Technical report, arXiv:2408.05819, 2024.

\end{thebibliography}

\end{document}